
\documentclass{article}

\usepackage{microtype}
\usepackage{graphicx}
\usepackage{subfigure}
\usepackage{booktabs} 

\usepackage{hyperref}



\usepackage[accepted]{icml2023}

\usepackage{amsmath}
\usepackage{amssymb}
\usepackage{mathtools}
\usepackage{amsthm}

\usepackage[capitalize,noabbrev]{cleveref}

\theoremstyle{plain}
\newtheorem{theorem}{Theorem}[section]
\newtheorem{proposition}[theorem]{Proposition}
\newtheorem{lemma}[theorem]{Lemma}
\newtheorem{corollary}[theorem]{Corollary}
\theoremstyle{definition}

\newtheorem{assumption}[theorem]{Assumption}
\theoremstyle{remark}
\newtheorem{remark}[theorem]{Remark}

\usepackage[textsize=tiny]{todonotes}

\usepackage{mathrsfs}
\usepackage{enumerate}

\icmltitlerunning{On the Optimality of Misspecified Kernel Ridge Regression}

\begin{document}

\twocolumn[
\icmltitle{On the Optimality of Misspecified Kernel Ridge Regression}




\begin{icmlauthorlist}
\icmlauthor{Haobo Zhang}{yyy}
\icmlauthor{Yicheng Li}{yyy}
\icmlauthor{Weihao Lu}{yyy}
\icmlauthor{Qian Lin}{yyy,comp}
\end{icmlauthorlist}

\icmlaffiliation{yyy}{Center for Statistical Science, Department of Industrial Engineering, Tsinghua University}
\icmlaffiliation{comp}{Beijing Academy of Artificial Intelligence, Beijing, China}

\icmlcorrespondingauthor{Qian Lin}{qianlin@tsinghua.edu.cn}
\icmlkeywords{Kernel ridge regression, Misspecified, Reproducing kernel Hilbert space, Sobolev space, minimax optimality}

\vskip 0.3in
]



\printAffiliationsAndNotice{}  

\begin{abstract}
In the misspecified kernel ridge regression problem, researchers usually assume the underground true function $f_{\rho}^{*} \in [\mathcal{H}]^{s}$,  a less-smooth interpolation space of a reproducing kernel Hilbert space (RKHS) $\mathcal{H}$ for some $s\in (0,1)$. 
The existing minimax optimal results require   $\|f_{\rho}^{*}\|_{L^{\infty}}<\infty$ which implicitly requires  $s > \alpha_{0}$ where $\alpha_{0}\in (0,1)$ is the embedding index, a constant depending on $\mathcal{H}$. 
Whether the KRR is optimal for all $s\in (0,1)$ is an outstanding problem lasting for years. 
In this paper, we show that KRR is minimax optimal for any $s\in (0,1)$ when the $\mathcal{H}$ is a Sobolev RKHS. 
\end{abstract}

\section{Introduction}
Suppose that the samples $\{ (x_{i}, y_{i}) \}_{i=1}^{n}$ are i.i.d. sampled from an unknown distribution $\rho$ on $\mathcal{X} \times \mathcal{Y}$, where $\mathcal{X} \subseteq \mathbb{R}^{d}$ and $\mathcal{Y} \subseteq \mathbb{R}$. The regression problem aims
to find a function $\hat{f}$ such that the risk
\begin{displaymath}
  \mathcal{E}(\hat{f}) = \mathbb{E}_{(x,y) \sim \rho} \left[ \left( \hat{f}(x) - y \right)^{2} \right]
\end{displaymath}
is relatively small. It is well known that the conditional mean function given by $f_{\rho}^*(x) \coloneqq \mathbb{E}_{\rho}[~y \;|\; x~] = \int_{\mathcal{Y}} y \mathrm{d} \rho(y|x)$ minimizes the risk $\mathcal{E}(f) $. 
Therefore, 
we may focus on establishing the convergence rate (either in expectation or in probability) for the excess risk (generalization error)
\begin{equation}\label{def of gen}
    \mathbb{E}_{x \sim \mu} \left[ \left( \hat{f}(x) - f_{\rho}^{*}(x) \right)^{2} \right],
\end{equation}
where $ \mu $ is the marginal distribution of $\rho$ on $\mathcal{X}$. 

In the non-parametric regression settings, researchers often assume that $f_{\rho}^*(x)$ falls into a class of functions with certain structures and develop non-parametric methods to obtain the estimator $\hat{f}$. One of the most popular non-parametric regression methods, the kernel method, aims to estimate $f_{\rho}^*$ using candidate functions from a reproducing kernel Hilbert space (RKHS) $\mathcal{H}$, a separable Hilbert space associated to a kernel function $k$ defined on $\mathcal{X}$, e.g., \citet{Kohler2001NonparametricRE, Cucker2001OnTM, Caponnetto2007OptimalRF, Steinwart2008SupportVM}. This paper focuses on the kernel ridge regression (KRR), which constructs an estimator $\hat{f}_{\lambda}$ by solving the  penalized least square problem
\begin{equation}\label{krr estimator}
    \hat{f}_\lambda = \underset{f \in \mathcal{H}}{\arg \min } \left(\frac{1}{n} \sum_{i=1}^n\left(y_i-f\left(x_i\right)\right)^2+\lambda\|f\|_{\mathcal{H}}^2\right),
\end{equation}
where $\lambda > 0$ is referred to as the regularization parameter. 

Since the minimax optimality of KRR has been proved for $f_{\rho}^{*} \in [\mathcal{H}]^{s}, 1 \le s \le 2$ \citep{Caponnetto2007OptimalRF}, a large body of literature has studied the convergence rate of the generalization error of misspecified KRR ($ f_{\rho}^{*} \notin \mathcal{H}$) and whether the rate is optimal in the minimax sense. 
It turns out that the eigenvalue decay rate ($\beta > 1$), the source condition ($s>0$), and the embedding index ($\alpha_{0} < 1$) of the RKHS jointly determine the convergence behavior of the misspecified KRR (see Section \ref{section assumption} for definitions). 
If we only assume that $f_{\rho}^{*}$ belongs to an interpolation space $[\mathcal{H}]^{s}$ of the RKHS $\mathcal{H}$ for some $s > 0$, the well-known information-theoretic lower bound shows that the minimax lower bound is $n^{-\frac{s \beta}{s \beta + 1}} $. The state-of-the-art work \citet{fischer2020_SobolevNorm} has already shown that when $ \alpha_{0} < s \le 2$, the upper bound of the convergence rate of KRR is $ n^{-\frac{s \beta}{s \beta + 1}}$ and hence is optimal. 
However, when $ f_{\rho}^{*} \in [\mathcal{H}]^{s}$ for some $0 < s \le \alpha_{0}$, all the existing works need an additional boundedness assumption on $f_{\rho}^{*}$ to prove the same upper bound $ n^{-\frac{s \beta}{s \beta + 1}}$. 
The boundedness assumption will result in a smaller function space, i.e., $ [\mathcal{H}]^{s} \cap L^{\infty}(\mathcal{X,\mu}) \subsetneqq [\mathcal{H}]^{s}$ when $ s \le \alpha_{0}$. \citet{fischer2020_SobolevNorm} further reveals that the minimax rate of the excess risk associated to the smaller function space is larger than  $ n^{-\frac{\alpha \beta}{ \alpha \beta + 1}}$ for any $\alpha > \alpha_{0}$.
This lower bound of the minimax rate is smaller than the upper bound of the convergence rate and hence they can not prove the minimax optimality of KRR when $s \le \alpha_{0}$.


It has been an outstanding problem for years whether KRR is minimax optimal for all the $s\in (0,1)$ \cite{PillaudVivien2018StatisticalOO, fischer2020_SobolevNorm, Liu2022StatisticalOO}. 
This paper concludes that, for Sobolev RKHS, KRR is optimal for all $ 0 < s < 1$. Thus, we know that KRR is optimal for Sobolev RKHS and all $0<s\leq 2$. Together with a recent work on the saturation effect where KRR can not be optimal for $s > 2$ \cite{li2023saturation}, the optimality of KRR for Sobolev RKHS is well understood. 

\subsection{Related work}
Kernel ridge regression has been studied as a special kind of spectral regularization algorithm \cite{rosasco2005_SpectralMethods, Caponnetto2006OptimalRF,  bauer2007_RegularizationAlgorithms, gerfo2008_SpectralAlgorithms, mendelson2010_RegularizationKernel}. In large part of the literature, the `hardness' of the regression problem is determined by two parameters: 1. the source condition ($s$), which characterizes a function's relative `smoothness' with respect to the RKHS; 2. the eigenvalue decay rate ($\beta$) (or capacity, effective dimension equivalently), which characterizes the RKHS itself. These two parameters divide the convergence behavior of KRR or spectral regularization algorithm into different regimes and lead to different convergence rates \citep[etc.]{dicker2017_KernelRidge, blanchard2018_OptimalRates, lin2018_OptimalRates, lin2020_OptimalConvergence,li2023saturation}. \citet{Caponnetto2007OptimalRF} first proves the optimality when $1 \le s \le 2$ and \citet{lin2018_OptimalRates} extends the desired upper bound of the convergence rate to the regime $ s + \frac{1}{\beta} > 1$.

KRR in the misspecified case ($f_{\rho}^{*} \notin \mathcal{H} ~\text{or}~ s < 1$) has also been discussed by another important line of work which considers the embedding index ($\alpha_{0}$) of the RKHS and performs refined analysis \cite{steinwart2009_OptimalRates, dicker2017_KernelRidge, PillaudVivien2018StatisticalOO,fischer2020_SobolevNorm,  Celisse2020AnalyzingTD,Li2022OptimalRF}. The desired upper bound $n^{-\frac{s \beta}{s \beta + 1}}$ is extended to the regime $s + \frac{1}{\beta} > \alpha_{0}$, and the minimax optimality is extended to the regime $s > \alpha_{0}$. 
It is worth pointing out that when $f_{\rho}^{*}$ falls into a less-smooth interpolation space which does not imply the boundedness of functions therein, all existing works (either considering embedding index or not) require an additional boundedness assumption, i.e., $ \| f_{\rho}^{*} \|_{L^{\infty}(\mathcal{X},\mu)} \le B_{\infty} < \infty$ \citep[etc]{lin2020_OptimalConvergence, fischer2020_SobolevNorm, Talwai2022OptimalLR,Li2022OptimalRF}. As discussed in the introduction, this will lead to the suboptimality in the $s \le \alpha_{0}$ regime. 

This paper follows the line of work that considers the embedding index, refines the proof by handling the additional boundedness assumption, and solves the optimality problem for Sobolev RKHS and all $ s \in (0,1)$. In addition, our technical improvement also sheds light on the optimality for general RKHS. Specifically, we replace the boundedness assumption with a far weaker $L^{q}$-integrability assumption, which turns out to be reasonable for many RKHS. Note that our results focus on the most frequently used $L^{2}$ convergence rate for KRR, and it can be easily extended to $[\mathcal{H}]^{\gamma}, \gamma \ge 0$  convergence rate (e.g., \citealt{fischer2020_SobolevNorm}) and general spectral regularization algorithms (e.g., \citealt{lin2018_OptimalRates}). 

\section{Preliminaries}
\subsection{Basic concepts}
Let $\mathcal{X} \subseteq \mathbb{R}^{d}$ be the input space and $ \mathcal{Y} \subseteq \mathbb{R}$ be the output space. Let $ \rho $ be an unknown probability distribution on $\mathcal{X} \times \mathcal{Y}$ satisfying $ \int_{\mathcal{X} \times \mathcal{Y}} y^{2} \mathrm{d}\rho(x,y) <\infty$, and denote the corresponding marginal distribution on $ \mathcal{X} $ as $\mu$. We use $L^{p}(\mathcal{X},\mu)$ (in short $L^{p}$) to represent the $L^{p}$-spaces. Then the generalization error \eqref{def of gen} can be written as 
\begin{displaymath}
  \left\| \hat{f} - f_{\rho}^{*} \right\|_{L^{2}}^{2}.
\end{displaymath}


Throughout the paper, we denote $\mathcal{H}$ as a separable RKHS on $\mathcal{X}$ with respect to a continuous and bounded kernel function $k$ satisfying 
\begin{displaymath}
  \sup\limits_{x \in \mathcal{X}} k(x,x) \le \kappa^{2}.
\end{displaymath}
Define the integral operator $ T: L^{2}(\mathcal{X},\mu) \to L^{2}(\mathcal{X},\mu)$ as
\begin{equation}\label{def of T}
    (T f)(x):=\int_{\mathcal{X}} k(x, y) f(y) \mathrm{d} \mu(y).
\end{equation}
It is well known that $T$ is a positive, self-adjoint, trace-class, and hence a compact operator \cite{steinwart2012_MercerTheorem}. The spectral theorem for self-adjoint compact operators and Mercer's decomposition theorem yield that 
\begin{align}
    k(x, y) &= \sum_{i \in N} \lambda_i e_i(x) e_i(y), \notag \\
    T &= \sum_{i \in N} \lambda_i\left\langle\cdot, e_i\right\rangle_{L^2} e_i, \notag
\end{align}
where $N$ is an at most countable set, the eigenvalues $\{ \lambda_{i} \}_{i \in N} \subseteq (0,\infty)$ is a non-increasing summable sequence, and $ \{ e_{i} \}_{i \in N}$ are the corresponding eigenfunctions. Denote the samples as $X=\left( x_{1},\cdots,x_{n} \right)$ and $\textbf{y}=\left( y_{1},\cdots,y_{n} \right)^{\prime}$. The representer theorem (see, e.g., \citealt{Steinwart2008SupportVM}) gives an explicit expression of the KRR estimator defined by \eqref{krr estimator}, i.e., 
\begin{displaymath}
   \hat{f}_{\lambda}(x) = \mathbb{K}(x, X)(\mathbb{K}(X, X)+n \lambda I)^{-1} \mathbf{y},
\end{displaymath}
where 
\begin{displaymath}
  \mathbb{K}(X, X)=\left(k\left(x_i, x_j\right)\right)_{n \times n},
\end{displaymath}
and
\begin{displaymath}
   \mathbb{K}(x, X)=\left(k\left(x, x_1\right), \cdots, k\left(x, x_n\right)\right).
\end{displaymath}

We also need to introduce the interpolation spaces of RKHS. For any $ s \ge 0$, the fractional power integral operator $T^{s}: L^{2}(\mathcal{X},\mu) \to L^{2}(\mathcal{X},\mu)$ is defined as 
\begin{displaymath}
  T^{s}(f)=\sum_{i \in N} \lambda_i^{s} \left\langle f, e_i\right\rangle_{L^2} e_i,
\end{displaymath}
and the interpolation space $[\mathcal{H}]^{s}, s \ge 0$ of $\mathcal{H}$ is defined as 
\begin{equation}\label{def interpolation space}
    [\mathcal{H}]^s \coloneqq \operatorname{Ran} T^{\frac{s}{2}}=\left\{\sum_{i \in N} \lambda_i^{\frac{s}{2}}  a_i e_i \mid \sum\limits_{i \in N} a_{i}^{2} < \infty \right\},
\end{equation}
with the inner product 
\begin{equation}\label{def of interpolation norm}
    \langle f, g\rangle_{[\mathcal{H}]^s}=\left\langle T^{-\frac{s}{2}} f, T^{-\frac{s}{2}} g\right\rangle_{L^2} .
\end{equation}
It is easy to show that $[\mathcal{H}]^s $ is also a separable Hilbert space with orthogonal basis $ \{ \lambda_{i}^{\frac{s}{2}} e_{i}\}_{i \in N}$. Specially, we have $[\mathcal{H}]^0 \subseteq L^{2}(\mathcal{X},\mu) $ and $[\mathcal{H}]^1 \subseteq \mathcal{H}$. For $0 < s_{1} < s_{2}$, the embeddings $ [\mathcal{H}]^{s_{2}} \hookrightarrow[\mathcal{H}]^{s_{1}} \hookrightarrow[\mathcal{H}]^0 $ exist and are compact \cite{fischer2020_SobolevNorm}. For the functions in $[\mathcal{H}]^{s}$ with larger $s$, we say they have higher regularity (smoothness) with respect to the RKHS. 

As an example, the Sobolev space $H^{m}(\mathcal{X})$ is an RKHS if $m > \frac{d}{2}$, and its interpolation space is still a Sobolev space given by $ [H^{m}(\mathcal{X})]^s \cong H^{m s}(\mathcal{X}), \forall s>0 $, see Section \ref{section sobolev} for detailed discussions.

\subsection{Assumptions}\label{section assumption}
This subsection lists the standard assumptions that frequently appeared in related literature.
\vspace{6pt}
\begin{assumption}[Eigenvalue decay rate (EDR)]\label{ass EDR}
 Suppose that the eigenvalue decay rate (EDR) of $\mathcal{H}$ is $\beta > 1$, i.e, there are positive constants $c$ and $C$ such that 
 \begin{displaymath}
   c i^{- \beta} \le \lambda_{i} \le C i^{-\beta}, \quad  \forall i \in N.
 \end{displaymath}
\end{assumption}
Note that the eigenvalues $\lambda_{i}$ and EDR are only determined by the marginal distribution $\mu$ and the RKHS $\mathcal{H}$. The polynomial eigenvalue decay rate assumption is standard in related literature and is also referred to as the capacity condition or effective dimension
condition.
\vspace{6pt}
\begin{assumption}[Embedding index]\label{assumption embedding}
 We say that $[\mathcal{H}]^\alpha$ has the embedding property  for some $\alpha\in [\frac{1}{\beta},1]$, if there is a constant $0 < A < \infty$ such that
 \begin{equation}\label{embedding property 1.12}
     \left\|[\mathcal{H}]^\alpha \hookrightarrow L^{\infty}(\mathcal{X},\mu)\right\| \leq A,
 \end{equation}
where $\|\cdot\| $ denotes the operator norm of the embedding.
Then we define the \textit{embedding index} of an RKHS $\mathcal{H}$ as
\begin{displaymath}
  \alpha_{0} = \inf\left\{ \alpha :  [\mathcal{H}]^\alpha~ \text{has the embedding property}  \right\}.
\end{displaymath}
\end{assumption}

In fact, for any $\alpha > 0$, we can define $M_{\alpha} $ as the smallest constant $A > 0$ such that 
\begin{displaymath}
  \sum_{i \in N} \lambda_i^\alpha e_i^2(x) \leq A^2, \quad \mu \text {-a.e. } x \in \mathcal{X},
\end{displaymath}
if there is no such constant, set $ M_{\alpha} = \infty$. Then \citet[Theorem 9]{fischer2020_SobolevNorm} shows that for $ \alpha > 0$,
\begin{displaymath}
  \left\|[\mathcal{H}]^\alpha \hookrightarrow L^{\infty}(\mathcal{X},\mu)\right\|=M_{\alpha}.
\end{displaymath}
    
The larger $\alpha$ is, the weaker the embedding property is. Note that since $ \sup_{x \in \mathcal{X}} k(x,x) \le \kappa^{2} $, $M_{\alpha} \le \kappa < \infty$ is always true for $\alpha \ge 1$. In addition, \citet[Lemma 10]{fischer2020_SobolevNorm} also shows that $\alpha$ can not be less than $\frac{1}{\beta}$. 
\vspace{6pt}

Note that the embedding property \eqref{embedding property 1.12} holds for any $\alpha > \alpha_{0}$. This directly implies that all the functions in $[\mathcal{H}]^\alpha$ are $\mu \text {-a.e.}$ bounded, $\alpha > \alpha_{0}$. However, the embedding property may not hold for $\alpha = \alpha_{0}$.

\vspace{6pt}
\begin{assumption}[Source condition]\label{ass source condition}
  For $s > 0 $, there is a constant $R > 0 $ such that $f_{\rho}^{*} \in [\mathcal{H}]^{s}$ and
  \begin{displaymath}
    \| f_{\rho}^{*} \|_{[\mathcal{H}]^{s}} \le R.
  \end{displaymath}
\end{assumption}
Functions in $[\mathcal{H}]^{s}$ with smaller $s$ are less smooth, which will be harder for an algorithm to estimate.
\vspace{6pt}
\begin{assumption}[Moment of error]\label{ass mom of error}
  The noise $ \epsilon \coloneqq y - f_{\rho}^{*}(x)$ satisfies that there are constants $ \sigma, L > 0$ such that for any $ m \ge 2$,
  \begin{displaymath}
      \mathbb{E}\left(|\epsilon|^m \mid x\right) \leq \frac{1}{2} m ! \sigma^2 L^{m-2}, \quad \mu \text {-a.e. } x \in \mathcal{X}.
  \end{displaymath}
\end{assumption}
This is a standard assumption to control the noise such that the tail probability decays fast \citep{lin2020_OptimalConvergence,fischer2020_SobolevNorm}. It is satisfied for, for instance, the Gaussian noise with bounded variance or sub-Gaussian noise. Some literature (e.g., \citealt{steinwart2009_OptimalRates,PillaudVivien2018StatisticalOO,Jun2019KernelTR}, etc) also uses a stronger assumption $y \in [-L_{0},L_{0}]$ which directly implies both Assumption \ref{ass mom of error} and the boundedness of $f_{\rho}^{*}$.



\subsection{Review of state-of-the-art results}
For the convenience of comparing our results with previous works, we review state-of-the-art upper and lower bounds of the convergence rate of KRR \cite{fischer2020_SobolevNorm}.
\vspace{6pt}
\begin{proposition}[Upper bound]\label{prop upper rate}
  Suppose that Assumption \ref{ass EDR},\ref{assumption embedding}, \ref{ass source condition} and \ref{ass mom of error} hold for $ 0 < s \le 2$ and $\frac{1}{\beta} \le \alpha_{0} < 1$. Furthermore, suppose that there exists a constant $B_{\infty} > 0$ such that $ \| f_{\rho}^{*} \|_{L^{\infty}(\mathcal{X},\mu)} \le B_{\infty}$. Let $\hat{f}_{\lambda}$ be the KRR estimator defined by \eqref{krr estimator}. Then in the case of $s + \frac{1}{\beta} > \alpha_{0} $, by choosing $\lambda \asymp n^{-\frac{\beta }{s \beta + 1}}$, for any fixed $\delta \in (0,1)$, when $n$ is sufficiently large, with probability at least $1 - \delta$, we have
      \begin{displaymath}
          \left\|\hat{f}_{\lambda}-f_{\rho}^*\right\|_{L_2}^2 \leq\left(\ln \frac{4}{\delta}\right)^2 C n^{-\frac{s \beta}{s \beta+1}},
      \end{displaymath}
      where $C$ is a constant independent of $n$ and $\delta$.
\end{proposition}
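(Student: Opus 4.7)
The plan is to follow the standard operator-theoretic route for analyzing KRR and to bound the $L^2$-excess risk via a bias/variance decomposition around the population regularized solution. Introduce the sampling operator $S_X \colon \mathcal{H} \to \mathbb{R}^n$, $f \mapsto (f(x_i))_{i=1}^n$, so that the KRR estimator can be written as $\hat{f}_\lambda = (T_X + \lambda)^{-1}\,g_X$, where $T_X = \tfrac{1}{n} S_X^* S_X$ is the empirical integral operator on $\mathcal{H}$ and $g_X = \tfrac{1}{n} S_X^* \mathbf{y}$. Define the noise-free population minimizer $f_\lambda := (T+\lambda)^{-1} T f_\rho^*$ and split
\begin{equation*}
  \hat{f}_\lambda - f_\rho^* = \bigl(\hat{f}_\lambda - f_\lambda\bigr) + \bigl(f_\lambda - f_\rho^*\bigr).
\end{equation*}
The first step is to control each piece after introducing the regularized square root $T_\lambda^{1/2} := (T+\lambda)^{1/2}$ as a preconditioner, since $\|\cdot\|_{L^2}$ on the ranges of interest is equivalent to $\|T_\lambda^{1/2}\cdot\|_{\mathcal{H}}$ up to $\lambda$-independent factors.

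For the approximation/bias term, use the source condition $f_\rho^* = T^{s/2} h$ with $\|h\|_{L^2} \le R$ together with the spectral calculus for the self-adjoint operator $T$: the function $t \mapsto \tfrac{\lambda}{t+\lambda} t^{s/2}$ is maximized at order $\lambda^{s/2}$ on $(0,\|T\|]$ for $s\in(0,2]$, yielding $\|f_\lambda - f_\rho^*\|_{L^2} \lesssim R\,\lambda^{s/2}$. This is classical and does not depend on $n$.

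The hard part is the variance/sample term $\hat{f}_\lambda - f_\lambda$, which is where the embedding index enters. Standard manipulations give
\begin{equation*}
  \hat{f}_\lambda - f_\lambda = (T_X + \lambda)^{-1} \bigl[(g_X - T_X f_\rho^*) + (T - T_X)(f_\rho^* - f_\lambda)\bigr] + \bigl((T_X+\lambda)^{-1} - (T+\lambda)^{-1}\bigr) T f_\rho^*.
\end{equation*}
One then inserts $T_\lambda^{\pm 1/2}$ factors in order to reduce everything to two concentration events: (i) the operator-norm deviation $\|T_\lambda^{-1/2}(T - T_X) T_\lambda^{-1/2}\| \le 1/2$, which via a Bernstein inequality in the trace class requires $n\lambda^{\alpha} \gtrsim \log(1/\delta)$ for any $\alpha > \alpha_0$ by virtue of Assumption \ref{assumption embedding} and $M_\alpha < \infty$; and (ii) the noise concentration $\|T_\lambda^{-1/2}(g_X - T_X f_\rho^*)\|_{\mathcal{H}} \lesssim \log(1/\delta)\bigl(\tfrac{\sigma^2 \mathcal{N}(\lambda)}{n} + \tfrac{B_\infty^2}{n\lambda^\alpha}\bigr)^{1/2}$, obtained from a Hilbert-space Bernstein bound using Assumption \ref{ass mom of error} and the effective dimension $\mathcal{N}(\lambda) := \operatorname{tr}((T+\lambda)^{-1}T) \lesssim \lambda^{-1/\beta}$ from the EDR. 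The crucial and subtle point is that bounding $\|f_\rho^*\|_{L^\infty} \le B_\infty$ is precisely what is needed to control the bounded-random-variable constants in Bernstein's inequality when $s \le \alpha_0$, since the embedding $[\mathcal{H}]^s \hookrightarrow L^\infty$ fails and one cannot extract this bound from the source condition itself.

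Finally, on the intersection of the two favorable events one obtains
\begin{equation*}
  \|\hat{f}_\lambda - f_\rho^*\|_{L^2}^2 \lesssim \bigl(\log(4/\delta)\bigr)^2\,\Bigl(\lambda^{s} + \tfrac{\mathcal{N}(\lambda)}{n} + \tfrac{1}{n\lambda^\alpha}\Bigr),
\end{equation*}
where the last term absorbs the $B_\infty$-dependent contribution. Choosing $\lambda \asymp n^{-\beta/(s\beta+1)}$ balances $\lambda^s$ against $\mathcal{N}(\lambda)/n \asymp 1/(n\lambda^{1/\beta})$ and produces the target rate $n^{-s\beta/(s\beta+1)}$; the residual term $1/(n\lambda^\alpha)$ is of smaller order exactly when $s + \tfrac{1}{\beta} > \alpha$, which, by taking $\alpha \downarrow \alpha_0$, reduces to the hypothesis $s + \tfrac{1}{\beta} > \alpha_0$. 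The main obstacle, and the one that drives the paper, is that this argument breaks down in the regime $s \le \alpha_0$ without the extraneous boundedness of $f_\rho^*$, because the Bernstein constants governing step (ii) cannot otherwise be controlled.
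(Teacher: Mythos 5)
Your overall strategy is the same one the paper relies on (the paper itself defers this proposition to Fischer--Steinwart, but reproduces the identical machinery in its appendix for the generalization): split $\hat f_\lambda-f_\rho^*$ around the population regularized solution $f_\lambda$, bound the bias by $R\lambda^{s/2}$ via spectral calculus, and control the sample term through the two concentration events $\|T_\lambda^{-1/2}(T-T_X)T_\lambda^{-1/2}\|\le c<1$ and a Hilbert-space Bernstein bound, with $\|f_\rho^*\|_{L^\infty}\le B_\infty$ entering the Bernstein constants; the final balancing $\lambda\asymp n^{-\beta/(s\beta+1)}$ is also the same. However, two steps as written do not go through. First, your decomposition of $\hat f_\lambda-f_\lambda$ is algebraically incorrect: the term $\bigl((T_X+\lambda)^{-1}-(T+\lambda)^{-1}\bigr)Tf_\rho^*$ equals $(T_X+\lambda)^{-1}(T-T_X)f_\lambda$, so your right-hand side collapses to $(T_X+\lambda)^{-1}\bigl[(g_X-T_Xf_\rho^*)+(T-T_X)f_\rho^*\bigr]$, which differs from $\hat f_\lambda-f_\lambda=(T_X+\lambda)^{-1}\bigl[(g_X-T_Xf_\rho^*)+(T_X-T)(f_\rho^*-f_\lambda)\bigr]$ by $(T_X+\lambda)^{-1}(T-T_X)(2f_\rho^*-f_\lambda)$. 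The clean identity the paper uses is $\hat f_\lambda-f_\lambda=T_{X\lambda}^{-1}\bigl[(g_Z-T_Xf_\lambda)-(g-Tf_\lambda)\bigr]$.

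The more consequential issue is where $B_\infty$ lands in the Bernstein bound. With $\lambda\asymp n^{-\beta/(s\beta+1)}$ the term $B_\infty^2/(n\lambda^\alpha)$ that you display equals $n^{-1+\alpha\beta/(s\beta+1)}$, which is $O\bigl(n^{-s\beta/(s\beta+1)}\bigr)$ only if $\alpha\le 1/\beta$; for any $\alpha>\alpha_0\ge 1/\beta$ it dominates the target rate, so your final balancing does not close, and the claim that the residual is negligible exactly when $s+1/\beta>\alpha$ does not follow from the bound you wrote. The correct accounting is that the $L^\infty$ control (namely $\|f_\rho^*-f_\lambda\|_{L^\infty}\le B_\infty+M_\alpha R\lambda^{-(\alpha-s)/2}$) only enters the almost-sure \emph{range} constant of Bernstein, contributing $\lambda^{-\alpha}\bigl(B_\infty+\lambda^{-(\alpha-s)/2}\bigr)^2/n^2$ to the squared error, while the \emph{variance} proxy must be taken as $\lambda^{-\alpha}\,\mathbb{E}|f_\lambda(x)-f_\rho^*(x)|^2\lesssim \lambda^{s-\alpha}$ using the $L^2$ approximation bound, giving $\lambda^{s-\alpha}/n$; both of these are $o\bigl(n^{-s\beta/(s\beta+1)}\bigr)$ precisely under $\alpha<s+1/\beta$. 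This is exactly the content of the paper's Proposition on the third term (its bound $\tfrac{C_1\lambda^{-\alpha/2}}{n}\tilde M+\tfrac{C_2\mathcal N^{1/2}(\lambda)}{\sqrt n}+\tfrac{C_3\lambda^{-(\alpha-s)/2}}{\sqrt n}$ with $\tilde M$ containing $B_\infty$ only in the $1/n$ piece), and it is the step you need to make explicit for the proof to yield the stated rate.
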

\vspace{6pt}
\begin{remark}
  When $s + \frac{1}{\beta} \le \alpha_{0} $, \citet[Theorem 1]{fischer2020_SobolevNorm} also proves the state-of-the-art upper bound of the convergence rate $ (n/\log^{r}(n))^{-\frac{s}{\alpha}}, \forall r>0$ and $\alpha > \alpha_{0}$ which can be arbitrarily close to $\alpha_{0}$. We should note that $ s + \frac{1}{\beta} > \alpha_{0} $ is always satisfied for Sobolev RKHS (see Section \ref{section sobolev}), so we focus on $ s + \frac{1}{\beta} > \alpha_{0} $ in this paper. 
\end{remark}
      

\vspace{6pt}
\begin{proposition}[Minimax lower bound]\label{prop lower rate}
   Let $\mu$ be a probability distribution on $\mathcal{X}$ such that Assumption \ref{ass EDR} and \ref{assumption embedding} are satisfied for $\frac{1}{\beta} \le \alpha_{0} < 1$. Let $\mathcal{P}$ consist of all the distributions on $\mathcal{X}\times \mathcal{Y}$ satisfying \ref{ass source condition}, \ref{ass mom of error} for $ 0 < s \le 2$ and with marginal distribution $\mu$. For a constant $B_{\infty} >0$, let $ \mathcal{P}_{\infty}$ consists of all the distributions on $\mathcal{X}\times \mathcal{Y}$ such that $ \| f_{\rho}^{*} \|_{L^{\infty}(\mathcal{X},\mu)} \le B_{\infty} $. Then for any $\alpha > \alpha_{0}$, there exists a constant $C$, for all learning methods, for any fixed $\delta \in (0,1)$, when $n$ is sufficiently large, there is a distribution $\rho \in \mathcal{P} \cap \mathcal{P}_{\infty}$ such that, with probability at least $1 - \delta$, we have 
   \begin{displaymath}
       \left\|\hat{f}-f_\rho^*\right\|_{L^2}^2 \ge C \delta n^{-\frac{\max\{s,\alpha\}  \beta}{\max\{s,\alpha\} \beta +1}}.
   \end{displaymath}
\end{proposition}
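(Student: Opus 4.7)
The plan is to apply the standard information-theoretic reduction to a multiple-hypothesis testing problem, but to arrange the candidate regression functions so that the source condition and the $L^{\infty}$-boundedness constraint are enforced \emph{simultaneously}. Fix an integer $m$ to be calibrated at the end; by the Varshamov-Gilbert lemma there is a family $\{\varepsilon^{(k)}\}_{k=1}^{N}\subset\{-1,+1\}^{m}$ with $N\geq 2^{m/8}$ and pairwise Hamming distance at least $m/8$. I would take the candidates to be
\[
f^{(k)}(x) \;=\; \gamma \sum_{i=m+1}^{2m} \varepsilon^{(k)}_{i-m}\, e_{i}(x),
\]
with amplitude $\gamma>0$ to be chosen, and couple each one with Gaussian observation noise $y\mid x \sim \mathcal{N}(f^{(k)}(x),\sigma^{2})$, which automatically satisfies Assumption~\ref{ass mom of error}. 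Using Assumption~\ref{ass EDR} and the fact that $\{e_{i}\}$ is orthonormal in $L^{2}$, one obtains $\|f^{(k)}\|_{[\mathcal{H}]^{s}}^{2}\asymp \gamma^{2} m^{1+s\beta}$ and $\|f^{(k)}\|_{[\mathcal{H}]^{\alpha}}^{2}\asymp \gamma^{2} m^{1+\alpha\beta}$.

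Because Assumption~\ref{assumption embedding} gives $\|f^{(k)}\|_{L^{\infty}}\lesssim \|f^{(k)}\|_{[\mathcal{H}]^{\alpha}}$ for every $\alpha>\alpha_{0}$, both constraints $\|f^{(k)}\|_{[\mathcal{H}]^{s}}\leq R$ and $\|f^{(k)}\|_{L^{\infty}}\leq B_{\infty}$ hold if we choose $\gamma^{2}\asymp m^{-1-\max(s,\alpha)\beta}$. For $k\ne l$, orthogonality gives the $L^{2}$-separation
\[
\|f^{(k)}-f^{(l)}\|_{L^{2}}^{2} \;=\; 4\gamma^{2}\, d_{H}(\varepsilon^{(k)},\varepsilon^{(l)}) \;\geq\; \gamma^{2} m/2,
\]
while the Gaussian noise produces the $n$-sample KL bound $\mathrm{KL}(\rho_{k}^{\otimes n}\|\rho_{l}^{\otimes n}) = \tfrac{n}{2\sigma^{2}}\|f^{(k)}-f^{(l)}\|_{L^{2}}^{2}\leq 2n\gamma^{2}m/\sigma^{2}$.

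A Fano-type multiple-testing lemma (e.g.\ Theorem~2.5 of Tsybakov) requires $\max_{k,l}\mathrm{KL}\lesssim \log N \asymp m$, forcing $n\gamma^{2}m\lesssim m$, i.e.\ $n\, m^{-\max(s,\alpha)\beta}\lesssim m$, so the balanced choice is $m\asymp n^{1/(1+\max(s,\alpha)\beta)}$. At this scale the pairwise separation is of order $\gamma^{2}m\asymp m^{-\max(s,\alpha)\beta}\asymp n^{-\max(s,\alpha)\beta/(\max(s,\alpha)\beta+1)}$, and the usual minimum-distance reduction promotes the testing lower bound into the stated excess-risk lower bound; the explicit multiplicative factor $\delta$ on the right-hand side is recovered by retaining the confidence parameter in the probabilistic form of Fano. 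The main obstacle, which distinguishes this proof from the classical unconstrained source-condition lower bound, is the \emph{joint} enforcement of the source and $L^{\infty}$ constraints: when $\alpha>s$ the embedding bound $\|f^{(k)}\|_{L^{\infty}}\lesssim \|f^{(k)}\|_{[\mathcal{H}]^{\alpha}}$ is strictly more restrictive than $\|f^{(k)}\|_{[\mathcal{H}]^{s}}\leq R$ and forces $\gamma^{2}\asymp m^{-1-\alpha\beta}$ rather than $\gamma^{2}\asymp m^{-1-s\beta}$, which is precisely what degrades the final exponent from $s\beta/(s\beta+1)$ to $\alpha\beta/(\alpha\beta+1)$ and explains the appearance of $\max(s,\alpha)$ in the statement.
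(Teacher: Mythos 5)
Your proposal is correct. The paper does not actually prove Proposition~\ref{prop lower rate} itself (it defers to Chapter~6 of Fischer--Steinwart), but your argument is exactly the route taken there and in the paper's own proof of the companion Proposition~\ref{prop information lower bound}: a Varshamov--Gilbert packing supported on the eigenfunctions $e_{m+1},\dots,e_{2m}$, Gaussian noise, the KL identity $\mathrm{KL}(\rho_k^{\otimes n}\,\|\,\rho_l^{\otimes n})=\tfrac{n}{2\sigma^2}\|f^{(k)}-f^{(l)}\|_{L^2}^2$, and Tsybakov's reduction (Lemma~\ref{lower prop from tsy}), with the single new ingredient being the embedding-induced amplitude cap $\gamma^2\lesssim m^{-1-\alpha\beta}$, which is precisely what degrades the exponent to $\max\{s,\alpha\}$.
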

\vspace{6pt}
\begin{remark}
  Under the precondition $ s + \frac{1}{\beta} > \alpha_{0}$, (1) when $ s > \alpha_{0} $, since there always exists $\alpha_{0} < \alpha \le s$, the upper bound of the convergence rate in Proposition \ref{prop upper rate} coincides with the minimax lower bound in Proposition \ref{prop lower rate} and hence is minimax optimal; (2) but when $ s \le \alpha_{0} $, existing results fail to show the optimality of KRR. The same gap between the upper and lower bound exists for gradient descent and stochastic gradient descent with multiple passes \cite{PillaudVivien2018StatisticalOO}.
\end{remark}

Note that in Proposition \ref{prop lower rate}, we consider the distributions in $\mathcal{P} \cap \mathcal{P}_{\infty} $. If we consider all the distributions in $\mathcal{P}$, we have the following minimax lower bound, which is often referred to as the information-theoretic lower bound (see, e.g., \citealt{rastogi2017_OptimalRates}).
\vspace{6pt}
\begin{proposition}[Information-theoretic lower bound]\label{prop information lower bound}
    Let $\mu$ be a probability distribution on $\mathcal{X}$ such that Assumption \ref{ass EDR} is satisfied. Let $\mathcal{P}$ consist of all the distributions on $\mathcal{X}\times \mathcal{Y}$ satisfying \ref{ass source condition}, \ref{ass mom of error} for $ 0 < s \le 2$ and with marginal distribution $\mu$. Then there exists a constant $C$, for all learning methods, for any fixed $\delta \in (0,1)$, when $n$ is sufficiently large, there is a distribution $\rho \in \mathcal{P}$ such that, with probability at least $1 - \delta$, we have 
   \begin{displaymath}
       \left\|\hat{f}-f_\rho^*\right\|_{L^2}^2 \ge C \delta n^{-\frac{s  \beta}{s \beta +1}}.
   \end{displaymath}
\end{proposition}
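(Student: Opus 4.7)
The approach is a standard Fano-type information-theoretic reduction. I would exhibit a finite family $\{\rho_\omega : \omega \in \Omega\} \subset \mathcal{P}$ whose regression functions are well-separated in $L^2$ but whose $n$-fold product measures are statistically hard to distinguish, then feed this into a quantitative form of Fano's lemma to extract the stated lower bound. Concretely, fix a parameter $M \in \mathbb{N}$ to be optimized later and set
$$f_\omega(x) = c \sum_{i=M+1}^{2M} \omega_i \lambda_i^{s/2} e_i(x), \qquad \omega \in \{-1,+1\}^M,$$
with scale $c = R M^{-1/2}$ so that $\|f_\omega\|_{[\mathcal{H}]^s}^2 = c^2 M = R^2$; because $\{\lambda_i^{s/2} e_i\}$ is orthonormal in $[\mathcal{H}]^s$ by \eqref{def interpolation space}--\eqref{def of interpolation norm}, this keeps $f_\omega$ in the source-condition ball. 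Define $\rho_\omega$ to have marginal $\mu$ on $\mathcal{X}$ and conditional law $y\mid x \sim \mathcal{N}(f_\omega(x),\sigma^2)$; the Gaussian noise trivially satisfies Assumption \ref{ass mom of error}, so $\rho_\omega \in \mathcal{P}$. The Varshamov--Gilbert lemma then produces $\Omega \subseteq \{-1,+1\}^M$ with $\log|\Omega| \gtrsim M$ and pairwise Hamming distance $\gtrsim M$.

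Using Assumption \ref{ass EDR} ($\lambda_i \asymp i^{-\beta}$), the two key quantities scale as $\|f_\omega - f_{\omega'}\|_{L^2}^2 \asymp c^2 \cdot M \cdot M^{-s\beta} \asymp M^{-s\beta}$ and $\mathrm{KL}(\rho_\omega^{\otimes n} \| \rho_{\omega'}^{\otimes n}) = \frac{n}{2\sigma^2}\|f_\omega-f_{\omega'}\|_{L^2}^2 \asymp n M^{-s\beta}$. Fano's inequality requires $n M^{-s\beta} \lesssim \log|\Omega| \asymp M$; balancing at $M \asymp n^{1/(s\beta+1)}$ yields the target rate $n^{-s\beta/(s\beta+1)}$ for the squared $L^2$ error in expectation. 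To obtain the stated high-probability form with the $\delta$ prefactor, I would combine this with a reverse-Markov argument: for the worst-case $\rho$ in the packing the expected error is at least $c \, n^{-s\beta/(s\beta+1)}$, and if the event $\{\|\hat f-f_\rho^*\|_{L^2}^2 < C\delta\, n^{-s\beta/(s\beta+1)}\}$ had probability greater than $\delta$ uniformly in $\rho \in \Omega$, the averaged error would be incompatible with the Fano lower bound once $C$ is chosen small enough.

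I expect the main obstacle to be the final bookkeeping: tracking constants through Fano's inequality and the reverse-Markov step to produce a $C$ truly independent of $\delta$, and ensuring $n$ is large enough that the asymptotic balances $M \asymp n^{1/(s\beta+1)}$ are achievable with an integer $M$. Everything else --- the source-condition check, the moment condition for Gaussian noise, and the Mercer-type summation $\sum_{i=M+1}^{2M} \lambda_i^s \asymp M^{1-s\beta}$ --- is a direct consequence of the eigenvalue decay assumption and the explicit construction. In particular, no use is made of the embedding index $\alpha_0$ or of any boundedness of the $f_\omega$, which is precisely why the resulting bound applies uniformly over all of $\mathcal{P}$ rather than only the smaller class $\mathcal{P} \cap \mathcal{P}_\infty$ considered in Proposition \ref{prop lower rate}.
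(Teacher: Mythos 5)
Your construction is essentially the paper's: a Varshamov--Gilbert packing supported on the eigenfunctions $e_{M+1},\dots,e_{2M}$, Gaussian noise, the KL identity $\mathrm{KL}(\rho_\omega^{\otimes n}\|\rho_{\omega'}^{\otimes n})=\tfrac{n}{2\sigma^2}\|f_\omega-f_{\omega'}\|_{L^2}^2$, and the balance $M\asymp n^{1/(s\beta+1)}$. The gap is in how you convert this into the stated high-probability bound. Your ``reverse-Markov'' step is invalid: from an in-expectation lower bound $\mathbb{E}\|\hat f-f_\rho^*\|_{L^2}^2\ge c\,n^{-s\beta/(s\beta+1)}$ you cannot deduce that the event $\{\|\hat f-f_\rho^*\|_{L^2}^2< C\delta\, n^{-s\beta/(s\beta+1)}\}$ has probability at most $\delta$, because the squared error of an arbitrary learning method is unbounded above --- the expectation can be large while the error is tiny with probability close to $1$. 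Reverse Markov requires an a.s.\ upper bound on the random variable, which you do not have.

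The paper's mechanism for the $1-\delta$ guarantee is different and is the one piece your proposal is missing: it applies Tsybakov's Theorem 2.5, which directly bounds $\inf_{\hat f}\sup_f P\bigl(\|\hat f-f\|_{L^2}^2\ge s^2\bigr)$ from below by roughly $1-2a-\sqrt{2a/\ln M}$ provided $\mathrm{KL}(\rho_{f_i}^n,\rho_{f_0}^n)\le a\ln M$. To push this probability up to $1-\delta$ one must take $a\asymp\delta$, and the only way to satisfy the KL constraint with such a small $a$ is to let the \emph{amplitude} of the packing functions scale with $\delta$ (the paper sets $\epsilon=C_0 m^{-s\beta-1}$ with $C_0=c'a\propto\delta$). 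This is precisely where the $\delta$ prefactor in $C\delta\, n^{-s\beta/(s\beta+1)}$ comes from. Your normalization $c=RM^{-1/2}$ fixes the amplitude at the maximum allowed by the source condition, which (i) makes the KL-versus-$\log|\Omega|$ comparison fail unless $R^2/\sigma^2$ happens to be small, and (ii) at best yields failure probability bounded below by a fixed constant, not by $1-\delta$ for arbitrary $\delta$. The fix is to abandon the reverse-Markov step, shrink the amplitude by a factor proportional to $\delta$, and invoke the probability form of the Fano/Tsybakov bound directly; the rest of your argument then goes through.
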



\section{Main Results}
The main results of this paper aim to remove the boundedness assumption $ \| f_{\rho}^{*} \|_{L^{\infty}(\mathcal{X},\mu)} \le B_{\infty}$. We state the main theorem in terms of general RKHS, and make detailed discussions for Sobolev space as a particular case. 
\vspace{6pt}
\begin{theorem}\label{main theorem}
  Suppose that Assumption \ref{ass EDR}, \ref{assumption embedding}, \ref{ass source condition} and \ref{ass mom of error} hold for $ 0 < s \le 2$ and $\frac{1}{\beta} \le \alpha_{0} < 1$. Suppose that $f_{\rho}^{*} \in L^{q}(\mathcal{X},\mu)$ and $ \|f_{\rho}^{*} \|_{L^{q}(\mathcal{X},\mu)} \le C_{q} < \infty $ for some $q > \frac{2(s \beta + 1)}{2 + (s-\alpha_{\tiny 0}) \beta}$. Let $\hat{f}_{\lambda}$ be the KRR estimator defined by \eqref{krr estimator}. Then, in the case of $ s +\frac{1}{\beta} > \alpha_{0}$, by choosing $\lambda \asymp n^{-\frac{\beta }{s \beta + 1}}$, for any fixed $\delta \in (0,1)$, when $n$ is sufficiently large, with probability at least $1 - \delta$, we have
  \begin{displaymath}
      \left\|\hat{f}_{\lambda}-f_{\rho}^{*}\right\|_{L^{2}}^2 \leq\left(\ln \frac{4}{\delta}\right)^2 C n^{-\frac{s \beta}{s \beta +1}} ,
  \end{displaymath}
  where $C$ is a constant independent of $n$ and $\delta$.
\end{theorem}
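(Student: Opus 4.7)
The plan is to follow the operator-analytic scheme of \citet{fischer2020_SobolevNorm}, replacing the single step that used $\|f_\rho^*\|_{L^\infty}\le B_\infty$ with a truncation argument that exploits only the $L^q$-integrability. First, I decompose the excess risk into the standard bias/variance split, $\hat f_\lambda - f_\rho^* = (\hat f_\lambda - f_\lambda) + (f_\lambda - f_\rho^*)$, where $f_\lambda = (T+\lambda)^{-1}Tf_\rho^*$ is the population KRR. The deterministic bias $\|f_\lambda - f_\rho^*\|_{L^2}^2 \lesssim \lambda^s R^2$ follows from Assumption~\ref{ass source condition} and already matches the target rate $n^{-s\beta/(s\beta+1)}$ under $\lambda\asymp n^{-\beta/(s\beta+1)}$. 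For the stochastic term the standard operator identity reads
\[
\hat f_\lambda - f_\lambda \;=\; (T_X+\lambda)^{-1}(T_X-T)h \;+\; (T_X+\lambda)^{-1}\tfrac{1}{n}\sum_{i=1}^n k_{x_i}\epsilon_i,\qquad h := \lambda(T+\lambda)^{-1}f_\rho^* = f_\rho^* - f_\lambda.
\]

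Two pieces are handled with no need to bound $f_\rho^*$: (i) the noise term, by Hilbert-valued Bernstein using Assumption~\ref{ass mom of error} and the pointwise embedding estimate $\|T_\lambda^{-1/2}k_x\|_{\mathcal H}^2 \le \lambda^{-\alpha}M_\alpha^2$ from Assumption~\ref{assumption embedding}; and (ii) the operator concentration $\|T_\lambda^{1/2}(T_X+\lambda)^{-1}T_\lambda^{1/2}\|_{\mathrm{op}}\le 2$, by self-adjoint operator Bernstein with the same bound. The entire difficulty sits in the ``design'' term $T_\lambda^{-1/2}(T_X-T)h = \tfrac 1 n\sum_i[T_\lambda^{-1/2}k_{x_i}h(x_i) - T_\lambda^{-1/2}Th]$. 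When $s > \alpha_0$, one can choose $\alpha\in(\alpha_0,s]$, invoke $[\mathcal H]^\alpha\hookrightarrow L^\infty$ on $f_\lambda$, and combine with $\|f_\rho^*\|_{L^\infty}$ to obtain $\|h\|_{L^\infty}=O(1)$, after which Bernstein in $\mathcal H$ closes the argument at the target rate; but when $s\le\alpha_0$ no such $\alpha$ exists and $\|f_\lambda\|_{L^\infty}$ itself blows up, which is where existing proofs stall.

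To push past this barrier I would truncate $f_\rho^*$ at a level $M=M_n$ to be optimised: set $f_M := f_\rho^*\mathbf{1}_{|f_\rho^*|\le M}$, $g_M := f_\rho^* - f_M$, and split the design term correspondingly through $h=\lambda(T+\lambda)^{-1}f_M + \lambda(T+\lambda)^{-1}g_M$. For the truncated part, $\|f_M\|_{L^\infty}\le M$ yields a Bernstein bound with almost-sure control $\sim M\lambda^{-\alpha/2}$ and, crucially, a H{\"o}lder-interpolated variance $\lesssim \|f_M\|_{L^q}^2\,\lambda^{-2\alpha/q}N(\lambda)^{(q-2)/q}$, obtained by interpolating between the pointwise bound $\lambda^{-\alpha}M_\alpha^2$ and the $L^1(\mu)$-mean $N(\lambda)\asymp\lambda^{-1/\beta}$ of $\|T_\lambda^{-1/2}k_x\|_{\mathcal H}^2$ against $\|f_M\|_{L^q}\le C_q$. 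For the tail part, use $\mathbb E|g_M(x)|^p\le M^{p-q}C_q^q$ for $p\le q$, combined with deterministic estimates such as $\|T_\lambda^{-1/2}T\|_{\mathrm{op}}\le\sqrt{\lambda_1}$ and a Markov-style bound on $\|T_\lambda^{-1/2}T_X g_M\|_{\mathcal H}$. Balancing $M$ against $\lambda\asymp n^{-\beta/(s\beta+1)}$ so that the truncated-Bernstein almost-sure contribution and the tail contribution are simultaneously $\lesssim n^{-s\beta/(s\beta+1)}$, and then sending $\alpha\downarrow\alpha_0$, produces exactly the quantitative condition $q > 2(s\beta+1)/(2+(s-\alpha_0)\beta)$ stated in the theorem.

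The main obstacle is this final balancing step. The naive variance bound $M^2 N(\lambda)$ for the truncated Bernstein would propagate $M$ quadratically into the target rate and force $M=O(1)$, voiding the truncation; it is therefore essential to employ the H{\"o}lder-interpolated variance just described, which trades the $M^2$ factor for a factor depending only on $\|f_\rho^*\|_{L^q}$ and yields the sharp exponent in the $q$-condition. A secondary bookkeeping issue is that when $s\le\alpha_0$ the function $h$ itself need not lie in $L^\infty$, so the truncation must be applied at the level of $f_\rho^*$ and propagated through $h=\lambda(T+\lambda)^{-1}f_\rho^*$ using the spectral representation afforded by the source condition, keeping $\|h_M\|_{L^q}$ controlled by $\|f_\rho^*\|_{L^q}$ up to a constant depending only on $\lambda_1$. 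Once these refinements are installed, assembling the noise, operator-concentration, bias, and truncated-design contributions via the triangle inequality delivers the rate $n^{-s\beta/(s\beta+1)}$ with probability at least $1-\delta$, as claimed.
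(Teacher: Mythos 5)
Your overall architecture matches the paper's: bias/estimation split, operator concentration for $\|T_\lambda^{1/2}T_{X\lambda}^{-1}T_\lambda^{1/2}\|$, Bernstein for the remaining empirical term, a truncation at a level diverging with $n$, and a balance of that level against $\lambda$ to produce exactly the condition $q>\frac{2(s\beta+1)}{2+(s-\alpha_0)\beta}$. However, the step you single out as the crux --- the ``H\"older-interpolated variance'' for the truncated design term --- does not close, and the reason is instructive. Your bound $\tilde\sigma^2\lesssim C_q^2\,\lambda^{-2\alpha/q}\mathcal N(\lambda)^{(q-2)/q}$ uses only the $L^q$ bound on $f_\rho^*$ and never the source condition. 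To get the target rate you need $\tilde\sigma^2\lesssim\mathcal N(\lambda)\asymp\lambda^{-1/\beta}$ (so that $\tilde\sigma/\sqrt n\lesssim n^{-\frac12\frac{s\beta}{s\beta+1}}$), and comparing exponents this forces $\lambda^{-2\alpha/q}\lesssim\lambda^{-2/(q\beta)}$, i.e.\ $\alpha\le 1/\beta$ --- impossible since $\alpha>\alpha_0\ge 1/\beta$. So for every admissible $\alpha$ your variance term overshoots by a genuine polynomial factor $\lambda^{-(\alpha-1/\beta)\cdot(2/q)}$, and the argument as written does not reach $n^{-\frac{s\beta}{s\beta+1}}$ even in the Sobolev case.

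The fix is to keep the truncation level entirely out of the variance and put it only into the almost-sure (higher-moment) part of Bernstein, where it enters as $\tilde L/n$ rather than $\tilde\sigma/\sqrt n$ and can therefore be polynomially large in $n$. The paper achieves this by truncating the event rather than the function: it splits $\xi_i$ by the indicator of $\Omega_1=\{|f_\rho^*(x)|\le t\}$, so the design summand on $\Omega_1$ still contains the full difference $f_\lambda-f_\rho^*$, whose second moment is bounded by the approximation error $\|f_\lambda-f_\rho^*\|_{L^2}^2\le R^2\lambda^{s}$ (Assumption~\ref{ass source condition}); combined with the pointwise bound $\|T_\lambda^{-1/2}k_x\|_{\mathcal H}^2\le M_\alpha^2\lambda^{-\alpha}$ this gives variance $\lesssim\lambda^{-(\alpha-s)}$, which suffices precisely because $\alpha<s+1/\beta$. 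The truncation level $t$ then appears only in $\tilde L\sim t\lambda^{-\alpha/2}$, giving the constraint $t\le n^{\frac12(1+\frac{1-\alpha\beta}{s\beta+1})}$, while the tail set $\Omega_2$ is handled by noting that $t\gg n^{1/q}$ makes $\mathbb P(\exists i:\,x_i\in\Omega_2)\to0$, so the empirical tail contribution vanishes outright and only its expectation needs a Markov/Cauchy--Schwarz estimate. Your function-level truncation $f_M=f_\rho^*\mathbf 1_{|f_\rho^*|\le M}$ obstructs exactly this route: $f_M$ no longer satisfies the source condition, so the $\lambda^{s}$ gain in the variance is lost, which is what drove you to the (insufficient) H\"older interpolation. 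If you replace the function truncation by the event truncation and route the variance through the approximation error, the rest of your outline (noise term, operator concentration, and the balancing that yields the $q$-condition) goes through as in the paper.
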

\vspace{6pt}
\begin{remark}
  In Theorem \ref{main theorem}, we replace the boundedness assumption with a $L^{q}$-integrability assumption and prove the same upper bound of the convergence rate as Proposition \ref{prop upper rate} in the case of $ s +\frac{1}{\beta} > \alpha_{0}$. As shown in the following, both $s +\frac{1}{\beta} > \alpha_{0}$ and $q > \frac{2(s \beta + 1)}{2 + (s-\alpha_{\tiny 0}) \beta}$ are naturally satisfied for Sobolev RKHS. 
\end{remark}
\vspace{6pt}
\begin{remark}\label{remark of ui RKHS}
  RKHS with uniformly bounded eigenfunctions, i.e., $\sup_{i \in N}  \| e_{i} \|_{L^{\infty}} <\infty$ are frequently considered \citep{mendelson2010_RegularizationKernel, steinwart2009_OptimalRates, PillaudVivien2018StatisticalOO}. For this kind of RKHS, the Assumption \ref{assumption embedding} holds for $\alpha_{0} = \frac{1}{\beta}$ \citep[Lemma 10]{fischer2020_SobolevNorm}, hence $s +\frac{1}{\beta} > \alpha_{0}$ is satisfied. In addition, the assumption $q > \frac{2(s \beta + 1)}{2 + (s-\alpha_{\tiny 0}) \beta} $ in Theorem \ref{main theorem} turns into $q > 2$. Recalling that $ [\mathcal{H}]^{0} \subseteq L^{2}(\mathcal{X},\mu)$ when $s=0$, so it is reasonable to assume that the functions in $[\mathcal{H}]^{s}, s>0, $ is $L^{q}$-integrable for some $q >2$.
\end{remark}
\vspace{6pt}
\begin{remark}
  If the RKHS $\mathcal{H}$ satisfies that $[\mathcal{H}]^{s} \hookrightarrow L^{q}$, i.e., $[\mathcal{H}]^{s} \cap L^{q} = [\mathcal{H}]^{s}$ for some $q$ satisfying the integrability required in Theorem \ref{main theorem}. Using Proposition \ref{prop information lower bound}, the minimax lower bound will be still $ n^{-\frac{s \beta}{s \beta + 1}}$ even when making the assumption $ \|f_{\rho}^{*} \|_{L^{q}} < \infty $.
\end{remark}

\subsection{Optimality for Sobolev RKHS}\label{section sobolev}
Let us first introduce some concepts of (fractional) Sobolev space (see, e.g., \citealt{adams2003_SobolevSpaces}). In this section, we assume that $\mathcal{X} \subseteq \mathbb{R}^{d}$ is a bounded domain with smooth boundary and the Lebesgue measure $\nu$. Denote $L^{2}(\mathcal{X}) \coloneqq L^{2}(\mathcal{X},\nu)$ as the corresponding $L^{2}$ space. For $m \in \mathbb{N}$, we denote $H^{m}(\mathcal{X})$ as the Sobolev space with smoothness $m$ and $H^{0}(\mathcal{X}) \coloneqq L^{2}(\mathcal{X})$. Then the (fractional) Sobolev space for any real number $r >0 $ can be defined through \textit{real interpolation}:
\begin{displaymath}
      H^{r}(\mathcal{X}) := \left(L^{2}(\mathcal{X}), H^{m}(\mathcal{X})\right)_{\frac{r}{m},2},
  \end{displaymath}
where $m:=\min \{k \in \mathbb{N}: k > r\}$. (For details of real interpolation of Banach spaces, we refer to \citet[Chapter 4.2.2]{sawano2018theory}). It is well known that when $r > \frac{d}{2}$, $H^{r}(\mathcal{X})$ is a separable RKHS with respect to a bounded kernel, and the corresponding EDR is (see, e.g., \citealt{edmunds_triebel_1996}) 
\begin{displaymath}
  \beta = \frac{2 r}{d}.
\end{displaymath}
Furthermore, for the interpolation space of $H^{r}(\mathcal{X}) $ under Lebesgue measure defined by \eqref{def interpolation space}, we have (see, e.g., \citealt[Theorem 4.6]{steinwart2012_MercerTheorem}), for $ s > 0$,
\begin{displaymath}
  [H^{r}(\mathcal{X})]^{s} = H^{rs}(\mathcal{X}).
\end{displaymath}
Now we begin to introduce the embedding theorem of (fractional) Sobolev space from 7.57 of \citet{adams1975sobolev}, which is crucial when applying Theorem \ref{main theorem} to Sobolev RKHS. 
\vspace{6pt}
\begin{proposition}[Embedding theorem]\label{prop embedding theorem}
   Let $H^{r}(\mathcal{X}), r > 0$ be defined as above, we have
  \begin{enumerate}[(\romannumeral1)]
      
     \item if $d > 2r$, then $ H^{r}(\mathcal{X}) \hookrightarrow L^{q}(\mathcal{X})$ for $2 \le q \le \frac{2d}{d - 2r} $. 
     \item if $d = 2r$, then $ H^{r}(\mathcal{X}) \hookrightarrow L^{q}(\mathcal{X})$ for $2 \le q < \infty $.
     \item if $ d < 2(r-j)$ for some nonnegative integer $j$, then $ H^{r}(\mathcal{X}) \hookrightarrow C^{j,\gamma}(\mathcal{X}), \gamma= r-j-\frac{d}{2}$,
  \end{enumerate}
  where $C^{j,\gamma}(\mathcal{X}) $ denotes the Hölder space and $\hookrightarrow $ denotes the continuous embedding.
\end{proposition}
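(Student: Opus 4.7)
The strategy is to reduce to $\mathcal{X} = \mathbb{R}^d$ via a bounded extension operator $E : H^r(\mathcal{X}) \to H^r(\mathbb{R}^d)$ (Stein-type, available for smooth bounded domains and any $r \ge 0$), then derive each of the three embeddings from its classical integer-order counterpart by lifting either through the real-interpolation definition $H^r = (L^2, H^m)_{r/m, 2}$ used in the paper or through a Bessel-potential/Littlewood-Paley characterization. Once the embedding is established on $\mathbb{R}^d$, restricting $Ef$ back to $\mathcal{X}$ gives the conclusion with a controlled norm.

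For case (i), $d > 2r$, I would either use the Bessel-potential representation $H^r(\mathbb{R}^d) = (I-\Delta)^{-r/2}L^2(\mathbb{R}^d)$ together with the Hardy-Littlewood-Sobolev inequality applied to the kernel of $(I-\Delta)^{-r/2}$ (which decays like $|x|^{-(d-2r)}$ near the origin), obtaining directly the sharp exponent $q = 2d/(d-2r)$; or apply real interpolation to $L^2 \hookrightarrow L^2$ and the integer Gagliardo-Nirenberg-Sobolev embedding $H^m \hookrightarrow L^{2d/(d-2m)}$ for an integer $m$ with $2m < d$, invoking the Lions-Peetre identification
\[
(L^{p_0}, L^{p_1})_{\theta, 2} = L^{p, 2}, \qquad \tfrac{1}{p} = \tfrac{1-\theta}{p_0} + \tfrac{\theta}{p_1},
\]
with $\theta = r/m$ giving $p = 2d/(d-2r)$, and then $L^{p,2} \hookrightarrow L^p$ because $p \ge 2$. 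The full range $2 \le q \le 2d/(d-2r)$ on the bounded $\mathcal{X}$ follows from $\nu(\mathcal{X}) < \infty$ and Hölder's inequality. If no integer $m$ lies strictly between $r$ and $d/2$, I would use the reiteration theorem with a non-integer endpoint produced by a preliminary stage of the argument.

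Case (ii), $d = 2r$, I would handle by bootstrapping case (i): given any $q < \infty$, choose $r' < r$ close enough to $r$ so that $2d/(d-2r') \ge q$; then $H^r(\mathcal{X}) \hookrightarrow H^{r'}(\mathcal{X}) \hookrightarrow L^{2d/(d-2r')}(\mathcal{X}) \hookrightarrow L^q(\mathcal{X})$.

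For case (iii), $d < 2(r-j)$, I would use the identification $H^r(\mathbb{R}^d) = B^r_{2,2}(\mathbb{R}^d)$ (a consequence of Littlewood-Paley theory, consistent with the real-interpolation definition) and the classical Besov embedding $B^r_{2,2} \hookrightarrow B^{r-d/2}_{\infty, \infty}$, landing in the Hölder-Zygmund space of smoothness $r - d/2 = j + \gamma$ with $\gamma = r - j - d/2 \in (0,1]$; for non-integer $\gamma$ this Hölder-Zygmund space coincides with $C^{j,\gamma}$. The main obstacle I foresee is precisely this last step: matching the fractional-smoothness space produced by interpolation/Besov theory with the classical Hölder space $C^{j,\gamma}$ requires careful handling of the top-order derivative's Hölder seminorm and the avoidance of integer endpoints $\gamma \in \mathbb{N}$; once the extension operator is in place, the boundary of $\mathcal{X}$ contributes no additional difficulty.
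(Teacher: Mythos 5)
You should first note that the paper does not prove this proposition at all: it is quoted verbatim from Adams (1975), Theorem 7.57, as the classical Sobolev embedding theorem, so there is no internal proof to compare against. Your sketch is essentially the standard textbook argument (Stein extension to $\mathbb{R}^d$, which is indeed compatible with the paper's real-interpolation definition of $H^r$ by interpolating the integer-order bounds; Bessel potential plus Hardy--Littlewood--Sobolev or Lions--Peetre interpolation for (i); a bootstrap for (ii); Besov embedding $B^r_{2,2}\hookrightarrow B^{r-d/2}_{\infty,\infty}$ for (iii)), and as a route to the cited result it is sound. Two corrections are worth making. First, the kernel of $(I-\Delta)^{-r/2}$ behaves like $|x|^{-(d-r)}$ near the origin (with exponential decay at infinity), not $|x|^{-(d-2r)}$; with the correct exponent HLS gives $\frac{1}{q}=\frac{1}{2}-\frac{r}{d}$, i.e.\ $q=\frac{2d}{d-2r}$, so your stated conclusion is right but the stated decay rate would yield a different (wrong) exponent if taken literally. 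The interpolation alternative also silently needs an integer $m$ with $r<m<d/2$, and your reiteration fallback is left vague; the Bessel-potential route covers all $0<r<d/2$ directly and is the cleaner choice.

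Second, the caveat you flag in (iii) is a real one and slightly stronger than you state: when $r-j-\frac{d}{2}=1$ the Besov chain lands in the Zygmund space $B^{j+1}_{\infty,\infty}$, which strictly contains $C^{j,1}$, and the embedding $H^r\hookrightarrow C^{j,1}$ genuinely fails at that endpoint (e.g.\ $H^{2}(\mathbb{R}^2)\not\subset C^{0,1}$); so your argument, like the standard statement of the theorem, should be read with $\gamma\in(0,1)$ non-integer. This does not affect the paper, which only uses (iii) with $j=0$ and $\gamma=r\alpha-\frac{d}{2}>0$ arbitrarily small, but your write-up should restrict the range of $\gamma$ rather than only "avoid" integer values in passing.
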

\vspace{6pt}
On the one hand, (\romannumeral3) of Proposition \ref{prop embedding theorem} shows that, for a Sobolev RKHS $ \mathcal{H} = H^{r}(\mathcal{X}), r > \frac{d}{2}$ and any $\alpha > \frac{1}{\beta} = \frac{d}{2r}$,
\begin{displaymath}
    [H^{r}(\mathcal{X})]^{\alpha} = H^{r\alpha}(\mathcal{X}) \hookrightarrow C^{0,\gamma}(\mathcal{X}) \hookrightarrow L^{\infty}(\mathcal{X}),
\end{displaymath}
where $\gamma > 0$. So the Assumption \ref{assumption embedding} holds for $\alpha_{0} = \frac{1}{\beta} $, and thus $\frac{2(s \beta + 1)}{2 + (s-\alpha_{0}) \beta} = 2$. On the other hand, (\romannumeral1) of Proposition \ref{prop embedding theorem} shows that if $ d > 2rs $,
\begin{displaymath}
    [H^{r}(\mathcal{X})]^{s} = H^{rs}(\mathcal{X}) \hookrightarrow L^{q}(\mathcal{X}),
\end{displaymath}
where $ q = \frac{2d}{d-2rs} = \frac{2}{1 - s \beta} > 2$. In addition, (\romannumeral2) and (\romannumeral3) of Proposition \ref{prop embedding theorem} show that $q>2$ also holds if $ d \le 2rs $. Therefore, for any $0 < s \le 2$, we have
\begin{displaymath}
    \text{Assumption} ~\ref{assumption embedding};~ s+ \frac{1}{\beta} > \alpha_{0};~ q > \frac{2(s \beta + 1)}{2 + (s-\alpha_{0}) \beta} = 2
\end{displaymath}
hold simultaneously.

Now we are ready to state a theorem as the corollary of Theorem \ref{main theorem} and Proposition \ref{prop embedding theorem}, which implies the optimality of KRR for Sobolev RKHS and any $0 < s \le 2$.
\vspace{6pt}
\begin{theorem}\label{upper rate of Sobolev}
  Let $\mathcal{X} \subseteq \mathbb{R}^{d}$ be a bounded domain with a smooth boundary. The RKHS is $\mathcal{H} = H^{r}(\mathcal{X})$ for some $r > d/2 $. Suppose that the distribution $\rho$ satisfies that $ \| f_{\rho}^{*}\|_{[\mathcal{H}]^{s}} \le R $ for $ 0 < s \le 2$, the noise satisfies Assumption \ref{ass mom of error}, and the marginal distribution $\mu$ on $\mathcal{X}$ has Lebesgue density $ 0 < c \le p(x) \le C$ for two constants $c$ and $C$. Let $\hat{f}_{\lambda}$ be the KRR estimator defined by \eqref{krr estimator}. Then, by choosing $\lambda \asymp n^{-\frac{\beta }{s \beta + 1}}$, for any fixed $\delta \in (0,1)$, when $n$ is sufficiently large, with probability at least $1 - \delta$, we have
  \begin{displaymath}
      \left\|\hat{f}_{\lambda}-f_{\rho}^{*}\right\|_{L^{2}}^2 \leq\left(\ln \frac{4}{\delta}\right)^2 C n^{-\frac{s \beta}{s \beta +1}} ,
  \end{displaymath}
  where $C$ is a constant independent of $n$ and $\delta$.
\end{theorem}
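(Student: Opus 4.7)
The plan is to derive Theorem \ref{upper rate of Sobolev} as a direct corollary of Theorem \ref{main theorem} by verifying each of its hypotheses in the Sobolev setting. The discussion immediately preceding the theorem already assembles the Sobolev-embedding facts we need; the task is to organize them and to address the one wrinkle, namely that $\mu$ is not the Lebesgue measure $\nu$ but only absolutely continuous with density bounded in $[c, C]$.

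First I would invoke the equivalence of measures. Since $p(x) \in [c, C]$, the spaces $L^{q}(\mathcal{X}, \mu)$ and $L^{q}(\mathcal{X}, \nu)$ coincide with equivalent norms for every $q \in [1, \infty]$. Consequently, (i) the integral operator $T$ on $L^{2}(\mathcal{X}, \mu)$ has the same polynomial eigenvalue decay rate $\beta = 2r/d$ as in the Lebesgue case (so Assumption \ref{ass EDR} holds), and (ii) the interpolation space $[H^{r}(\mathcal{X})]^{s}$ defined via $T$ continues to coincide, as a set with equivalent norms, with the fractional Sobolev space $H^{rs}(\mathcal{X})$ by \citet[Theorem 4.6]{steinwart2012_MercerTheorem}. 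Thus the hypothesis $\|f_{\rho}^{*}\|_{[\mathcal{H}]^{s}} \le R$ yields Assumption \ref{ass source condition} directly.

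Next I would apply part (iii) of Proposition \ref{prop embedding theorem}: for any $\alpha > 1/\beta = d/(2r)$ we have $[H^{r}(\mathcal{X})]^{\alpha} = H^{r\alpha}(\mathcal{X}) \hookrightarrow C^{0,\gamma}(\mathcal{X}) \hookrightarrow L^{\infty}(\mathcal{X})$, so Assumption \ref{assumption embedding} holds with $\alpha_{0} = 1/\beta$. This both guarantees $s + 1/\beta > \alpha_{0}$ for every $s > 0$ and simplifies the threshold in Theorem \ref{main theorem} to $2(s\beta+1)/(2+(s-\alpha_{0})\beta) = 2$. Parts (i)--(iii) of Proposition \ref{prop embedding theorem} applied to $f_{\rho}^{*} \in H^{rs}(\mathcal{X})$ then supply some $q > 2$ with $f_{\rho}^{*} \in L^{q}(\mathcal{X}, \nu)$ --- concretely $q = 2/(1 - s\beta)$ when $d > 2rs$, and any finite $q$ (indeed $L^{\infty}$) otherwise --- and the bounded-density comparison transfers this to $L^{q}(\mathcal{X}, \mu)$. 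Assumption \ref{ass mom of error} is assumed directly, so all hypotheses of Theorem \ref{main theorem} hold and invoking it with $\lambda \asymp n^{-\beta/(s\beta+1)}$ gives the stated bound. The main obstacle is essentially bookkeeping: ensuring that all the Sobolev-space constructions, classically stated for the Lebesgue measure, descend cleanly to $\mu$. The hypothesis $p \in [c, C]$ resolves this via a two-sided norm equivalence whose constants are absorbed into the final $C$.
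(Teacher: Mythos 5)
Your proposal is correct and follows essentially the same route as the paper: both reduce the theorem to Theorem \ref{main theorem} by using the bounded-density equivalence $\mu \cong \nu$ to identify $[H^{r}(\mathcal{X})]^{s}_{\mu}$ with $H^{rs}(\mathcal{X})$ (the paper cites the real-interpolation identity from \citet{fischer2020_SobolevNorm}), then invoke Proposition \ref{prop embedding theorem} to get $\alpha_{0}=1/\beta$ and the $L^{q}$-integrability with $q>2$. The only cosmetic difference is that you spell out the transfer of the EDR and of $L^{q}$ membership under the change of measure slightly more explicitly than the paper does.
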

Note that we say that the distribution $\mu$ has Lebesgue density $0 < c \le p(x) \le C $, if $\mu$ is equivalent to the Lebesgue measure $\nu$, i.e., $ \mu \ll \nu, \nu \ll \mu$, and there exist constants $c, C > 0$ such that $c \leq \frac{\mathrm{d} \mu}{\mathrm{d} \nu} \leq C$.
\vspace{6pt}
\begin{remark}[Optimality for Sobolev RKHS]
  Without the boundedness assumption, Theorem \ref{upper rate of Sobolev} proves the same upper bound of the convergence rate $n^{-\frac{s \beta}{s \beta +1}}$. Together with the information-theoretic lower bound in Proposition \ref{prop information lower bound}, we prove the optimality of KRR for all $0<s\le 2$, while state-of-the-art result \citet[Corollary 5]{fischer2020_SobolevNorm} can only prove for $ \frac{1}{\beta} < s \le 2$. Since when $s > 2 $, the saturation effect of KRR has been proved in a recent work \citet{li2023saturation}, the optimality of KRR for Sobolev spaces is well understood. 
\end{remark}

\subsection{Sketch of proof}
In this subsection, we present the sketch of the proof of Theorem \ref{main theorem}. The rigorous proof of Theorem \ref{main theorem}, Proposition \ref{prop information lower bound}, and Theorem \ref{upper rate of Sobolev} will be in the appendix. We refer to \citet[Chapter 6]{fischer2020_SobolevNorm} for the proof of Proposition \ref{prop upper rate} and \ref{prop lower rate}.

Our proofs are based on the standard integral operator techniques dating back to \citet{Smale2007LearningTE}, and we refine the proof to handle the unbounded case. Let us first introduce some frequently used notations. Define the sample operator as
\begin{displaymath}
    K_{x}: \mathbb{R} \rightarrow \mathcal{H}, ~~ y \mapsto y k(x,\cdot),
\end{displaymath}
and its adjoint operator
\begin{displaymath}
  K_{x}^{*}: \mathcal{H} \rightarrow \mathbb{R},~~ f \mapsto f(x).
\end{displaymath}
Next we define the sample covariance operator $T_{X}: \mathcal{H} \to \mathcal{H}$ as
\begin{equation}\label{def of TX}
    T_X:=\frac{1}{n} \sum_{i=1}^n K_{x_i} K_{x_i}^*,
\end{equation}
and the sample basis function 
\begin{displaymath}
  g_Z:=\frac{1}{n} \sum_{i=1}^n K_{x_i} y_i \in \mathcal{H}.
\end{displaymath}
Then the KRR estimator \eqref{krr estimator} can be expressed by (see, e.g., \citet[Chapter 5]{Caponnetto2007OptimalRF}) 
\begin{displaymath}
  \hat{f}_\lambda = \left(T_X + \lambda\right)^{-1} g_Z .
\end{displaymath}
Define $f_{\lambda}$ as the unique minimizer given by
\begin{displaymath}
    f_\lambda = \underset{f \in \mathcal{H}}{\arg \min } \left( \int_{\mathcal{X}\times\mathcal{Y}} \left(f(x) - y\right)^{2} \mathrm{d}\rho(x,y) + \lambda\|f\|_{\mathcal{H}}^2\right).
\end{displaymath}
Note that the integral operator $T$ can also be seen as a bounded linear operator on $\mathcal{H}$, $ f_{\lambda} $ can be expressed by
\begin{equation}\label{expression of flambda}
    f_\lambda = \left(T+ \lambda\right)^{-1} g,
\end{equation}
where $g$ is the expectation of $g_{Z}$ given by
\begin{displaymath}
    g = \mathbb{E} g_{Z} = \int_{\mathcal{X}} k(x,\cdot) f_{\rho}^{*}(x) d\mu(x) = T f_{\rho}^{*} \in \mathcal{H}.
\end{displaymath}

The first step in our proof is to decompose the generalization error into two terms, which are often referred to as the approximation error and estimation error,
\begin{equation}\label{error decompose}
    \left\|\hat{f}_{\lambda}-f_{\rho}^{*}\right\|_{L^{2}} \le \left\|\hat{f}_{\lambda}-f_{\lambda}\right\|_{L^{2}} + \left\|f_{\lambda}-f_{\rho}^{*}\right\|_{L^{2}},
\end{equation}
Then we will show that by choosing $\lambda \asymp n^{-\frac{\beta}{s \beta + 1}} $,
\begin{equation}\label{approximation error}
    \left\|f_{\lambda}-f_{\rho}^{*}\right\|_{L^{2}} \le C R \lambda^{\frac{s}{2}} = C R n^{-\frac{1}{2} \frac{s \beta}{s \beta +1}};
\end{equation}
and for any fixed $\delta \in (0,1)$, when $n$ is sufficiently large, with probability at least $ 1- \delta$, we have
\begin{equation}\label{estimation error}
    \left\|\hat{f}_\lambda-f_{\lambda}\right\|_{L^{2}} \le \ln \frac{4}{\delta} C \frac{\lambda^{-\frac{1}{2\beta}}}{\sqrt{n}} = \ln \frac{4}{\delta} C n^{-\frac{1}{2}\frac{s \beta}{s \beta +1}}.
\end{equation}
Plugging \eqref{approximation error} and \eqref{estimation error} into \eqref{error decompose} and we will finish the proof of Theorem \ref{main theorem}.

\paragraph{The approximation error}
Suppose that $ f_{\rho}^{*} = \sum_{i \in N} a_{i} e_{i}$. Then using the expression \eqref{expression of flambda} and simple inequality, we will show that
\begin{displaymath}
    \left\|f_{\lambda}-f_{\rho}^{*}\right\|_{L^{2}}^{2} = \sum\limits_{i \in N} \left(\frac{\lambda \lambda_{i}^{s/2}}{\lambda + \lambda_{i}} \right)^{2} \lambda_{i}^{-s} a_{i}^{2} \le \lambda^{s} \|f_{\rho}^{*}\|_{[\mathcal{H}]^{s}}.
\end{displaymath}

\paragraph{The estimation error}
Denote $ T_{X \lambda} = T_{X} + \lambda$ and $ T_{\lambda} = T + \lambda$.  We will first rewrite the estimator error as follows
\begin{align}\label{sketch-0}
  &\left\|\hat{f}_\lambda - f_{\lambda}\right\|_{L^{2}} = \left\| T^{\frac{1}{2}} \left( \hat{f}_\lambda-f_{\lambda} \right)\right\|_{\mathcal{H}} \notag \\
  &\le \left\| T^{\frac{1}{2}} T_{\lambda}^{-\frac{1}{2}} \right\| \cdot \left\| T_{\lambda}^{\frac{1}{2}} T_{X \lambda}^{-1} T_{\lambda}^{\frac{1}{2}} \right\| \cdot \left\| T_{\lambda}^{-\frac{1}{2}} \left( g_{Z} - T_{X \lambda} f_{\lambda} \right) \right\|_{\mathcal{H}}.
\end{align}
For the first term in \eqref{sketch-0}, we have
\begin{displaymath}
    \left\| T^{\frac{1}{2}} T_{\lambda}^{-\frac{1}{2}} \right\| = \sup\limits_{i \in N} \left(\frac{\lambda_{i}}{\lambda_{i} + \lambda}\right)^{\frac{1}{2}} \le 1.
\end{displaymath}
For the second term in \eqref{sketch-0}, using the concentration result between $ T_{X \lambda} $ and $T_{\lambda} $, it can also be bounded by a constant. The main challenge of the proof is bounding the third term. We will show that the third term in \eqref{sketch-0} can be rewritten as 
\begin{align}\label{sketch-1}
    &\left\|T_\lambda^{-\frac{1}{2}}\left[\left(g_Z - \left(T_X + \lambda + T - T \right) f_\lambda\right)\right]\right\|_{\mathcal{H}} \notag \\
    &=\left\|T_\lambda^{-\frac{1}{2}}\left[\left(g_Z - T_X f_\lambda\right) - \left(T + \lambda \right) f_\lambda + T f_\lambda \right]\right\|_{\mathcal{H}} \notag \\
    &= \left\|T_\lambda^{-\frac{1}{2}}\left[\left(g_Z-T_X f_\lambda\right)-\left(g-T f_\lambda\right)\right]\right\|_{\mathcal{H}}.
\end{align}
The form of \eqref{sketch-1} allows us to use the well-known Bernstein type inequality, which controls the difference between a sum of i.i.d. random variables and its expectation. Traditionally, this step requires the boundedness of $f_{\rho}^{*}$. We refine this procedure by a truncation method and prove that with high probability, 
\begin{displaymath}
 \eqref{sketch-1} \le \ln{\frac{2}{\delta}} C \frac{\lambda^{-\frac{1}{2 \beta}}}{\sqrt{n}} = \ln{\frac{2}{\delta}} C n^{-\frac{1}{2} \frac{s \beta}{s \beta +1}}.   
\end{displaymath}
Specifically, we consider two subset of $\mathcal{X}$: $\Omega_{1} = \{x \in \Omega: |f_{\rho}^{*}(x)| \le t \}$ and $\Omega_{2} = \mathcal{X} \backslash \Omega_{1}$ where $t$ is allowed to diverge as $n \to \infty$. When choosing $t$ as a proper order of $n$, we show that on the one hand, the norm in $\Omega_{1}$ is still upper bounded by $ n^{-\frac{1}{2} \frac{s \beta}{s \beta +1}}$; on the other hand, the norms in $\Omega_{2}$ vanishes with a probability tending to 1 as $n \to \infty$. We argue that such $t$ exists by taking advantage of the $L^{q}$-integrability of $f_{\rho}^{*}$, which is required in the statement of Theorem \ref{main theorem}.

\section{Experiments}
In our experiments, we aim to verify that for functions $f_{\rho}^{*} \in [\mathcal{H}]^{s}$ but not in $L^{\infty}$, the KRR estimator can still achieve the convergence rate $n^{-\frac{s \beta}{ s \beta + 1}}$. We show the results for both Sobolev RKHS, and general RKHS with uniformly bounded eigenfunctions mentioned in Remark \ref{remark of ui RKHS}.
\subsection{Experiments in Sobolev space}\label{section experiment in sobolev}
Suppose that $\mathcal{X} = [0,1]$ and the marginal distribution $\mu$ is the uniform distribution on $[0,1]$. We consider the RKHS $\mathcal{H} = H^{1}(\mathcal{X})$ to be the Sobolev space with smoothness 1. Section \ref{section sobolev} shows that the EDR is $\beta = 2$ and embedding index is $\alpha_{0} = \frac{1}{\beta}$. We construct a function in $[\mathcal{H}]^{s} \backslash L^{\infty}$ by 
\begin{equation}\label{series of sobolev}
    f^{*}(x) =  \sum\limits_{k=1}^{\infty} \frac{1}{k^{s+0.5}} \left( \sin\left( 2 k \pi x\right) + \cos\left( 2 k \pi x\right) \right),
\end{equation}
for some $0 < s < \frac{1}{\beta} = 0.5$. We will show in Appendix \ref{appendix detail experiments} that the series in \eqref{series of sobolev} converges on $(0,1)$. In addition, since $ \sin 2 k \pi + \cos 2 k \pi \equiv 1$, we also have $f^{*} \notin L^{\infty}(\mathcal{X})$. The explicit formula of the kernel associated to $H^{1}(\mathcal{X})$ is given by \citet[Corollary 2]{ThomasAgnan1996ComputingAF}, i.e., $ k(x,y) = \frac{1}{\sinh{1}} \cosh{(1- \max(x,y)) \cosh{(1- \min(x,y))}}$. 

We consider the following data generation procedure:
\begin{displaymath}
    y = f^{*}(x) + \epsilon, 
\end{displaymath}
where $f^{*}$ is numerically approximated by the first 1000 terms in \eqref{series of sobolev} with $s=0.4$, $x \sim \mathcal{U}[0,1]$ and $\epsilon \sim \mathcal{N}(0,1)$. We choose the regularization parameter as $\lambda = c n^{-\frac{\beta}{s \beta + 1}} = c n^{-\frac{10}{9}}$ for a fixed $c$. The sample size $n$ is chosen from 1000 to 5000, with intervals of 100. We numerically compute the generalization error $ \|\hat{f} - f^{*}\|_{L^{2}}$ by Simpson's formula with $N \gg n$ testing points. For each $n$, we repeat the experiments 50 times and present the average generalization error as well as the region within one standard deviation. To visualize the convergence rate $r$, we perform logarithmic least-squares $\log \text{err} = r \log n + b$ to fit the generalization error with respect to the sample size and display the value of $r$. 

\begin{figure}[ht]
\vskip 0.1in
\begin{center}
\centerline{\includegraphics[width=\columnwidth]{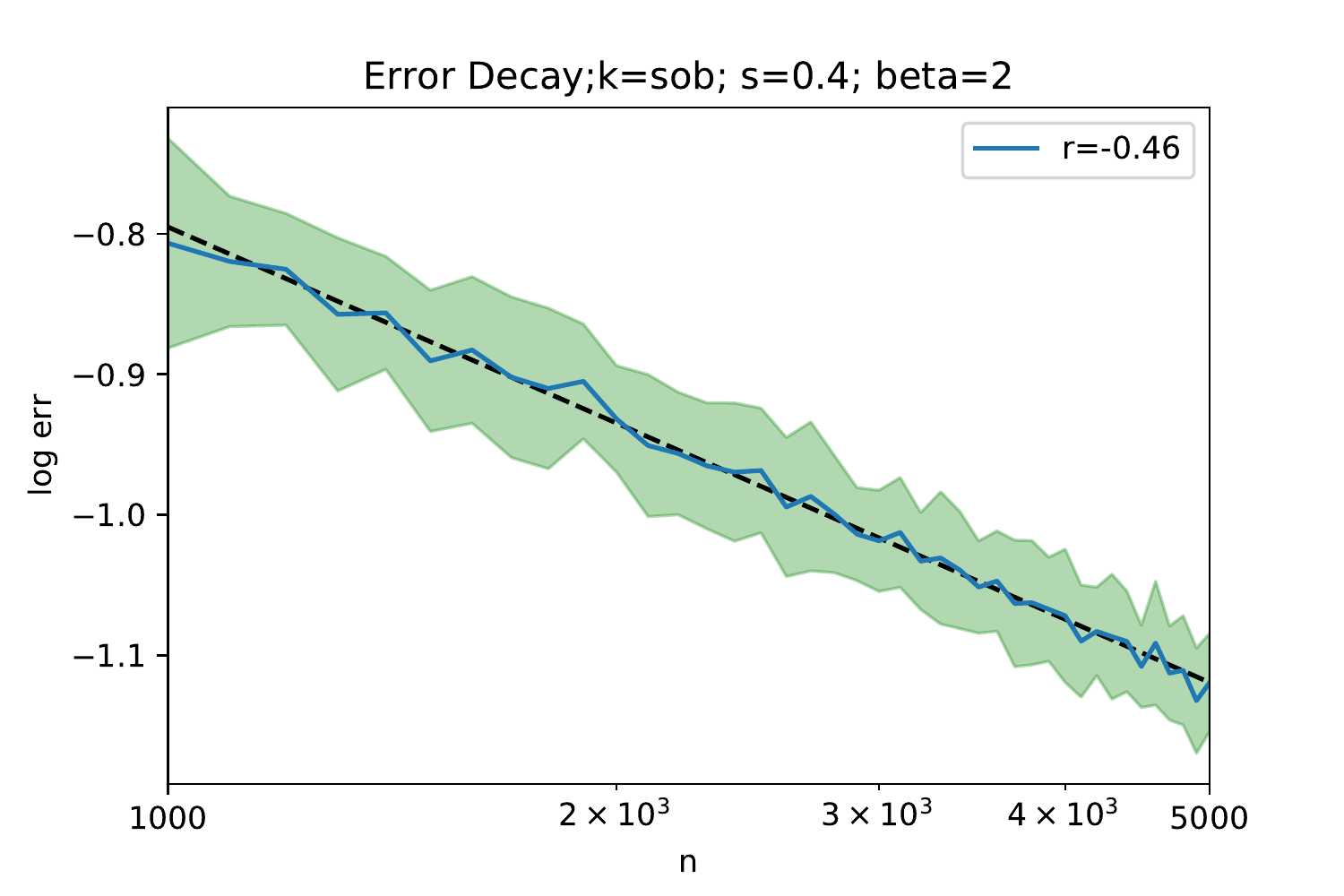}}
\caption{Error decay curve of Sobolev RKHS. Both axes are logarithmic. The curve shows the average generalization errors over 50 trials; the region within one standard deviation is shown in green.
The dashed black line is computed using logarithmic least-squares, and the slope represents the convergence rate $r$.}
\label{figure 0}
\end{center}
\vskip -0.2in
\end{figure}

We try different values of $c$, Figure \ref{figure 0} presents the convergence curve under the best choice of $c$. It can be concluded that the convergence rate of KRR's generalization error is indeed approximately equal to $ n^{-\frac{s \beta}{s \beta + 1}} = n^{-\frac{4}{9}}$, without the boundedness assumption of the true function $f^{*}$. We also did another experiment that used cross validation to choose the regularization parameter. Figure \ref{figure appendix sob_allc} in Appendix \ref{appendix detail experiments} shows a similar result as Figure \ref{figure 0}. We refer to Appendix \ref{appendix detail experiments} for more details of the experiments.
\vspace{6pt}
\begin{remark}
  This setting is similar to \citet{PillaudVivien2018StatisticalOO}. The difference is that we choose the source condition $s$ to be smaller than $\alpha_{0}$ so that $f^{*} \notin L^{\infty}(\mathcal{X})$.
\end{remark}

\subsection{Experiments in general RKHS}\label{section RKHS experiments}
Suppose that $\mathcal{X} = [0,1]$ and the marginal distribution $\mu$ is the uniform distribution on $[0,1]$. It is well known that the following RKHS 
\begin{align}
    \mathcal{H}:=\Big\{f:[0,1] \rightarrow \mathbb{R} \mid f &\text { is A.C., } f(0)=0, \notag \\ 
    &\int_0^1\left(f^{\prime}(x)\right)^2 \mathrm{~d} x<\infty\Big\}. \notag
\end{align}
is associated with the kernel $k(x,y) = \min(x,y)$ \cite{wainwright2019_HighdimensionalStatistics}. Further, its eigenvalues and eigenfunctions can be written as
\begin{displaymath}
    \lambda_n=\left(\frac{2 n-1}{2} \pi\right)^{-2}, \quad n=1,2, \cdots
\end{displaymath}
and
\begin{displaymath}
    e_n(x)=\sqrt{2} \sin \left(\frac{2 n-1}{2} \pi x\right), \quad n=1,2, \cdots
\end{displaymath}
It is easy to see that the EDR is $\beta = 2$, the eigenfunctions are uniformly bounded, and the embedding index is $\alpha_{0} = \frac{1}{\beta}$ (see Remark \ref{remark of ui RKHS}). We construct a function in $[\mathcal{H}]^{s} \backslash L^{\infty}$ by
\begin{equation}\label{series of min}
    f^{*}(x) = \sum\limits_{k = 1}^{\infty} \frac{1}{k^{s+0.5}} e_{2k-1}(x),
\end{equation}
for some $0< s < \frac{1}{\beta} = 0.5$. We will show in Appendix \ref{appendix detail experiments} that the series in \eqref{series of min} converges on $(0,1)$. Since $e_{2k-1}(1) \equiv 1$, we also have $f^{*} \notin L^{\infty}(\mathcal{X})$. 

We use the same data generation procedure as Section \ref{section experiment in sobolev}:
\begin{displaymath}
    y = f^{*}(x) + \epsilon, 
\end{displaymath}
where $f^{*}$ is numerically approximated by the first 1000 terms in \eqref{series of min} with $s=0.4$, $x \sim \mathcal{U}[0,1]$ and $\epsilon \sim \mathcal{N}(0,1)$.

Figure \ref{figure 1} presents the convergence curve under the best choice of $c$. It can also be concluded that the convergence rate of KRR's generalization error is indeed approximately equal to $ n^{-\frac{s \beta}{s \beta + 1}} = n^{-\frac{4}{9}}$. Figure \ref{figure appendix min_allc} in Appendix \ref{appendix detail experiments} shows the result of using cross validation. 
\begin{figure}[ht]
\vskip 0.1in
\begin{center}
\centerline{\includegraphics[width=\columnwidth]{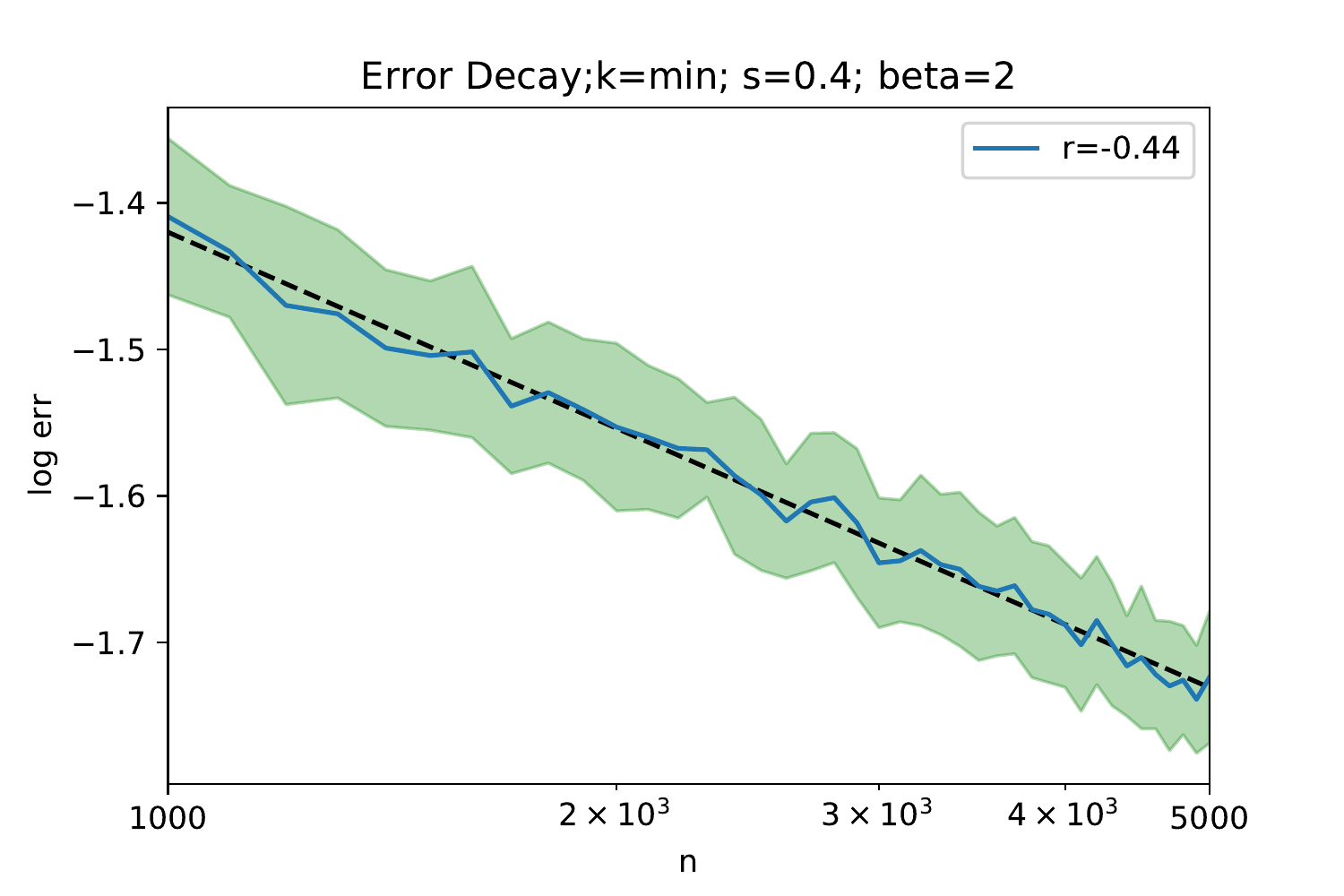}}
\caption{Error decay curve of general RKHS. Both axes are logarithmic. The curve shows the average generalization errors over 50 trials; the region within one standard deviation is shown in green.
The dashed black line is computed using logarithmic least-squares, and the slope represents the convergence rate $r$.}
\label{figure 1}
\end{center}
\vskip -0.2in
\end{figure}

\section{Conclusion and Discussion}
This paper considers the convergence rate and the minimax optimality of kernel ridge regression, especially in the misspecified case. When the true function $f_{\rho}^{*}$ falls into a less-smooth ($s \le \alpha_{0}$) interpolation space of the RKHS $ [\mathcal{H}]^{s} \nsubseteq L^{\infty}$, it has been an outstanding problem for years that whether the boundedness assumption of $f_{\rho}^{*}$ in the proof can be removed or weakened so that KRR is optimal for all $ 0 < s \le 2$. This paper proves that for Sobolev RKHS, the boundedness assumption can be directly removed. This result implies that, on the one hand, the desired upper bound of the convergence rate holds for all the functions in $[\mathcal{H}]^{s}$ (which can only be proved for $[\mathcal{H}]^{s} \cap L^{\infty}$ before); on the other hand, KRR is minimax optimal for all source conditions $0 < s \le 2$ (which can only be proved for $\frac{1}{\beta} < s \le 2$ before). Together with the saturation effect of KRR \citep{li2023saturation} when $s > 2$, all convergence behaviors of Sobolev RKHS are understood. For general RKHS, we also prove that the boundedness assumption can be replaced by a $L^{q}$-integrability assumption which turns out to be reasonable, at least for RKHS with uniformly bounded eigenfunctions. We verify the results through experiments for both Sobolev RKHS and general RKHS.

If an RKHS $\mathcal{H}$ has embedding index $\alpha_{0} = 1$, considering the embedding index will just recover the results of \cite{lin2018_OptimalRates}. Note that we assume $\alpha_{0} \in (0,1)$ throughout this paper, on which we will perform refined analysis. Technical tools in this paper can be extended to general spectral regularization algorithms and $[\mathcal{H}]^{\gamma}$-generalization error. Another direct open question is to discuss whether the $L^{q}$-integrability assumption in Theorem \ref{main theorem} holds for general RKHS, which we conjecture to be true. 

We also notice a line of work which studies the learning curves of kernel ridge regression \citep{Spigler2020AsymptoticLC,Bordelon2020SpectrumDL,Cui2021GeneralizationER} and crossovers between different noise magnitudes. At present, their results all rely on a Gaussian design assumption (or some variation), which is a very strong assumption. Nevertheless, their empirical results are enlightening and attract interest in the study of the learning curves of kernel ridge regression. We believe that studying the misspecified case in our paper is a crucial step to remove the Gaussian design assumption and draw complete conclusions about the learning curves of kernel ridge regression. KRR is also connected with Gaussian process regression~\citep{kanagawa2018_GaussianProcesses}.
\citet{Jin2021LearningCF} claimed to establish the learning curves for Gaussian process regression and thus for KRR\@.
However, there is a gap in their proof where an essential covering argument is missing:
their Corollary 20 provides a high probability bound for fixed $\nu$,
while the proof of their Lemma 40 and Lemma 41 mistakenly use Corollary 20 to assert that the bound holds simultaneously for all $\nu$.

Since \citet{Jacot_NTK_2018} introduced the NTK , kernel regression has become a natural surrogate for the neural networks in the `lazy trained regime'. Our work is also motivated by our empirical studies in the neural networks and NTK regressions.
Specifically, we found that the source condition with respect to the neural tangent kernel (or other frequently used kernels) is relatively small ($s \ll 1$) for some frequently used real datasets (MNIST, CIFAR-10). This observation urged us to determine the optimal convergence rate when the RKHS is misspecified.

\section*{Acknowledgements}
This research was partially supported by Beijing Natural Science Foundation (Grant Z190001), the National Natural Science Foundation of China (Grant 11971257),  National Key R\&D Program of China (2020AAA0105200), and Beijing Academy of Artificial Intelligence.




\bibliography{example_paper}
\bibliographystyle{icml2023}

\newpage
\appendix
\onecolumn

\section{Proof of Theorem \ref{main theorem}}
Throughout the proof, we denote 
\begin{equation}
   T_{\lambda} = T + \lambda;~~ T_{X \lambda} = T_{X} + \lambda,
\end{equation}
where $\lambda$ is the regularization parameter and $ T, T_X$ is defined by \eqref{def of T}, \eqref{def of TX}. We use $\| \cdot \|_{\mathscr{B}(B_1,B_2)}$ to denote the operator norm of a bounded linear operator from a Banach space $B_1$ to $B_2$, i.e., $ \| A \|_{\mathscr{B}(B_1,B_2)} = \sup\limits_{\|f\|_{B_1}=1} \|A f\|_{B_2} $. Without bringing ambiguity, we will briefly denote the operator norm as $\| \cdot \|$. In addition, we use $\text{tr}A$ and $\| A \|_{1}$ to denote the trace and the trace norm of an operator. We use $\| A \|_{2}$ to denote the Hilbert-Schmidt norm. In addition, we denote $ L^{2}(\mathcal{X},\mu)$ as $ L^{2}$, $ L^{\infty}(\mathcal{X},\mu)$ as $ L^{\infty}$ for brevity throughout the proof.

In addition, denote the effective dimension
\begin{align}
    \mathcal{N}(\lambda) = \text{tr}\big(T (T + \lambda)^{-1}\big) = \sum\limits_{i \in N} \frac{\lambda_{i}}{\lambda_{i} + \lambda}, 
\end{align}
since the EDR of $\mathcal{H}$ is $\beta$, Lemma \ref{lemma of effect} shows that $\mathcal{N}(\lambda) \asymp \lambda^{-\frac{1}{\beta}} $. 

The following lemma will be frequently used, so we present it at the beginning of our proof.
\begin{lemma}\label{lemma freq}
  For any $\lambda > 0$ and $ s \in [0,1]$, we have
  \begin{equation}
      \sup _{t \geq 0} \frac{t^s}{t+\lambda} \leq \lambda^{s-1}.
  \end{equation}
\end{lemma}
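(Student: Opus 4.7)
The plan is to show the equivalent reformulation $t^{s} \le \lambda^{s-1}(t+\lambda)$ for all $t \ge 0$, after which taking the supremum on the left immediately yields the stated bound. I would give a one-line proof via weighted AM--GM (which also suggests why $s$ has to lie in $[0,1]$), and then cross-check the result, and its sharpness, by direct calculus.

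First, I would invoke the weighted AM--GM inequality $a^{s} b^{1-s} \le sa + (1-s)b$, valid for all $a,b \ge 0$ and $s \in [0,1]$ (with the convention $0^{0}=1$). Setting $a = t$ and $b = \lambda$ gives $t^{s}\lambda^{1-s} \le st + (1-s)\lambda$. Since $s,\,1-s \in [0,1]$, the right-hand side is bounded above by $t + \lambda$, and multiplying through by $\lambda^{s-1}$ yields $t^{s} \le \lambda^{s-1}(t+\lambda)$, that is $t^{s}/(t+\lambda) \le \lambda^{s-1}$, as desired.

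As a sanity check, I would differentiate $f(t) = t^{s}/(t+\lambda)$ on $(0,\infty)$. For $s \in (0,1)$ the unique critical point is $t^{\ast} = s\lambda/(1-s)$, and a direct computation gives $f(t^{\ast}) = s^{s}(1-s)^{1-s}\lambda^{s-1}$. The factor $s^{s}(1-s)^{1-s}$ lies in $(0,1]$ on $[0,1]$ (each base being in $(0,1]$ and each exponent nonnegative and at most $1$), confirming the bound and in fact showing that the inequality is strict and not tight for $s \in (0,1)$. The boundary cases $s = 0$ and $s = 1$ are immediate by inspection, giving suprema $\lambda^{-1}$ and $1$, which match $\lambda^{s-1}$ at the endpoints.

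There is no real obstacle: the statement is an elementary scalar inequality, and its role in the rest of the paper is purely auxiliary. It is used, through the spectral theorem applied to the positive self-adjoint operator $T$, to bound operator-calculus quantities such as $\|T^{s/2} T_{\lambda}^{-1/2}\|$ and $\|T_{\lambda}^{-1/2} T^{s/2}\|$ that appear in the decomposition of the KRR estimator's error in the sketch following \eqref{sketch-0}.
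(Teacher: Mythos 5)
Your proof is correct and essentially the same as the paper's: the paper applies the elementary inequality $a^{s}\le a+1$ (for $a\ge 0$, $s\in[0,1]$) with $a=t/\lambda$, which is just your weighted AM--GM step $t^{s}\lambda^{1-s}\le st+(1-s)\lambda\le t+\lambda$ after dividing by $\lambda$. The calculus sanity check and the sharpness constant $s^{s}(1-s)^{1-s}$ are a correct bonus but not needed.
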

\begin{proof}
  Since $ a^{s} \le a + 1$ for any $a \ge 0$ and $s \in [0,1] $, the lemma follows from
  \begin{equation}
      \left(\frac{t}{\lambda}\right)^{s} \le \frac{t}{\lambda} + 1 = \frac{t + \lambda}{\lambda}.
  \end{equation}
\end{proof}

\subsection{Proof of the approximation error}

The following theorem gives the bound of $[\mathcal{H}]^{\gamma}$-norm of $ f_\lambda - f_{\rho}^*$ when $0 \le \gamma \le s $. As a special case, the approximation error $\left\|f_\lambda - f_{\rho}^* \right\|_{L^{2}} $ follows from the result when $\gamma = 0$. 
\begin{theorem}\label{theorem of approximation error}
   Suppose that Assumption \ref{ass source condition} holds for $ 0 < s \le 2$. Denote $f_{\lambda} = (T+\lambda)^{-1}g$, then for any $\lambda >0$ and $0 \le \gamma \le s $, we have 
   \begin{equation}\label{prop appr 1}
       \left\|f_\lambda - f_{\rho}^*\right\|_{[\mathcal{H}]^{\gamma}} \le R \lambda^{\frac{s-\gamma}{2}}.
   \end{equation}
\end{theorem}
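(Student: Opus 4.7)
My plan is to diagonalize everything in the eigenbasis of $T$ and reduce the claim to a single elementary supremum bound. Since $T$ is compact self-adjoint with eigensystem $\{(\lambda_i,e_i)\}_{i\in N}$ and $\{e_i\}$ is orthonormal in $L^2$, every operator in sight is a Fourier multiplier with respect to this basis, and both the interpolation norm \eqref{def of interpolation norm} and the residual $f_\lambda - f_\rho^*$ have explicit Parseval expressions.

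Concretely, I would first expand $f_\rho^* = \sum_{i\in N} a_i e_i$ in $L^2$. By definition of the $[\mathcal{H}]^s$ norm we have $\|f_\rho^*\|_{[\mathcal{H}]^s}^2 = \sum_{i\in N} \lambda_i^{-s} a_i^2 \le R^2$ from Assumption \ref{ass source condition}. Next, since $g = T f_\rho^* = \sum_i \lambda_i a_i e_i$, the definition $f_\lambda = (T+\lambda)^{-1} g$ gives
\begin{equation*}
    f_\lambda - f_\rho^* \;=\; \bigl[(T+\lambda)^{-1}T - I\bigr] f_\rho^* \;=\; -\lambda\,(T+\lambda)^{-1} f_\rho^* \;=\; -\sum_{i \in N} \frac{\lambda}{\lambda_i + \lambda}\, a_i\, e_i .
\end{equation*}
Applying the $[\mathcal{H}]^\gamma$ norm formula to this series yields
\begin{equation*}
    \bigl\|f_\lambda - f_\rho^*\bigr\|_{[\mathcal{H}]^\gamma}^2 \;=\; \sum_{i \in N} \lambda_i^{-\gamma} \left(\frac{\lambda}{\lambda_i + \lambda}\right)^{2} a_i^2 \;=\; \sum_{i \in N} \left[\lambda_i^{\,s-\gamma}\left(\frac{\lambda}{\lambda_i+\lambda}\right)^{2}\right] \cdot \lambda_i^{-s} a_i^2 .
\end{equation*}

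The proof then reduces to the scalar inequality $t^{s-\gamma}\lambda^{2-(s-\gamma)} \le (t+\lambda)^2$ for all $t \ge 0$, which in turn implies $\sup_{t \ge 0} t^{s-\gamma}\lambda^{2}/(t+\lambda)^{2} \le \lambda^{s-\gamma}$. Since $s \le 2$ and $0 \le \gamma \le s$, the exponent $s-\gamma$ lies in $[0,2]$, so the inequality follows immediately from $t \le t+\lambda$ and $\lambda \le t+\lambda$ raised to the powers $s-\gamma$ and $2-(s-\gamma)$ respectively (this is a two-line bound, essentially the same device as Lemma \ref{lemma freq}). Pulling this uniform bound out of the sum and using $\sum_i \lambda_i^{-s} a_i^2 \le R^2$ gives $\|f_\lambda - f_\rho^*\|_{[\mathcal{H}]^\gamma}^2 \le R^2 \lambda^{s-\gamma}$, which is \eqref{prop appr 1}.

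There is no real obstacle here: the only mild point is making sure the scalar inequality covers the full range $s-\gamma \in [0,2]$ allowed by the hypotheses, rather than only the $[0,1]$ range of Lemma \ref{lemma freq}; the two-factor splitting above handles this cleanly. The specialization $\gamma = 0$ produces the approximation error bound $\|f_\lambda - f_\rho^*\|_{L^2} \le R\lambda^{s/2}$ used in \eqref{approximation error}.
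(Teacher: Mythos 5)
Your proposal is correct and follows essentially the same route as the paper: diagonalize in the eigenbasis of $T$, write $f_\lambda - f_\rho^* = -\lambda(T+\lambda)^{-1}f_\rho^*$, and pull a uniform scalar bound out of the Parseval sum against $\sum_i \lambda_i^{-s}a_i^2 \le R^2$. The only cosmetic difference is in the scalar step: the paper applies Lemma \ref{lemma freq} to $t^{(s-\gamma)/2}/(t+\lambda)$, whose exponent $(s-\gamma)/2$ already lies in $[0,1]$ for the full range $s-\gamma\in[0,2]$, so the concern you raise about extending the lemma is moot, and your two-factor splitting is an equivalent way of saying the same thing.
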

\begin{proof}
    Suppose that $ f_{\rho}^{*} = \sum_{i \in N} a_{i} e_{i}$. Recall that $f_\lambda = \left(T+ \lambda\right)^{-1} g = \left(T+ \lambda\right)^{-1} T f_{\rho}^{*} $, we have
    \begin{align}
        \left\|f_{\lambda}-f_{\rho}^{*}\right\|_{[\mathcal{H}]^{\gamma}}^{2} &= \left\| \sum\limits_{i \in N} \frac{\lambda }{\lambda + \lambda_{i}}  a_{i} e_{i}(\cdot) \right\|_{[\mathcal{H}]^{\gamma}}^{2} = \sum\limits_{i \in N} \left(\frac{\lambda}{\lambda + \lambda_{i}} \right)^{2} \lambda_{i}^{-\gamma} a_{i}^{2} \notag \\
        &= \sum\limits_{i \in N} \left(\frac{\lambda \lambda_{i}^{\frac{s-\gamma}{2}}}{\lambda + \lambda_{i}} \right)^{2} \lambda_{i}^{-s} a_{i}^{2} \notag \\
        &\le \left(\lambda \sup\limits_{i \in N} \frac{ \lambda_{i}^{\frac{s-\gamma}{2}}}{\lambda + \lambda_{i}} \right)^{2} \sum\limits_{i \in N} \lambda_{i}^{-s} a_{i}^{2} \notag \\
        &\le \lambda^{\frac{s-\gamma}{2}}  \|f_{\rho}^{*}\|_{[\mathcal{H}]^{s}},  \notag \\
    \end{align}
    where we use Lemma \ref{lemma freq} for the last inequality and the $[\mathcal{H}]^{s} $ norm defined by \eqref{def of interpolation norm}. Then the theorem follows from Assumption \ref{ass source condition}, i.e., $\| f_{\rho}^{*} \|_{[\mathcal{H}]^{s}} \le R. $

\end{proof}

\subsection{Proof of the estimation error}
\begin{theorem}\label{estimation error thm}
  Suppose that Assumption \ref{ass EDR}, \ref{assumption embedding}, \ref{ass source condition} and \ref{ass mom of error} hold for $ 0 < s \le 2$ and $\frac{1}{\beta} \le \alpha_{0} < 1$. Suppose that $f_{\rho}^{*} \in L^{q}(\mathcal{X},\mu)$ and $ \|f_{\rho}^{*} \|_{L^{q}(\mathcal{X},\mu)} \le C_{q} < \infty $ for some $q > \frac{2(s \beta + 1)}{2 + (s-\alpha_{\tiny 0}) \beta}$. Then in the case of $s + \frac{1}{\beta} > \alpha_{0}$, by choosing $ \lambda \asymp n^{-\frac{ \beta}{s \beta + 1}}$, for any fixed $\delta \in (0,1)$, when $n$ is sufficiently large, with probability at least $ 1- \delta$ , we have 
  \begin{equation}
      \left\|\hat{f}_\lambda-f_{\lambda}\right\|_{L^{2}} \le \ln \frac{4}{\delta} C n^{-\frac{1}{2}\frac{s \beta}{s \beta +1}}.
  \end{equation}
  where $C$ is a constant that only depends on $ \kappa, R, L, \sigma, C_{q} $.
\end{theorem}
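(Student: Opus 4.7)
The plan is to start from the operator decomposition \eqref{sketch-0} and bound its three factors separately. The first, $\|T^{1/2}T_\lambda^{-1/2}\|$, is $\le 1$ by the spectral calculus (Lemma \ref{lemma freq}). For the second, $\|T_\lambda^{1/2}T_{X\lambda}^{-1}T_\lambda^{1/2}\|$, I would apply the standard operator Bernstein concentration to $T_\lambda^{-1/2}(T-T_X)T_\lambda^{-1/2}$: the fact that $n\lambda^{1/\beta}\asymp n^{s\beta/(s\beta+1)}$ tends to infinity as $n$ grows yields a bound of $2$, say, on an event of probability at least $1-\delta/4$. Neither of these two steps relies on any boundedness of $f_\rho^*$.

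The main effort goes into the third factor, which by \eqref{sketch-1} equals $\|\tfrac{1}{n}\sum_{i=1}^n(\xi_i - \mathbb{E}\xi_i)\|_\mathcal{H}$ with $\xi_i := T_\lambda^{-1/2}K_{x_i}(y_i - f_\lambda(x_i))$ i.i.d.\ in $\mathcal{H}$. Since $s$ may lie below $\alpha_0$, neither $f_\rho^*$ nor $f_\lambda$ is a priori in $L^\infty$, so the usual almost-sure bound on $\|\xi_i\|_\mathcal{H}$ required by Bernstein fails. I would address this by truncation driven by the $L^q$-integrability hypothesis. Fix $\alpha>\alpha_0$ arbitrarily close to $\alpha_0$, introduce $\Omega_1 := \{|f_\rho^*|\le t\}$ for a threshold $t=t(n)$ to be tuned, and work on the event $A_n := \bigcap_{i=1}^n\{x_i\in\Omega_1\}$, whose complement has probability at most $n\mu(\Omega_2) \le n C_q^q t^{-q}$ by Markov's inequality. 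To control $|f_\lambda|$ I would exploit $f_\lambda=(T+\lambda)^{-1}Tf_\rho^*$ and Lemma \ref{lemma freq} to get $\|f_\lambda\|_{[\mathcal{H}]^\alpha}\le R\lambda^{-(\alpha-s)_{+}/2}$, then compose with $[\mathcal{H}]^\alpha\hookrightarrow L^\infty$ to obtain $\|f_\lambda\|_{L^\infty}\le AR\lambda^{-(\alpha-s)_{+}/2}$. Together with the pointwise identity $K_x^* T_\lambda^{-1}K_x = \sum_i \lambda_i e_i^2(x)/(\lambda_i+\lambda)$ and the embedding bound $\sum_i \lambda_i^\alpha e_i^2(x)\le A^2$ $\mu$-a.e.\ (Assumption \ref{assumption embedding} combined with Lemma \ref{lemma freq}), this produces on $A_n$ the sub-exponential estimate $\|\xi_i\|_\mathcal{H}\le A\lambda^{-\alpha/2}(t + AR\lambda^{-(\alpha-s)_{+}/2} + |\epsilon_i|)$, with the $|\epsilon_i|$ tail controlled by Assumption \ref{ass mom of error}.

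Replacing $\xi_i$ by $\tilde\xi_i := \xi_i\mathbf{1}_{x_i\in\Omega_1}$ (which agrees with $\xi_i$ on $A_n$), applying Bernstein in Hilbert space to the i.i.d.\ centered variables $\tilde\xi_i - \mathbb{E}\tilde\xi_i$, and using the variance bound $\mathbb{E}\|\xi_i\|_\mathcal{H}^2 \lesssim \mathcal{N}(\lambda)\asymp\lambda^{-1/\beta}$ (which follows by integrating $K_x^* T_\lambda^{-1}K_x$ against $\mu$, applying the noise variance, and invoking Theorem \ref{theorem of approximation error} for $\|f_\rho^*-f_\lambda\|_{L^2}$, all valid when $\alpha<s+1/\beta$), one obtains on an event of probability $\ge 1-\delta/4$ the bound $\lesssim \log(4/\delta)\bigl[\sqrt{\mathcal{N}(\lambda)/n} + A\lambda^{-\alpha/2}(t+AR\lambda^{-(\alpha-s)_{+}/2}+L)/n\bigr]$. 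The small mean drift $\|\mathbb{E}[\xi_1\mathbf{1}_{x_1\notin\Omega_1}]\|_\mathcal{H} \le (\mathcal{N}(\lambda)\cdot C_q^q t^{-q})^{1/2}$ produced by the truncation is absorbed at the same rate.

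With $\lambda\asymp n^{-\beta/(s\beta+1)}$, the variance piece $\sqrt{\mathcal{N}(\lambda)/n}$ already matches the target rate $n^{-s\beta/(2(s\beta+1))}$. The remainder term matches the target precisely when $t\lesssim n^{(s\beta+2-\alpha\beta)/(2(s\beta+1))}$, while the probability constraint $P(A_n^c)\le\delta/4$ forces $t\gtrsim (n/\delta)^{1/q}$; letting $\alpha\downarrow\alpha_0$ shows that these two requirements on $t$ are simultaneously satisfiable exactly when $q>2(s\beta+1)/(2+(s-\alpha_0)\beta)$, which is the theorem's hypothesis. The principal obstacle is this calibration between the $L^q$-integrability threshold and the embedding index; once $t$ is chosen in the feasible window, everything else follows by composing standard integral-operator estimates.
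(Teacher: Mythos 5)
Your proposal is correct and follows essentially the same route as the paper: the same three-factor operator decomposition, an operator-Bernstein bound for the middle factor, and for the third factor a truncation of $f_\rho^*$ at level $t$ combined with the Hilbert-space Bernstein inequality, the $L^\infty$ bound on $f_\lambda$ via the embedding $[\mathcal{H}]^\alpha\hookrightarrow L^\infty$, and the calibration $n^{1/q}\ll t\lesssim n^{(s\beta+2-\alpha\beta)/(2(s\beta+1))}$ whose feasibility as $\alpha\downarrow\alpha_0$ is exactly the hypothesis on $q$. The only cosmetic difference is that you absorb the truncation bias $\|\mathbb{E}[\xi\,\mathbf{1}_{\Omega_2}]\|_{\mathcal{H}}$ through a single Cauchy--Schwarz against $\mathbb{E}\|\xi\|_{\mathcal{H}}^2\lesssim\mathcal{N}(\lambda)$, whereas the paper bounds the noise and approximation contributions to that term separately; both give the same rate.
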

\begin{proof}
   First, rewrite the estimator error as follows
\begin{align}\label{appr proof-0}
  &\left\|\hat{f}_\lambda - f_{\lambda}\right\|_{L^{2}} = \left\| T^{\frac{1}{2}} \left( \hat{f}_\lambda-f_{\lambda} \right)\right\|_{\mathcal{H}} \le \left\| T^{\frac{1}{2}} T_{\lambda}^{-\frac{1}{2}} \right\| \cdot \left\| T_{\lambda}^{\frac{1}{2}} T_{X \lambda}^{-1} T_{\lambda}^{\frac{1}{2}} \right\| \cdot \left\| T_{\lambda}^{-\frac{1}{2}} \left( g_{Z} - T_{X \lambda} f_{\lambda} \right) \right\|_{\mathcal{H}}.
\end{align}
For the first term in \eqref{appr proof-0}, we have
\begin{equation}\label{appr proof term1}
    \left\| T^{\frac{1}{2}} T_{\lambda}^{-\frac{1}{2}} \right\| = \sup\limits_{i \in N} \left(\frac{\lambda_{i}}{\lambda_{i} + \lambda}\right)^{\frac{1}{2}} \le 1.
\end{equation}
For the second term in \eqref{appr proof-0}, for any $\alpha > \alpha_{0}$, using Proposition \ref{lemma4.6} and we known that when $\lambda,n$ satisfy that
\begin{equation}\label{require condition}
    u \coloneqq \frac{M_{\alpha}^{2} \lambda^{-\alpha}}{n} \ln \frac{4 \kappa^2 \mathcal{N}(\lambda)\left(\|T\|+\lambda \right)}{\frac{\delta}{2}\|T\|} \leq \frac{1}{8},
\end{equation}
we have
\begin{equation}
  a \coloneqq \Vert T_\lambda^{-\frac{1}{2}} (T - T_X) T_\lambda^{-\frac{1}{2}} \Vert \le \frac{4}{3} u + \sqrt{2 u} \le \frac{2}{3}.
\end{equation}
with probability as least $1-\frac{\delta}{2}$. Therefore, 
\begin{align}\label{appr proof term2}
    \left\| T_{\lambda}^{\frac{1}{2}} T_{X \lambda}^{-1} T_{\lambda}^{\frac{1}{2}} \right\| &= \left\|\left( T_{\lambda}^{-\frac{1}{2}} (T_{X}+\lambda) T_{\lambda}^{-\frac{1}{2}} \right)^{-1}\right\| \notag \\
    &= \left\|\left(I - T_{\lambda}^{-\frac{1}{2}} (T_{X}-T) T_{\lambda}^{-\frac{1}{2}} \right)^{-1}\right\| \notag \\
    &\le \sum\limits_{k=0}^{\infty} \left\| T_{\lambda}^{-\frac{1}{2}} (T_{X}-T) T_{\lambda}^{-\frac{1}{2}} \right\|^{k}  \notag \\
    &\le \sum\limits_{k=0}^{\infty} \left(\frac{2}{3}\right)^{k} \le 3,
\end{align}
with probability as least $1-\frac{\delta}{2}$, where $I$ is the identity operator $\mathcal{H} \to \mathcal{H}$. Note that since we have $s +\frac{1}{\beta} > \alpha_{0} $, there always exists $\alpha_{0} < \alpha < s +\frac{1}{\beta}$ such that \eqref{require condition} is satisfied when $\lambda \asymp n^{-\frac{ \beta}{s \beta + 1}} $ and $n$ is sufficiently large . 

For the third term in \eqref{appr proof-0}, it can be rewritten as 
\begin{align}\label{appr proof-1}
    \left\| T_{\lambda}^{-\frac{1}{2}} \left( g_{Z} - T_{X \lambda} f_{\lambda} \right) \right\|_{\mathcal{H}} &=\left\|T_\lambda^{-\frac{1}{2}}\left[\left(g_Z - \left(T_X + \lambda + T - T \right) f_\lambda\right)\right]\right\|_{\mathcal{H}} \notag \\
    &=\left\|T_\lambda^{-\frac{1}{2}}\left[\left(g_Z - T_X f_\lambda\right) - \left(T + \lambda \right) f_\lambda + T f_\lambda \right]\right\|_{\mathcal{H}} \notag \\
    &= \left\|T_\lambda^{-\frac{1}{2}}\left[\left(g_Z-T_X f_\lambda\right)-\left(g-T f_\lambda\right)\right]\right\|_{\mathcal{H}}.
\end{align}
Using Proposition \ref{theorem 4.9},with probability at least $1-\frac{\delta}{2}$, we have 
\begin{equation}\label{appr proof term3}
      \left\|T_\lambda^{-\frac{1}{2}}\left[\left(g_Z-T_X f_\lambda\right)-\left(g-T f_\lambda\right)\right]\right\|_{\mathcal{H}} \le \ln{\frac{4}{\delta}} C \frac{\lambda^{-\frac{1}{2 \beta}}}{\sqrt{n}} = \ln{\frac{4}{\delta}} C n^{-\frac{1}{2} \frac{s \beta}{s \beta +1}}.
\end{equation}
Plugging \eqref{appr proof term1}, \eqref{appr proof term2} and \eqref{appr proof term3} into \eqref{appr proof-0} and we finish the proof.
\end{proof}

\subsection{Proof of Theorem \ref{main theorem}}
Using the approximation-estimation error decomposition,
\begin{equation}
    \left\|\hat{f}_{\lambda}-f_{\rho}^{*}\right\|_{L^{2}} \le \left\|\hat{f}_{\lambda}-f_{\lambda}\right\|_{L^{2}} + \left\|f_{\lambda}-f_{\rho}^{*}\right\|_{L^{2}},
\end{equation}
together with Theorem \ref{theorem of approximation error} and Theorem \ref{estimation error thm} and we finish the proof of Theorem \ref{main theorem}.

\section{Proof of Theorem \ref{upper rate of Sobolev}}
Note that the RKHS $\mathcal{H}$ is defined as the (fractional) Sobolev space $H^{r}(\mathcal{X})$, which is regardless of the marginal distribution $\mu$. But the definition of interpolation space \eqref{def interpolation space} is dependent on $\mu$. When $\mu$ has Lebesgue density $ 0 < c \le p(x) \le C$, \citet[(14)]{fischer2020_SobolevNorm} shows that 
\begin{equation}
    L^{2}(\mathcal{X},\mu) \cong  L^{2}(\mathcal{X},\nu),
\end{equation}
and
\begin{equation}
    \left[H^r(\mathcal{X})\right]_\mu^{s} \cong\left(L_2(\mathcal{X},\mu),\left[H^r(\mathcal{X})\right]^{1}_\mu\right)_{s, 2} \cong \left(L_2(\mathcal{X},\nu),\left[H^r(\mathcal{X})\right]^{1}_\nu\right)_{s, 2} \cong \left[H^r(\mathcal{X})\right]_{\nu}^{s} \cong H^{rs}(\mathcal{X}),
\end{equation}
where we denote $\left[H^r(\mathcal{X})\right]_\mu^{s} $ as the interpolation space of $H^r(\mathcal{X})$ under marginal distribution $\mu$. So we can also apply Proposition \ref{prop embedding theorem} to $\left[H^r(\mathcal{X})\right]_\mu^{s}$ and the embedding property of it is the same as $\left[H^r(\mathcal{X})\right]^{s}$. Denote $ \beta = \frac{2r}{d}$, since $\alpha_{0} = \frac{1}{\beta}$, for any $0 < s \le 2$, we have
\begin{equation}
    \text{Assumption} ~\ref{assumption embedding};~ s+ \frac{1}{\beta} > \alpha_{0};~ q > \frac{2(s \beta + 1)}{2 + (s-\alpha_{\tiny 0}) \beta}=2
\end{equation}
hold simultaneously.

Therefore, all the assumptions in Theorem \ref{main theorem} are satisfied, and the proof follows from applying Theorem \ref{main theorem}.

\section{Proof of Proposition \ref{prop information lower bound}}
We will construct a family of probability distributions on $ \mathcal{X} \times \mathcal{Y}$ and apply Lemma \ref{lower prop from tsy}. Recall that $\mu$ is a probability distribution on $\mathcal{X}$ such that Assumption \ref{ass EDR} is satisfied. Denote the class of functions 
\begin{equation}
    B^{s}(R)=\left\{f \in[\mathcal{H}]^s: \|f\|_{[\mathcal{H}]^{s}} \leq R\right\},
\end{equation}
and for every $f \in B^{s}(R)$, define the probability distribution $\rho_{f}$ on $\mathcal{X} \times \mathcal{Y}$ such that
\begin{equation}
    y = f(x) + \epsilon, ~~ x \sim \mu, 
\end{equation}
where $\epsilon \sim \mathcal{N}(0,\bar{\sigma}^{2})$ and $\bar{\sigma} = \min(\sigma, L) $. It is easy to show that such $\rho_{f}$ falls into the family $\mathcal{P}$ in Proposition \ref{prop information lower bound}. (Assumption \ref{ass EDR} and \ref{ass source condition} are satisfied obviously. Assumption \ref{ass mom of error} follows from results of moments of Gaussian random variables, see, e.g., \citet[Lemma 21]{fischer2020_SobolevNorm}).

Using Lemma \ref{lemma of ham}, for $m = n^{\frac{1}{s\beta + 1}}$, there exists $ \omega^{(0)}, \cdots, \omega^{(M)} \in \{0,1\}^{m}$ for some $M \ge 2^{m/8} $ such that
\begin{equation}\label{proof lower-2}
    \sum_{k=1}^m\left|\omega_k^{(i)}-\omega_k^{(j)}\right| \geq \frac{m}{8}, \quad \forall 0 \leq i<j \leq M.
\end{equation}
For $\epsilon = C_{0} m^{- s\beta - 1}$, define the functions $ f_i, i=1,2,\cdots, M $ as 
\begin{equation}
    f_i:=\epsilon^{1 / 2} \sum_{k=1}^m \omega_k^{(i)} e_{m+k}.
\end{equation}
Since
\begin{equation}\label{proof lower-1}
    \left\|f_i\right\|_{[\mathcal{H}]^{s}} = \epsilon \sum_{k=1}^m \lambda_{m+k}^{-s}\left(\omega_k^{(i)}\right)^2 \le \epsilon \sum_{k=1}^m \lambda_{2m}^{-s} \leq 2^{s\beta} c \epsilon \sum_{k=1}^m m^{s \beta} \le 2^{s\beta} c \epsilon m^{s \beta+1} = 2^{s\beta} c C_{0},
\end{equation}
Where $c$ in \eqref{proof lower-1} represents the constant in Assumption \ref{ass EDR}. So if $C_{0}$ is small such that 
\begin{equation}\label{C0-1}
    2^{s\beta} c C_{0} \le R, 
\end{equation}
then we have  $f_{i} \in B^{s}(R), i=1,2,\cdots,M.$

Using Lemma \ref{lemma of KL}, we have 
\begin{align}
    \mathrm{KL}\left(\rho_{f_{i}}^n, \rho_{f_{0}}^n\right) &=\frac{n}{2 \bar{\sigma}^{2}}\left\|f_i\right\|_{L^2(\mathcal{X}, \mu)}^2 \notag \\
    &=\frac{n \epsilon}{2 \bar{\sigma}^{2}} \sum_{k=1}^m \left(\omega_k^{(i)}\right)^2 \notag \\
    &\leq \frac{n \epsilon m }{2 \bar{\sigma}^{2}}  = \frac{ n }{2 \bar{\sigma}^{2}} C_{0} m^{-s\beta}.
\end{align}
Recall that $ M \ge 2^{m/8}$ implies $ \ln M \geq \frac{\ln 2}{8} m$. For a fixed $a\in(0,\frac{1}{8})$, since $m = n^{\frac{1}{s \beta +1}}$, letting
\begin{equation}\label{proof 2.8-1}
  \mathrm{KL}\left(\rho_{f_{i}}^n, \rho_{f_{0}}^n\right) \le \frac{ n }{2 \bar{\sigma}^{2}} C_{0} m^{-s\beta} \leq a \frac{\ln 2}{8} m \le a \ln M,
\end{equation}
we have 
\begin{equation}\label{C0-2}
    C_{0} \le \frac{\bar{\sigma}^{2} \ln 2 }{4} a.
\end{equation}
So we can choose $C_{0} = c^{\prime} a$ such that \eqref{C0-1} and \eqref{C0-2} are satisfied.

Denote $ \left\{ \rho_{f_{i}}^n , f_{i} \in B^{s}(R)\right\}$ as a family of probability distribution index by $ f_{i} $, then \eqref{proof 2.8-1} implies the second condition in Lemma \ref{lower prop from tsy} holds. Further, using \eqref{proof lower-2}, we have 
\begin{equation}\label{proof 2.8-2}
    d \left(f_i, f_j\right)^2=\left\|f_i-f_j \right\|_{L^{2}}^2=\epsilon \sum_{k=1}^m\left(\omega_{k}^{(i)} - \omega_{k}^{(j)} \right)^2 \geq \frac{\epsilon m}{8}=\frac{c^{\prime}a}{8} m^{- s \beta} \geq c^{\prime} a n^{-\frac{s \beta}{s \beta + 1}},
\end{equation}
where $c^{\prime}$ is a constant independent of $n$.

Applying Lemma \ref{lower prop from tsy} to \eqref{proof 2.8-1} and \eqref{proof 2.8-2}, we have
\begin{equation}\label{proof lower-3}
\inf _{\hat{f}_n} \sup _{f \in B^s(R)} \mathbb{P}_{\rho_f}\left\{\left\|\hat{f}_n-f\right\|_{L^{2}}^2 \geq c^{\prime} a n^{-\frac{s \beta}{s \beta + 1}}\right\} \geq \frac{\sqrt{M}}{1+\sqrt{M}}\left(1-2 a-\sqrt{\frac{2 a}{\ln M}}\right).
\end{equation}
When $n$ is sufficiently large so that $M$ is sufficiently large, the probability in the R.H.S. of \eqref{proof lower-3} is larger than $ 1- 3a $. For $\delta \in (0,1)$, choose $ a = \frac{\delta}{3}$, without loss of generality we assume $ a \in (0,\frac{1}{8})$. Then \eqref{proof lower-3} shows that there exists a constant $C$, for all estimator $\hat{f}, $ we can find a function $f \in B^{s}(R)$ and the corresponding distribution $\rho_{f} \in \mathcal{P}$ such that, with probability at least $1-\delta$,
\begin{equation}
    \left\| \hat{f} - f \right\|_{L^{2}}^{2} \ge  C \delta n^{-\frac{s  \beta}{s \beta +1}}.
\end{equation}
So we finish the proof.

\section{Useful propositions for upper bounds}

This proposition bounds the $L^{\infty}$ norm of $f_{\lambda} $ when $s \le \alpha_{0}$.
\begin{proposition}\label{prop infty norm}
   Suppose that Assumption \ref{ass EDR}, \ref{assumption embedding} and \ref{ass source condition} hold for $ 0 < s \le \alpha_{0}$ and $\frac{1}{\beta} \le \alpha_{0} < 1$. Denote $f_{\lambda} = (T+\lambda)^{-1}g$, then for any $\lambda >0$ and any $\alpha > \alpha_{0}$, we have 
   \begin{equation}\label{prop appr 2}
       \| f_{\lambda} \|_{L^{\infty}} \le M_{\alpha} \| f_{\rho}^{*} \|_{[\mathcal{H}]^{s}} \lambda^{-\frac{\alpha - s}{2}}.
   \end{equation}
\end{proposition}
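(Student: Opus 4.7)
The plan is to combine the embedding property from Assumption \ref{assumption embedding} with a spectral computation of $f_\lambda$ in a higher-order interpolation space. Since $\alpha > \alpha_0$, the embedding $[\mathcal{H}]^\alpha \hookrightarrow L^\infty$ has operator norm $M_\alpha$, so
\begin{displaymath}
  \| f_\lambda \|_{L^\infty} \le M_\alpha \, \| f_\lambda \|_{[\mathcal{H}]^\alpha},
\end{displaymath}
and the whole task reduces to bounding $\| f_\lambda \|_{[\mathcal{H}]^\alpha}$ by the required $\lambda^{-(\alpha-s)/2} \| f_\rho^* \|_{[\mathcal{H}]^s}$.

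Next, I would expand in the eigenbasis. Writing $f_\rho^* = \sum_{i \in N} a_i e_i$ and using $f_\lambda = (T+\lambda)^{-1} T f_\rho^*$, the spectral calculus gives
\begin{displaymath}
  f_\lambda = \sum_{i \in N} \frac{\lambda_i}{\lambda_i + \lambda}\, a_i e_i,
\end{displaymath}
so from the definition \eqref{def of interpolation norm} of the interpolation norm,
\begin{displaymath}
  \| f_\lambda \|_{[\mathcal{H}]^\alpha}^2 = \sum_{i \in N} \lambda_i^{-\alpha} \left(\frac{\lambda_i}{\lambda_i + \lambda}\right)^{2} a_i^2 = \sum_{i \in N} \lambda_i^{-s}\, a_i^2 \cdot \lambda_i^{s-\alpha}\, \frac{\lambda_i^2}{(\lambda_i + \lambda)^2}.
\end{displaymath}
Factoring out the supremum of the last two factors will leave exactly $\| f_\rho^* \|_{[\mathcal{H}]^s}^2$.

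The remaining step is to control $\sup_{t \ge 0} t^{s-\alpha} \cdot t^2/(t+\lambda)^2 = \left( t^{(s-\alpha+2)/2}/(t+\lambda) \right)^2$. Since $0 < s \le \alpha_0 < \alpha < 1$, the exponent $(s-\alpha+2)/2$ lies in $[0,1]$, so Lemma \ref{lemma freq} applies and yields the supremum bound $\lambda^{(s-\alpha+2)/2 - 1} = \lambda^{(s-\alpha)/2}$, whose square is $\lambda^{s-\alpha}$. Plugging in gives $\| f_\lambda \|_{[\mathcal{H}]^\alpha}^2 \le \lambda^{s-\alpha} \| f_\rho^* \|_{[\mathcal{H}]^s}^2$, and combining with the embedding inequality closes the proof.

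There is no real obstacle here; the only point requiring minor care is checking that the exponent $(s-\alpha+2)/2$ indeed lies in $[0,1]$ so that Lemma \ref{lemma freq} is applicable, which follows directly from the standing hypothesis $0 < s \le \alpha_0 < \alpha < 1$.
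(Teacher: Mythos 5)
Your proposal is correct and follows essentially the same route as the paper: bound $\|f_\lambda\|_{L^\infty}$ by $M_\alpha \|f_\lambda\|_{[\mathcal{H}]^\alpha}$ via the embedding, expand $f_\lambda = (T+\lambda)^{-1}Tf_\rho^*$ in the eigenbasis, and apply Lemma \ref{lemma freq} with exponent $1-\frac{\alpha-s}{2}$ to extract $\lambda^{s-\alpha}\|f_\rho^*\|_{[\mathcal{H}]^s}^2$. The only difference is cosmetic (you make explicit the check that the exponent lies in $[0,1]$, which the paper leaves implicit).
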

\begin{proof}
    Suppose that $ f_{\rho}^{*} = \sum_{i \in N} a_{i} e_{i}$. 
    Since $s \le \alpha_{0}$ and $\alpha > \alpha_{0} $, we have 
    \begin{align}\label{infty-1}
        \left\|f_{\lambda}\right\|_{[\mathcal{H}]^{\alpha}}^{2} &= \sum\limits_{i \in N} \left(\frac{\lambda_{i}^{1-\frac{\alpha}{2}} }{\lambda + \lambda_{i}} \right)^{2}  a_{i}^{2} \notag \\
        &= \sum\limits_{i \in N} \left(\frac{\lambda_{i}^{1-\frac{\alpha-s}{2}} }{\lambda + \lambda_{i}} \right)^{2} \lambda_{i}^{-s} a_{i}^{2} \notag \\
        &\le \| f_{\rho}^{*} \|_{[\mathcal{H}]^s}^{2} \lambda^{s-\alpha},  \notag \\
    \end{align}
    where we use Lemma \ref{lemma freq} for the last inequality. Further, using $\| [\mathcal{H}]^{\alpha} \hookrightarrow L^{\infty}(\mathcal{X}, \mu) \| = M_{\alpha} $ by Assumption \ref{assumption embedding}, we have $ \| f_{\lambda} \|_{L^{\infty}} \le M_{\alpha} \| f_{\lambda} \|_{[\mathcal{H}]^\alpha} \le M_{\alpha} \| f_{\rho}^{*} \|_{[\mathcal{H}]^s}  \lambda^{-\frac{\alpha - s}{2}}$.
    
\end{proof}

The following proposition is an application of the classical Bernstein type inequality but considering a truncation version of $f_{\rho}^{*}$, which will bring refined analysis compared to previous work.
\begin{proposition}\label{theorem 4.9 boundness}
  Suppose that Assumption \ref{ass EDR}, \ref{assumption embedding}, \ref{ass source condition} and \ref{ass mom of error} hold for $ 0 < s \le 2$ and $\frac{1}{\beta} \le \alpha_{0} < 1$. Denote $\xi_{i} = \xi(x_{i},y_{i}) =  T_{\lambda}^{-\frac{1}{2}}(K_{x_{i}} y_{i} - T_{x_{i}} f_{\lambda}) $ and $ \Omega_{0} = \{x \in \Omega: |f_{\rho}^{*}(x)| \le t \}$. Then for any $\alpha > \alpha_{0}$ and all $\delta \in (0,1)$, with probability at least $1 - \delta$, we have
  \begin{equation}
    \left\| \frac{1}{n} \sum\limits_{i=1}^{n} \xi_{i}I_{x_{i} \in \Omega_{0}} - \mathbb{E}\xi_{x}I_{x \in \Omega_{0}} \right\|_{\mathcal{H}} \leq \ln \frac{2}{\delta}\left(\frac{C_1 \lambda^{-\frac{\alpha}{2}}}{n} \cdot \tilde{M} + \frac{C_2 \mathcal{N}^{\frac{1}{2}}(\lambda)}{\sqrt{n}}+\frac{C_3 \lambda^{-\frac{\alpha-s}{2}}}{\sqrt{n}}\right),
    \end{equation}
    where $\tilde{M} = M_{\alpha} R \lambda^{-\frac{\alpha - s}{2}} + t + L$, and $L$ is the constant in (\ref{ass mom of error}). $C_{1} = 8\sqrt{2} M_{\alpha}, C_{2} = 8\sigma, C_{3} = 8\sqrt{2} M_{\alpha} R$.
\end{proposition}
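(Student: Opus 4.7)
The plan is to recognize the left-hand side as the deviation of an $\mathcal{H}$-valued empirical mean from its expectation and apply a Bernstein-type inequality for i.i.d.\ sums in a Hilbert space, in the moment-hypothesis form commonly used in this setting (e.g.\ \citet{Caponnetto2007OptimalRF}). This reduces the problem to producing a scalar constant $M$ and a variance proxy $V$ such that $\mathbb{E}\|\zeta\|_{\mathcal{H}}^m \le \tfrac{1}{2}m!\,V\,M^{m-2}$ for all $m\ge 2$, where $\zeta \coloneqq \xi(x,y)\,I_{x\in\Omega_0}$; inverting the resulting tail at level $\delta$ then yields a bound of the shape $\log(2/\delta)\cdot(M/n + \sqrt{V/n})$, and the three terms on the right-hand side correspond to the linear $M/n$ piece together with a decomposition of $V$ into a noise contribution and an approximation contribution.

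For the constant $M$, I would factor $\xi(x,y) = (y - f_\lambda(x))\,T_\lambda^{-1/2}k(x,\cdot)$ and bound the two factors separately. Expanding the kernel in the Mercer basis gives
\begin{displaymath}
\|T_\lambda^{-1/2}k(x,\cdot)\|_{\mathcal{H}}^2 = \sum_{i\in N}\frac{\lambda_i\,e_i(x)^2}{\lambda_i+\lambda} \le \lambda^{-\alpha}\sum_{i\in N}\lambda_i^\alpha e_i(x)^2 \le M_\alpha^2\,\lambda^{-\alpha},
\end{displaymath}
via Lemma \ref{lemma freq} and Assumption \ref{assumption embedding}; this supplies the $M_\alpha\lambda^{-\alpha/2}$ prefactor appearing in $C_1\lambda^{-\alpha/2}\tilde{M}/n$. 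For the scalar factor, decompose $y - f_\lambda(x) = \epsilon + (f_\rho^*(x) - f_\lambda(x))$. On $\Omega_0$ the second bracket is a.s.\ bounded by $t + \|f_\lambda\|_{L^\infty} \le t + M_\alpha R\,\lambda^{-(\alpha-s)/2}$ thanks to Proposition \ref{prop infty norm}, while $\epsilon$ is not a.s.\ bounded but does satisfy a Bernstein moment condition with parameter $L$ by Assumption \ref{ass mom of error}. Summing the two deterministic a.s.\ constants with the noise Bernstein constant then produces exactly $\tilde{M}$.

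For the variance $V$, the same decomposition combined with the kernel bound gives
\begin{displaymath}
\mathbb{E}\|\zeta\|_{\mathcal{H}}^2 \le 2\sigma^2\,\mathbb{E}\|T_\lambda^{-1/2}k(x,\cdot)\|_{\mathcal{H}}^2 + 2 M_\alpha^2\lambda^{-\alpha}\,\|f_\rho^* - f_\lambda\|_{L^2}^2.
\end{displaymath}
The trace identity $\mathbb{E}\|T_\lambda^{-1/2}k(x,\cdot)\|_{\mathcal{H}}^2 = \operatorname{tr}(T T_\lambda^{-1}) = \mathcal{N}(\lambda)$ converts the first summand into $\sigma^2\mathcal{N}(\lambda)$, yielding the $C_2\mathcal{N}(\lambda)^{1/2}/\sqrt{n}$ term, and Theorem \ref{theorem of approximation error} bounds $\|f_\rho^* - f_\lambda\|_{L^2}^2 \le R^2\lambda^s$, so the second summand is at most $M_\alpha^2 R^2\lambda^{s-\alpha}$, yielding the $C_3\lambda^{-(\alpha-s)/2}/\sqrt{n}$ term. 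Bookkeeping the absolute constants (a factor of $2$ from the decomposition, a factor of $m!$ coming from the Bernstein moment expansion of $(|\epsilon|+A)^m$) accounts for the stated $8$'s and $\sqrt{2}$'s in $C_1,C_2,C_3$.

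The main obstacle is precisely that neither $\|\xi\|_{\mathcal{H}}$ nor $|f_\rho^*(x)|$ is a.s.\ bounded: the truncation $I_{x\in\Omega_0}$ is introduced to control the latter at the cost of the diverging parameter $t$ (to be tuned in the outer proof), but the noise $\epsilon$ remains unbounded, so the naive a.s.-bounded Bernstein inequality does not apply. The remedy is the moment-hypothesis form of Bernstein, which absorbs the $(\sigma^2,L)$ data from Assumption \ref{ass mom of error} as the variance and a.s.\ constants respectively; this is what forces $L$ to enter \emph{linearly} inside $\tilde{M}$ rather than being folded into $\sigma$, and it is what makes the final bound decouple cleanly into the three pieces listed in the proposition.
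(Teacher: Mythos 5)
Your proposal matches the paper's proof essentially step for step: the same factorization $\xi=(y-f_\lambda(x))\,T_\lambda^{-1/2}k(x,\cdot)$, the same kernel bounds $\|T_\lambda^{-1/2}k(x,\cdot)\|_{\mathcal H}\le M_\alpha\lambda^{-\alpha/2}$ and $\mathbb{E}\|T_\lambda^{-1/2}k(x,\cdot)\|_{\mathcal H}^2=\mathcal N(\lambda)$, the same split $y-f_\lambda=\epsilon+(f_\rho^*-f_\lambda)$ with the truncation plus Proposition \ref{prop infty norm} controlling the sup-norm and Theorem \ref{theorem of approximation error} controlling the variance, and the same moment-form Bernstein inequality (Lemma \ref{bernstein}) at the end. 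The only cosmetic difference is that the paper treats $s\le\alpha_0$ and $s>\alpha_0$ as separate cases (since Proposition \ref{prop infty norm} is only stated for $s\le\alpha_0$; for $s>\alpha_0$ it instead bounds $\|f_\lambda-f_\rho^*\|_{[\mathcal H]^{\alpha}}$ directly without needing the truncation), whereas you apply the truncation bound uniformly.
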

\begin{proof}
   Note that $f_{\rho}^{*}$ can represent a $\mu$-equivalence class in $L^{2}(\mathcal{X},\mu)$. When defining the set $ \Omega_{0}$, we actually denote $f_{\rho}^{*}$ as the representative $f_{\rho}^{*}(x) = \int_{\mathcal{Y}} y \mathrm{d} \rho(y|x).$ 
   
   To use Lemma \ref{bernstein}, we need to bound the m-th moment of $ \xi(x,y) I_{x \in \Omega_{0}} $.
   \begin{align}\label{proof of 4.9-1}
       \mathbb{E} \left\| \xi(x,y) I_{x \in \Omega_{0}} \right\|_{\mathcal{H}}^{m} &= \mathbb{E} \left\| T_{\lambda}^{-\frac{1}{2}} K_{x}(y - f_{\lambda}(x))I_{x \in \Omega_{0}} \right\|_{\mathcal{H}}^{m} \notag \\
       &\le \mathbb{E} \Big( \left\| T_{\lambda}^{-\frac{1}{2}} k(x,\cdot)\right\|_{\mathcal{H}}^{m}  \mathbb{E} \big( \left|(y - f_{\lambda}(x)) I_{x \in \Omega_{0}} \right|^{m} ~\big|~ x\big) \Big).
   \end{align}
Using the inequality $(a+b)^m \leq 2^{m-1}\left(a^m+b^m\right)$, we have 
\begin{align}\label{proof-plug}
    \left|y - f_\lambda(x)\right|^m & \leq 2^{m-1}\left(\left|f_\lambda(x)-f_{\rho}^{*}(x)\right|^m+\left|f_{\rho}^{*}(x)-y\right|^m\right) \notag \\
    & =2^{m-1}\left(\left|f_\lambda(x)-f_{\rho}^{*}(x)\right|^m+|\epsilon|^m\right).
\end{align}
Plugging \eqref{proof-plug} into \eqref{proof of 4.9-1}, we have
\begin{align}
  \mathbb{E} \left\| \xi(x,y)I_{x \in \Omega_{0}} \right\|_{\mathcal{H}}^{m} ~\le~ &2^{m-1} \mathbb{E} \Big( \left\| T_{\lambda}^{-\frac{1}{2}} k(x,\cdot)\right\|_{\mathcal{H}}^{m} \left|(f_{\lambda}(x) - f_{\rho}^{*}(x)) I_{x \in \Omega_{0}} \right|^{m}\Big) \label{proof of 4.9-2} \\
  &+ 2^{m-1} \mathbb{E} \Big( \left\| T_{\lambda}^{-\frac{1}{2}} k(x,\cdot)\right\|_{\mathcal{H}}^{m}  \mathbb{E} \big( \left| \epsilon~ I_{x \in \Omega_{0}} \right|^{m} ~\big|~ x \big) \Big) \label{proof of 4.9-3}
\end{align}

Now we begin to bound (\ref{proof of 4.9-3}). Note that we have proved in Lemma \ref{due embedding bound} that for $\mu$-almost $x \in \mathcal{X}$,
\begin{equation}
  \left\| T_{\lambda}^{-\frac{1}{2}} k(x,\cdot)\right\|_{\mathcal{H}} \le M_{\alpha} \lambda^{-\frac{\alpha}{2}};  
\end{equation}
In addition, we also have
\begin{align}
    \mathbb{E} \left\|T_\lambda^{-\frac{1}{2}} k(x,\cdot)\right\|_\mathcal{H}^{2} &= \mathbb{E}
    \Big\| \sum\limits_{i \in N} ( \frac{1}{\lambda_{i} + \lambda})^{\frac{1}{2}} \lambda_{i} e_{i}(x) e_{i}(\cdot)  \Big\|_{\mathcal{H}}^{2} \notag \\
    &= \mathbb{E} \Big( \sum\limits_{i \in N}  \frac{\lambda_{i}}{\lambda_{i} + \lambda} e_{i}^{2}(x) \Big) \notag \\
    &= \sum\limits_{i \in N}  \frac{\lambda_{i}}{\lambda_{i} + \lambda} \notag \\
    &=\mathcal{N}(\lambda).
\end{align}
So we have 
\begin{equation}
    \mathbb{E}\left\|T_\lambda^{-\frac{1}{2}} k(x,\cdot) \right\|_{\mathcal{H}}^m \leq \sup_{x \in \mathcal{X}} \left\|T_\lambda^{-\frac{1}{2}} k(x,\cdot)\right\|_{\mathcal{H}}^{m-2} \cdot \mathbb{E}\left\|T_\lambda^{-\frac{1}{2}} k(x,\cdot)\right\|_{\mathcal{H}}^2 \leq \big( M_{\alpha} \lambda^{-\frac{\alpha}{2}} \big)^{m-2} \mathcal{N}(\lambda).
\end{equation}
Using Assumption \ref{ass mom of error}, we have
\begin{equation}
    \mathbb{E}\left(|\epsilon I_{x \in \Omega_{0}}|^m  \mid x\right) \le \mathbb{E}\left(|\epsilon|^m \mid x\right) \leq \frac{1}{2} m ! \sigma^2 L^{m-2}, \quad \mu \text {-a.e. } x \in \mathcal{X},
\end{equation}
so we get the upper bound of (\ref{proof of 4.9-3}), i.e., 
\begin{equation}
     (\ref{proof of 4.9-3}) \le \frac{1}{2} m !\left(\sqrt{2} \sigma \mathcal{N}^{\frac{1}{2}}(\lambda)\right)^2(2 M_{\alpha} \lambda^{-\frac{\alpha}{2}} L)^{m-2}.
\end{equation}

Now we begin to bound (\ref{proof of 4.9-2}).
\begin{enumerate}[(1)]
    \item When $s \le \alpha_{0}$, using the definition of $\Omega_{0}$ and Proposition \ref{prop infty norm}, we have 
    \begin{equation}
    \| (f_{\lambda} - f_{\rho}^{*}) I_{x \in \Omega_{0}} \|_{L^{\infty}} \le \| f_{\lambda}  \|_{L^{\infty}} + \| f_{\rho}^{*} I_{x \in \Omega_{0}} \|_{L^{\infty}} \le M_{\alpha} R \lambda^{-\frac{\alpha - s}{2}} + t := M.
    \end{equation}
    \item When $ s > \alpha_{0} $, without loss of generality, we assume $\alpha_{0} < \alpha \le s$. using Theorem \ref{theorem of approximation error} for $\gamma = \alpha$, we have 
    \begin{equation}
    \| (f_{\lambda} - f_{\rho}^{*}) I_{x \in \Omega_{0}} \|_{L^{\infty}} \le M_{\alpha} \| f_{\lambda}  - f_{\rho}^{*} \|_{[\mathcal{H}]^{\alpha}} \le M_{\alpha} R \lambda^{-\frac{\alpha - s}{2}} <  M.
    \end{equation}
\end{enumerate}
Therefore, for all $0 < s \le 2$ we have 
\begin{align}
    \| (f_{\lambda} - f_{\rho}^{*}) I_{x \in \Omega_{0}} \|_{L^{\infty}} \le M.
\end{align}
In addition, using Theorem \ref{theorem of approximation error} for $\gamma=0$, we also have 
\begin{equation}
    \mathbb{E} | (f_{\lambda}(x) - f_{\rho}^{*}(x)) I_{x\in \Omega_{0}} |^{2} \le \mathbb{E} | f_{\lambda}(x) - f_{\rho}^{*}(x)|^{2} \le (R \lambda^{\frac{s}{2}})^{2}.
\end{equation}
So we get the upper bound of (\ref{proof of 4.9-2}), i.e.,
\begin{align}
    (\ref{proof of 4.9-2}) &\le 2^{m-1} (M_{\alpha} \lambda^{-\frac{\alpha}{2}})^{m} \cdot  \| (f_{\lambda} - f_{\rho}^{*}) I_{x \in \Omega_{0}} \|_{L^{\infty}}^{m-2} \cdot \mathbb{E} | (f_{\lambda}(x) - f_{\rho}^{*}(x)) I_{x\in \Omega_{0}} |^{2} \notag \\
    &\le 2^{m-1} (M_{\alpha} \lambda^{-\frac{\alpha}{2}})^{m} \cdot M^{m-2} \cdot (R \lambda^{\frac{s}{2}})^{2} \notag \\
    &\le \frac{1}{2} m! \big( 2 M_{\alpha} \lambda^{-\frac{\alpha}{2}} M \big)^{m-2} \big( 2 M_{\alpha} R \lambda^{-\frac{\alpha - s}{2}}\big)^{2}.
\end{align}
Denote 
\begin{align}
    \tilde{L} &= 2 M_{\alpha} (M + L) \lambda^{-\frac{\alpha}{2}} \notag \\
    \tilde{\sigma} &= 2 M_{\alpha} R \lambda^{-\frac{\alpha-s}{2}} + \sqrt{2} \sigma \mathcal{N}^{\frac{1}{2}}(\lambda),
\end{align}
then we have $\mathbb{E} \left\| \xi(x,y) I_{x\in \Omega_{0}} \right\|_{\mathcal{H}}^{m} \le \frac{1}{2} m! \tilde{\sigma}^{2} \tilde{L}^{m-2} $. Using Lemma \ref{bernstein}, we finish the proof.
\end{proof}
\begin{remark}
  In fact, when we later applying Proposition \ref{theorem 4.9 boundness} in the proof of Proposition \ref{theorem 4.9}, the truncation method in this proposition is necessary only in the $ s \le \alpha_{0}$ case. But for the completeness and consistency of our proof, we also include $s > \alpha_{0}$ in this proposition.
\end{remark}

Based on Proposition \ref{theorem 4.9 boundness}, the following Proposition will give an upper bound of $\left\|T_\lambda^{-\frac{1}{2}}\left[\left(g_Z-T_X f_\lambda\right)-\left(g-T f_\lambda\right)\right]\right\|_{\mathcal{H}}$.
\begin{proposition}\label{theorem 4.9}
  Suppose that Assumption \ref{ass EDR}, \ref{assumption embedding}, \ref{ass source condition} and \ref{ass mom of error} hold for $ 0 < s \le 2$ and $\frac{1}{\beta} \le \alpha_{0} < 1$. Suppose that $f_{\rho}^{*} \in L^{q}(\mathcal{X},\mu)$ and $ \|f_{\rho}^{*} \|_{L^{q}(\mathcal{X},\mu)} \le C_{q} < \infty $ for some $q > \frac{2(s \beta + 1)}{2 + (s-\alpha_{\tiny 0}) \beta}$. Then in the case of $s + \frac{1}{\beta} > \alpha_{0}$, by choosing $ \lambda \asymp n^{-\frac{ \beta}{s \beta + 1}}$, for any fixed $\delta \in (0,1)$, when $n$ is sufficiently large, with probability at least $ 1- \delta$ , we have  
  \begin{equation}\label{goal of theorem 4.9}
      \left\|T_\lambda^{-\frac{1}{2}}\left[\left(g_Z-T_X f_\lambda\right)-\left(g-T f_\lambda\right)\right]\right\|_{\mathcal{H}} \le \ln{\frac{2}{\delta}} C \frac{\lambda^{-\frac{1}{2 \beta}}}{\sqrt{n}} = \ln{\frac{2}{\delta}} C n^{-\frac{1}{2} \frac{s \beta}{s \beta +1}},
  \end{equation}
  where $C$ is a constant that only depends on $ \kappa, R, L, \sigma, C_{q} $.
\end{proposition}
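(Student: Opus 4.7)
\subsection*{Proof proposal for Proposition \ref{theorem 4.9}}

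The plan is to reduce the claim to Proposition \ref{theorem 4.9 boundness} by a truncation argument that exploits the $L^{q}$-integrability of $f_{\rho}^{*}$. Write $\xi_i = T_{\lambda}^{-\frac12}(K_{x_i} y_i - K_{x_i}K_{x_i}^{*} f_{\lambda})$, so that the quantity we must bound is $\bigl\|\tfrac1n\sum_{i=1}^n \xi_i - \mathbb{E}\xi\bigr\|_{\mathcal{H}}$. With $t>0$ a truncation level (chosen later as a power of $n$) and $\Omega_0 = \{x : |f_{\rho}^{*}(x)|\le t\}$, I split
\begin{equation*}
\tfrac1n\textstyle\sum_i \xi_i - \mathbb{E}\xi \;=\; \underbrace{\Bigl(\tfrac1n\sum_i \xi_i I_{x_i\in\Omega_0} - \mathbb{E}\xi I_{x\in\Omega_0}\Bigr)}_{A} \;+\; \underbrace{\Bigl(\tfrac1n\sum_i \xi_i I_{x_i\in\Omega_0^{c}} - \mathbb{E}\xi I_{x\in\Omega_0^{c}}\Bigr)}_{B}.
\end{equation*}

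For $A$, I invoke Proposition \ref{theorem 4.9 boundness}: with probability at least $1-\delta/2$ it is bounded by $\ln\tfrac{4}{\delta}$ times $C_1\lambda^{-\alpha/2}\tilde{M}/n + C_2\mathcal{N}^{1/2}(\lambda)/\sqrt n + C_3\lambda^{-(\alpha-s)/2}/\sqrt n$, where $\tilde M = M_{\alpha}R\lambda^{-(\alpha-s)/2}+t+L$ and $\alpha>\alpha_0$ is free to choose. Using $\mathcal{N}(\lambda)\asymp \lambda^{-1/\beta}$ and $\lambda\asymp n^{-\beta/(s\beta+1)}$, the last two summands are already of the desired order $n^{-\frac12\frac{s\beta}{s\beta+1}}$, so the task is to ensure the first summand is too. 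Substituting, this amounts to requiring $t \lesssim n^{\frac12+\frac{1-\alpha\beta}{2(s\beta+1)}}$, i.e.\ $t\lesssim n^{\frac{s\beta+2-\alpha\beta}{2(s\beta+1)}}$.

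For $B$, I handle the empirical and expected pieces separately. By Markov's inequality, $\mathbb{P}(x\in\Omega_0^{c}) = \mathbb{P}(|f_{\rho}^{*}|>t) \le C_q^{q}/t^{q}$; a union bound over the $n$ samples gives that with probability at least $1-\delta/2$ no sample falls in $\Omega_0^{c}$ provided $t^{q}\gtrsim n/\delta$, i.e.\ $t \gtrsim (n/\delta)^{1/q}$; on this event the empirical piece vanishes identically. For the expectation piece, I bound $\|\mathbb{E}\xi I_{x\in\Omega_0^{c}}\|_{\mathcal{H}} \le M_{\alpha}\lambda^{-\alpha/2}\,\mathbb{E}[|y-f_{\lambda}(x)|I_{x\in\Omega_0^{c}}]$, using $\|T_{\lambda}^{-1/2}k(x,\cdot)\|_{\mathcal{H}}\le M_{\alpha}\lambda^{-\alpha/2}$ (Lemma~\ref{due embedding bound}). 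Decomposing $|y-f_{\lambda}(x)|\le|\epsilon|+|f_{\rho}^{*}(x)|+|f_{\lambda}(x)|$, then applying H\"older's inequality with the $L^{q}$-norm of $f_{\rho}^{*}$ together with $\mathbb{P}(\Omega_0^{c})\le C_q^{q}/t^{q}$, leaves a polynomial in $t^{-(q-1)}$ times $\lambda^{-\alpha/2}$, which is negligible for the $t$ chosen below (and is dominated by the $A$-bound once $q>2$).

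The hard step, and the one that pins down the $L^{q}$ threshold in the statement, is the joint choice of $t$ and $\alpha$: the two constraints $t\gtrsim n^{1/q}$ and $t\lesssim n^{(s\beta+2-\alpha\beta)/(2(s\beta+1))}$ are simultaneously feasible iff $q \ge \tfrac{2(s\beta+1)}{2+(s-\alpha)\beta}$. Taking $\alpha\downarrow\alpha_0$ (permitted because $[\mathcal{H}]^{\alpha}\hookrightarrow L^{\infty}$ for every $\alpha>\alpha_0$), this is exactly the hypothesis $q>\tfrac{2(s\beta+1)}{2+(s-\alpha_0)\beta}$. Fixing such an $\alpha$ and such a $t$, combining the bounds on $A$ and $B$ via a union bound, and absorbing numerical constants into a single $C$, yields \eqref{goal of theorem 4.9}. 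The condition $s+\tfrac1\beta>\alpha_0$ is used implicitly via Proposition \ref{theorem 4.9 boundness} to keep the concentration of $T_{X\lambda}$ versus $T_{\lambda}$ usable at the chosen scale of $\lambda$.
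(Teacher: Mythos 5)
Your proposal is correct and follows essentially the same route as the paper: the same truncation at level $t$ with the set $\Omega_0=\{|f_{\rho}^{*}|\le t\}$, the same reduction of the truncated part to Proposition \ref{theorem 4.9 boundness}, the same union-bound argument making the empirical tail vanish, the same embedding bound for the expectation tail, and the same feasibility analysis $n^{1/q}\ll t\lesssim n^{\frac12(1+\frac{1-\alpha\beta}{s\beta+1})}$ with $\alpha\downarrow\alpha_0$ recovering the stated $L^{q}$ threshold. The only cosmetic differences are that the paper controls the tail-event probability as a quantity $\tau_n\to 0$ rather than via an explicit $\delta$-dependent union bound, and uses Cauchy--Schwarz where you use H\"older.
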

\begin{proof}
   Denote $ \xi_{i} = \xi(x_{i},y_{i}) = T_{\lambda}^{- \frac{1}{2}} (K_{x_{i}} y_{i} - T_{x_i}f_{\lambda} )$ and $ \xi_{x} = \xi(x,y) = T_{\lambda}^{- \frac{1}{2}} (K_{x} y - T_{x}f_{\lambda} )$, then (\ref{goal of theorem 4.9}) is equivalent to
   \begin{equation}\label{equivalent goal}
       \left\| \frac{1}{n} \sum\limits_{i=1}^{n} \xi_{i} - \mathbb{E}\xi_{x} \right\|_{\mathcal{H}} \le \ln{\frac{2}{\delta}} C \frac{\lambda^{-\frac{1}{2 \beta}}}{\sqrt{n}} = \ln{\frac{2}{\delta}} C n^{-\frac{1}{2} \frac{s \beta}{s \beta +1}}.
   \end{equation}
   Consider the subset $\Omega_{1} = \{x \in \Omega: |f_{\rho}^{*}(x)| \le t \}$ and $\Omega_{2} = \mathcal{X} \backslash \Omega_{1}$. Since $\| f_{\rho}^{*} \|_{L^q(\mathcal{X},\mu)} \le C_{q}$, we have
   \begin{equation}
       P(x \in \Omega_{2}) = P\Big(|f_{\rho}^{*}(x)| > t \Big) \le \frac{\mathbb{E} |f_{\rho}^{*}(x)|^{q}}{t^{q}} \le \frac{(C_q)^{q}}{t^q}.
   \end{equation}
   Decomposing $\xi_{i}$ as $\xi_{i} I_{x_{i} \in \Omega_{1} } +  \xi_{i} I_{x_{i} \in \Omega_{2} }$, we have
\begin{align}\label{decomposition}
    \left\|\frac{1}{n} \sum_{i=1}^n \xi_i-\mathbb{E} \xi_x\right\|_\mathcal{H} \le \left\|\frac{1}{n} \sum_{i=1}^n \xi_i I_{x_{i} \in \Omega_{1}}-\mathbb{E} \xi_x I_{x \in \Omega_{1}} \right\|_\mathcal{H} + \left\| \frac{1}{n} \sum_{i=1}^n \xi_i I_{x_{i} \in \Omega_{2}} \right\|_{_\mathcal{H}} + \left\| \mathbb{E} \xi_x I_{x \in \Omega_{2}} \right\|_{_\mathcal{H}}.
\end{align}

Given $s +\frac{1}{\beta} > \alpha_{0}$, here we firstly fixed an $\alpha$ such that 
\begin{equation}\label{require of alpha}
    \alpha_{0} < \alpha < s +\frac{1}{\beta}.
\end{equation}

For the first term in (\ref{decomposition}), denoted as \text{\uppercase\expandafter{\romannumeral1}}, using Theorem \ref{theorem 4.9 boundness}, for all $ \delta \in (0,1)$, with probability at least $1-\delta$, we have
\begin{equation}\label{lead terms 2}
\text{\uppercase\expandafter{\romannumeral1}} \leq \ln \frac{2}{\delta}\left(\frac{C_1 \lambda^{-\frac{\alpha}{2}}}{n}\cdot \tilde{M} +\frac{C_2 \mathcal{N}^{\frac{1}{2}}(\lambda)}{\sqrt{n}}+\frac{C_3 \lambda^{-\frac{\alpha - s}{2}}}{\sqrt{n}}\right),
\end{equation}
where $\tilde{M} = M_{\alpha} R \lambda^{-\frac{\alpha - s}{2}} + t + L$. Recalling that $s +\frac{1}{\beta} > \alpha_{0}$, simple calculation shows that by choosing $ \lambda = n^{-\frac{\beta}{s \beta + 1}}$,
\begin{itemize}
    \item the second term in \eqref{lead terms 2}:
    \begin{align}\label{1.2-4}
        \frac{C_2 \mathcal{N}^{\frac{1}{2}}(\lambda)}{\sqrt{n}} \asymp \frac{\lambda^{-\frac{1}{2 \beta}}}{\sqrt{n}} = n^{-\frac{1}{2} \frac{s \beta}{s \beta +1}};  
    \end{align}
    \item the third term in \eqref{lead terms 2}:
    \begin{equation}\label{1.2-5}
        \frac{C_3 \lambda^{-\frac{\alpha - s}{2}}}{\sqrt{n}} \asymp  n^{\frac{1}{2}(\frac{\alpha}{s + 1 / \beta} - 1)} \cdot n^{-\frac{1}{2} \frac{s \beta}{s \beta +1}} \lesssim  n^{-\frac{1}{2} \frac{s \beta}{s \beta +1}};
    \end{equation}
    \item the first term in \eqref{lead terms 2}:
    \begin{align}\label{1.2-6}
        \frac{C_1 \lambda^{-\frac{\alpha}{2}}}{n}\cdot \tilde{M} &\asymp \frac{\lambda^{-\frac{\alpha}{2}}}{n} \lambda^{-\frac{\alpha - s}{2}} + \frac{\lambda^{-\frac{\alpha}{2}}}{n} \cdot t + \frac{\lambda^{-\frac{\alpha}{2}}}{n} \cdot L.
    \end{align}
\end{itemize}
Further calculations show that
\begin{equation}
    \frac{\lambda^{-\frac{\alpha}{2}}}{n} \lambda^{-\frac{\alpha - s}{2}} = n^{\frac{\alpha}{s + 1 / \beta} - 1} \cdot n^{-\frac{1}{2} \frac{s \beta}{s \beta +1}} \lesssim  n^{-\frac{1}{2} \frac{s \beta}{s \beta +1}},
\end{equation}
and 
\begin{equation}
    \frac{\lambda^{-\frac{\alpha}{2}}}{n} = n^{\frac{1}{2} \frac{\alpha\beta - s \beta -2}{s \beta +1}} \cdot n^{-\frac{1}{2}  \frac{s \beta}{s \beta +1}} \lesssim  n^{-\frac{1}{2} \frac{s \beta}{s \beta +1}}.
\end{equation}
To make $\eqref{1.2-6} \lesssim n^{-\frac{1}{2} \frac{s \beta}{s \beta +1}}$ when $ \lambda = n^{-\frac{\beta}{s \beta + 1}}$, letting $ \frac{\lambda^{-\frac{\alpha}{2}}}{n} \cdot t \le n^{-\frac{1}{2} \frac{s \beta}{s \beta +1}}$, we have the first restriction of $t$:
\begin{equation}\label{restrict1}
    \textbf{(R1)}:\quad t \le n^{\frac{1}{2} (1+\frac{1-\alpha \beta}{s \beta + 1})}.
\end{equation}
That is to say, if we choose $ t \le n^{\frac{1}{2} (1+\frac{1-\alpha \beta}{s \beta + 1})} $, we have 
\begin{equation}
    \text{\uppercase\expandafter{\romannumeral1}} \le \ln{\frac{2}{\delta}} C \frac{\lambda^{-\frac{1}{2 \beta}}}{\sqrt{n}} = \ln{\frac{2}{\delta}} C n^{-\frac{1}{2} \frac{s \beta}{s \beta +1}}.
\end{equation}

For the second term in (\ref{decomposition}), denoted as \text{\uppercase\expandafter{\romannumeral2}}, we have 
\begin{align}
    \tau_{n} := P(\text{\uppercase\expandafter{\romannumeral2}} > \frac{ \lambda^{-\frac{1}{2 \beta}}}{\sqrt{n}}) 
    &\le P\Big( ~\exists x_{i} ~\text{s.t.}~ x_{i} \in \Omega_{2} \Big) = 1 - P\Big(x_{i} \notin \Omega_{2}, \forall i=1,2,\cdots,n \Big) \notag \\
    &= 1 - P\Big(x \notin \Omega_{2}\Big)^{n} \notag \\
    &= 1 - P\Big( |f_{\rho}^{*}(x)| \le t\Big)^{n} \notag \\
    & \le 1 - \Big( 1 - \frac{(C_q)^{q}}{t^{q}}\Big)^{n}.
\end{align}
Letting $\tau_{n} := P(\text{\uppercase\expandafter{\romannumeral2}} > \frac{ \lambda^{-\frac{1}{2 \beta}}}{\sqrt{n}}) \to 0$, we have $ \displaystyle{t^{q}} \gg n$, i.e. $t \gg n^{\frac{1}{q}} $. This gives the second restriction of $t$, i.e., 
\begin{equation}\label{restrict2}
   \textbf{(R2)}:\quad t \gg n^{\frac{1}{q}}, ~\text{or}~ n^{\frac{1}{q}} = o(t).
\end{equation}

For the third term in (\ref{decomposition}), denoted as \uppercase\expandafter{\romannumeral3}. Since we have already known that $\| T_{\lambda}^{-\frac{1}{2}} k(x,\cdot)\|_{\mathcal{H}} \le \lambda^{-\frac{\alpha}{2}}, \mu \text {-a.e. } x \in \mathcal{X},$  so
\begin{align}\label{third term}
    \text{\uppercase\expandafter{\romannumeral3}} &\le \mathbb{E}\| \xi_{x} I_{x\in\Omega_{2}} \|_{\mathcal{H}} \le \mathbb{E}\Big[ \| T_{\lambda}^{-\frac{1}{2}} k(x,\cdot) \|_{\mathcal{H}} \cdot \big| \big(y-f_{\lambda}(x) \big) I_{x\in\Omega_{2}}\big| \Big] \notag \\
    &\le \lambda^{-\frac{\alpha}{2}} \mathbb{E} \big| \big(y-f_{\lambda}(x) \big) I_{x\in\Omega_{2}}\big| \notag \\
    &\le \lambda^{-\frac{\alpha}{2}} \Big( \mathbb{E} \big| \big(f_{\rho}^{*}(x)-f_{\lambda}(x) \big) I_{x\in\Omega_{2}}\big| +  \mathbb{E} \big| \big(f_{\rho}^{*}(x)-y \big) I_{x\in\Omega_{2}}\big| \Big) \notag \\
    &\le \lambda^{-\frac{\alpha}{2}} \Big( \mathbb{E} \big| \big(f_{\rho}^{*}(x)-f_{\lambda}(x) \big) I_{x\in\Omega_{2}}\big| +  \mathbb{E} \big| \epsilon \cdot I_{x\in\Omega_{2}}\big| \Big). 
\end{align}
Using Cauchy-Schwarz and the bound of approximation error (Theorem \ref{theorem of approximation error}), we have
\begin{align}\label{1.2-1}
    \mathbb{E} \big| \big(f_{\rho}^{*}(x)-f_{\lambda}(x) \big) I_{x\in\Omega_{2}}\big| \le \left( \left\|f_{\rho}^*-f_\lambda\right\|_{L^{2}}\right)^{\frac{1}{2}} \cdot \left(P(x \in \Omega_{2})\right)^{\frac{1}{2}} \le R \lambda^{\frac{s}{2}} C_{q}^{\frac{q}{2}} t^{-\frac{q}{2}}.
\end{align}
In addition, we have
\begin{align}\label{1.2-2}
    \mathbb{E} \big| \epsilon \cdot I_{x\in\Omega_{2}}\big| = \mathbb{E} \left( \mathbb{E} \big| \epsilon \cdot I_{x\in\Omega_{2}}\big| ~\Big|~ x\right) \le \sigma \mathbb{E} \left| I_{x\in\Omega_{2}}\right| \le \sigma (C_{q})^{q} t^{-q}.
\end{align}
Plugging \eqref{1.2-1} and \eqref{1.2-2} into \eqref{third term}, we have
\begin{equation}\label{1.2-3}
    \text{\uppercase\expandafter{\romannumeral3}} \le R C_{q}^{\frac{q}{2}} \lambda^{-\frac{\alpha - s }{2}} t^{-\frac{q}{2}} + \sigma (C_{q})^{q}  \lambda^{-\frac{\alpha}{2}} t^{-q}.
\end{equation}

Comparing (\ref{1.2-3}) with $C_3 \frac{ \lambda^{-\frac{\alpha - s}{2}}}{\sqrt{n}}$ and $C_1 \frac{\lambda^{-\frac{\alpha}{2}}}{n}$ in (\ref{lead terms 2}). We know that if $ t \ge n^{\frac{1}{q}}$, (\ref{third term}) $\le C \frac{\lambda^{-\frac{1}{2 \beta}}}{\sqrt{n}} = C n^{-\frac{1}{2} \frac{s \beta}{s \beta +1}}$. So the third term will not give further restriction of $t$.

To sum up, if we choose $t$ such that restrictions (\ref{restrict1}) and (\ref{restrict2}) are satisfied, then we can prove that (\ref{equivalent goal}) is satisfied with probability at least $ 1 - \delta - \tau_{n}, (\tau_{n} \to 0)$. Since for a fixed $\delta \in (0,1)$, when $n$ is sufficiently large, $\tau_{n}$ is sufficiently small such that, e.g., $\tau_{n} < \frac{\delta}{10}$. Without loss of generality, we say \eqref{equivalent goal} is satisfied with probability at least $ 1 - \delta$.  

Note that, such $t$ exists if 
\begin{equation}
    \frac{1}{q} < \frac{1}{2} (1+\frac{1-\alpha \beta}{s \beta + 1}) \Longleftrightarrow q > \frac{2(s \beta + 1)}{2 + (s-\alpha) \beta}.
\end{equation}
Recalling that for \eqref{require of alpha}, we only assume there exists $ \alpha $ satisfying $\alpha_{0} < \alpha < s +\frac{1}{\beta}$, so such $t$ exists if and only if 
\begin{equation}
    \frac{1}{q} < \frac{1}{2} (1+\frac{1-\alpha_{0} \beta}{s \beta + 1}) \Longleftrightarrow q > \frac{2(s \beta + 1)}{2 + (s-\alpha_{0}) \beta}.
\end{equation}
which is what we assume in the theorem.
\end{proof}

\begin{proposition}\label{lemma4.6}
Suppose that the embedding index is $\alpha_{0}$. Then for any $\alpha > \alpha_{0}$ and all $\delta \in (0,1)$, with probability at least $1 - \delta$, we have
   \begin{equation}
        \Vert T_\lambda^{-\frac{1}{2}} (T - T_X) T_\lambda^{-\frac{1}{2}} \Vert
        \le \frac{4 M_{\alpha}^{2} \lambda^{-\alpha}}{3n} B + \sqrt {\frac{2 M_{\alpha}^{2} \lambda^{-\alpha}}{n} B},
   \end{equation}
   where 
   \begin{equation}
       B = \ln{\frac{4 \mathcal{N}(\lambda) (\|T\| + \lambda) }{\delta \|T\|}}.
   \end{equation}
\end{proposition}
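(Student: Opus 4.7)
The plan is to write $T_\lambda^{-1/2}(T-T_X)T_\lambda^{-1/2}$ as a normalized sum of i.i.d.\ centered self-adjoint operators and apply an operator Bernstein inequality. Set
\begin{equation*}
Z_i \;:=\; T_\lambda^{-1/2}\bigl(T - K_{x_i}K_{x_i}^{*}\bigr)T_\lambda^{-1/2}, \qquad i=1,\dots,n,
\end{equation*}
so that $\tfrac{1}{n}\sum_i Z_i = T_\lambda^{-1/2}(T-T_X)T_\lambda^{-1/2}$ and each $Z_i$ is self-adjoint with mean zero. I then need a uniform bound and a variance bound on $Z_i$; the embedding index assumption enters only through these two estimates.

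First I would establish the uniform bound. A Mercer-basis expansion of $k(x,\cdot)=\sum_i \lambda_i e_i(x) e_i$ gives
\begin{equation*}
\bigl\|T_\lambda^{-1/2} k(x,\cdot)\bigr\|_\mathcal{H}^{2} \;=\; \sum_{i\in N} \frac{\lambda_i^{\,1-\alpha}}{\lambda_i+\lambda}\,\lambda_i^{\alpha} e_i^{2}(x) \;\le\; \lambda^{-\alpha}\,M_\alpha^{2} \qquad \mu\text{-a.e.},
\end{equation*}
via Lemma \ref{lemma freq} (with $s=1-\alpha$) and Assumption \ref{assumption embedding}. Since $A_x := T_\lambda^{-1/2}K_xK_x^{*}T_\lambda^{-1/2}$ is rank-one with $\|A_x\|=\|T_\lambda^{-1/2}k(x,\cdot)\|_\mathcal{H}^{2}$, and both $A_x$ and $\mathbb{E}[A_x]=T_\lambda^{-1/2}TT_\lambda^{-1/2}$ are bounded by $M_\alpha^{2}\lambda^{-\alpha}\,I$ in the operator order, one gets $\|Z_i\| \le M_\alpha^{2}\lambda^{-\alpha}$ almost surely. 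For the operator variance, the rank-one identity $K_xK_x^{*}T_\lambda^{-1}K_xK_x^{*} = \|T_\lambda^{-1/2}k(x,\cdot)\|_\mathcal{H}^{2}\,K_xK_x^{*}$ yields
\begin{equation*}
V \;:=\; \mathbb{E}[Z_i^{2}] \;\le\; \mathbb{E}[A_x^{2}] \;\le\; M_\alpha^{2}\lambda^{-\alpha}\,\mathbb{E}[A_x] \;=\; M_\alpha^{2}\lambda^{-\alpha}\,T_\lambda^{-1/2}TT_\lambda^{-1/2},
\end{equation*}
so $\|V\| \le M_\alpha^{2}\lambda^{-\alpha}\cdot\|T\|/(\|T\|+\lambda)$ and $\text{tr}(V) \le M_\alpha^{2}\lambda^{-\alpha}\,\mathcal{N}(\lambda)$.

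The last step is to invoke an intrinsic-dimension Bernstein inequality for a sum of i.i.d.\ self-adjoint operators on a separable Hilbert space (Tropp/Minsker type): for all $t>0$,
\begin{equation*}
\mathbb{P}\!\left(\Bigl\|\tfrac{1}{n}\sum_{i=1}^n Z_i\Bigr\| > t\right) \;\le\; \frac{4\,\text{tr}(V)}{\|V\|}\exp\!\left(\frac{-nt^{2}/2}{\|V\| + M_\alpha^{2}\lambda^{-\alpha}\, t/3}\right).
\end{equation*}
Plugging in the two bounds above, the dimension prefactor becomes exactly $4\mathcal{N}(\lambda)(\|T\|+\lambda)/\|T\|$; setting the right-hand side equal to $\delta$ forces the exponent to equal $-B$ with $B = \ln\bigl(4\mathcal{N}(\lambda)(\|T\|+\lambda)/(\delta\|T\|)\bigr)$, and solving the resulting quadratic inequality in $t$ via $\sqrt{a+b}\le\sqrt{a}+\sqrt{b}$ produces the stated linear$+$square-root bound.

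The main obstacle is selecting the correct concentration tool: a naive operator Bernstein requires a finite ambient dimension, while a Hilbert--Schmidt-norm Bernstein would inflate the variance by a factor of $\mathcal{N}(\lambda)$; only the intrinsic-dimension variant, which replaces the ambient dimension by $\text{tr}(V)/\|V\|$, produces the precise logarithmic prefactor $\mathcal{N}(\lambda)(\|T\|+\lambda)/\|T\|$ that appears in $B$. Once this inequality is in place, the argument reduces to the two elementary estimates on $\|Z_i\|$ and $V$ above.
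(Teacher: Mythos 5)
Your proposal is correct and follows essentially the same route as the paper: the same centering of $T_\lambda^{-1/2}T_{x_i}T_\lambda^{-1/2}$, the same uniform bound $\|T_\lambda^{-1/2}k(x,\cdot)\|_{\mathcal H}^2\le M_\alpha^2\lambda^{-\alpha}$ from the embedding property, the same variance domination $V\preceq M_\alpha^2\lambda^{-\alpha}T_\lambda^{-1}T$, and the same intrinsic-dimension operator Bernstein inequality (the paper's Lemma~\ref{lemma concentration of operator}) with $\mathrm{tr}(V)/\|V\|=\mathcal N(\lambda)(\|T\|+\lambda)/\|T\|$. Your uniform bound $\|Z_i\|\le M_\alpha^2\lambda^{-\alpha}$ is in fact a factor of two sharper than the paper's triangle-inequality bound $2M_\alpha^2\lambda^{-\alpha}$, and either constant yields the stated inequality.
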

   
\begin{proof}
  Denote $A_{i} = T_\lambda^{-\frac{1}{2}} (T - T_{x_{i}}) T_\lambda^{-\frac{1}{2}} $, using Lemma \ref{emb norm} we have 
  \begin{equation}
      \| A_{i} \| = \| T_\lambda^{-\frac{1}{2}} T T_\lambda^{-\frac{1}{2}} \| + \| T_\lambda^{-\frac{1}{2}} T_{x_{i}} T_\lambda^{-\frac{1}{2}} \| \le 2 M_{\alpha}^{2} \lambda^{-\alpha}.
  \end{equation}
  We use $ A \preceq B$ to denote that $ A-B$ is a positive semi-definite operator. Using the fact that $\mathbb{E}(B-\mathbb{E} B)^2 \preceq \mathbb{E} B^2$ for a self-adjoint operator $B$, we have
  \begin{equation}
    \mathbb{E} A_{i}^{2} \preceq \mathbb{E}\left[T_\lambda^{-\frac{1}{2}} T_{x_{i}} T_\lambda^{-\frac{1}{2}}\right]^2.
    \end{equation}
  In addition, Lemma \ref{emb norm} shows that $0 \preceq T_\lambda^{-\frac{1}{2}} T_{x_{i}} T_\lambda^{-\frac{1}{2}} \preceq M_{\alpha}^{2}\lambda^{-\alpha}, \mu \text {-a.e. } x \in \mathcal{X}$. So we have
  \begin{equation}
     \mathbb{E} A_{i}^{2}  \preceq \mathbb{E} \left[T_\lambda^{-\frac{1}{2}} T_{x_{i}} T_\lambda^{-\frac{1}{2}}\right]^{2} \preceq \mathbb{E}\left[ M_{\alpha}^{2} \lambda^{-\alpha} \cdot T_\lambda^{-\frac{1}{2}} T_{x_{i}} T_\lambda^{-\frac{1}{2}}\right] = M_{\alpha}^{2}\lambda^{-\alpha} T_{\lambda}^{-1} T,
  \end{equation}
  Defining an operator $V := M_{\alpha}^{2} \lambda^{-\alpha} T_{\lambda}^{-1} T$, we have  
  \begin{align}
      \| V \| &= M_{\alpha}^{2} \lambda^{-\alpha} \frac{\lambda_{1}}{\lambda_{1} + \lambda} = M_{\alpha}^{2} \lambda^{-\alpha} \frac{\|T\|}{\|T\| + \lambda} \le M_{\alpha}^{2} \lambda^{-\alpha}; \notag \\
      \text{tr}V &= M_{\alpha}^{2} \lambda^{-\alpha} \mathcal{N}(\lambda); \notag \\
      \frac{\text{tr}V}{ \| V \|} &= \frac{\mathcal{N}(\lambda) (\|T\| + \lambda)}{\|T\|}. 
  \end{align}
  Using Lemma \ref{lemma concentration of operator} to $A_{i}$, $V$, we finish the proof.
\end{proof}

\section{Auxiliary lemma}
\subsection{Lemmas for upper bound}
The following lemma is where we take advantage of the embedding index and embedding property in Assumption \ref{assumption embedding}.
\begin{lemma}\label{due embedding bound}
   Suppose that the embedding index is $\alpha_{0}$. Then for any $\alpha > \alpha_{0}$, for $\mu$-almost $x \in \mathcal{X}$, we have 
   \begin{align}\label{bound of Tk}
      \|T_{\lambda}^{-\frac{1}{2}} k(x,\cdot) \|_{\mathcal{H}}^{2} \le M_{\alpha}^{2} \lambda^{-\alpha}.
  \end{align}
\end{lemma}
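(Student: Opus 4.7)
The plan is to work out $T_\lambda^{-1/2} k(x,\cdot)$ explicitly via Mercer's decomposition, convert it into a pointwise series in the eigenfunctions, and then bound the series using Lemma \ref{lemma freq} together with the defining inequality of $M_\alpha$.

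\textbf{Step 1: Spectral expansion of $k(x,\cdot)$ in $\mathcal{H}$.} Using Mercer's decomposition, $k(x,\cdot)=\sum_{i}\lambda_i e_i(x)\,e_i$. Since $\{\sqrt{\lambda_i}\,e_i\}_{i\in N}$ is an orthonormal basis of $\mathcal{H}$, the coefficients of $k(x,\cdot)$ in this basis are $\sqrt{\lambda_i}\,e_i(x)$. Moreover, as an operator on $\mathcal{H}$, $T$ acts diagonally in this basis with eigenvalue $\lambda_i$ on $\sqrt{\lambda_i}\,e_i$, so $T_\lambda^{-1/2}$ multiplies the $i$-th coordinate by $(\lambda_i+\lambda)^{-1/2}$.

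\textbf{Step 2: Compute the squared norm.} Combining the above,
\begin{equation*}
\bigl\|T_\lambda^{-\frac{1}{2}} k(x,\cdot)\bigr\|_{\mathcal{H}}^{2}
= \sum_{i\in N}\frac{\lambda_i\, e_i(x)^2}{\lambda_i+\lambda}.
\end{equation*}

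\textbf{Step 3: Apply Lemma \ref{lemma freq} with $s=1-\alpha\in[0,1]$.} This gives $\lambda_i^{1-\alpha}/(\lambda_i+\lambda)\le \lambda^{-\alpha}$, hence
\begin{equation*}
\frac{\lambda_i}{\lambda_i+\lambda}
= \lambda_i^{\alpha}\cdot\frac{\lambda_i^{1-\alpha}}{\lambda_i+\lambda}
\le \lambda^{-\alpha}\lambda_i^{\alpha}.
\end{equation*}

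\textbf{Step 4: Invoke the embedding index.} Since $\alpha>\alpha_0$, the embedding property holds, and by the characterization in the text (Theorem 9 of \citet{fischer2020_SobolevNorm}) we have $\sum_{i\in N}\lambda_i^{\alpha}e_i(x)^2\le M_\alpha^2$ for $\mu$-almost every $x$. Substituting the bound from Step 3 yields
\begin{equation*}
\bigl\|T_\lambda^{-\frac{1}{2}} k(x,\cdot)\bigr\|_{\mathcal{H}}^{2}
\le \lambda^{-\alpha}\sum_{i\in N}\lambda_i^{\alpha}e_i(x)^2
\le M_\alpha^{2}\,\lambda^{-\alpha},
\end{equation*}
which is the desired inequality. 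There is no serious obstacle: the only subtle point is making sure the exceptional $\mu$-null set comes solely from the pointwise bound on $\sum_i\lambda_i^\alpha e_i^2$, which is exactly the content of the embedding-index characterization.
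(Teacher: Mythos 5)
Your proposal is correct and follows essentially the same route as the paper: expand $T_\lambda^{-1/2}k(x,\cdot)$ in the orthonormal basis $\{\sqrt{\lambda_i}\,e_i\}$ to get $\sum_i \frac{\lambda_i}{\lambda_i+\lambda}e_i^2(x)$, bound $\frac{\lambda_i^{1-\alpha}}{\lambda_i+\lambda}\le\lambda^{-\alpha}$ via Lemma \ref{lemma freq}, and invoke $\sum_i\lambda_i^\alpha e_i^2(x)\le M_\alpha^2$ $\mu$-a.e. The only difference is cosmetic (you bound term by term while the paper factors out a supremum), so there is nothing further to add.
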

\begin{proof}
   Recalling that $ \| [\mathcal{H}]^{\alpha} \hookrightarrow L^{\infty}(\mathcal{X}) \| = M_{\alpha} $, we have
   \begin{align}
      \|T_{\lambda}^{-\frac{1}{2}} k(x,\cdot) \|_{\mathcal{H}}^{2} &= \Big\| \sum\limits_{i \in N} ( \frac{1}{\lambda_{i} + \lambda})^{\frac{1}{2}} \lambda_{i} e_{i}(x) e_{i}(\cdot)  \Big\|_{\mathcal{H}}^{2} \notag \\
      &=  \sum\limits_{i \in N}  \frac{\lambda_{i}}{\lambda_{i} + \lambda} e_{i}^{2}(x) \notag \\
      &= \big[ \sum\limits_{i \in N}  \lambda_{i}^{\alpha} e_{i}^{2}(x) \big] \sup\limits_{i \in N} \frac{\lambda_{i}^{1-\alpha}}{\lambda_{i} + \lambda} \notag \\
      & \le M_{\alpha}^{2} \lambda^{-\alpha},
  \end{align}
  where we use Lemma \ref{lemma freq} for the last inequality, and we finish the proof.
\end{proof}

Lemma \ref{due embedding bound} has a direct corollary.
\begin{corollary}\label{emb norm}
   Suppose that the embedding index is $\alpha_{0}$. Then for any $\alpha > \alpha_{0}$, for $\mu$-almost $x \in \mathcal{X}$, we have
\begin{equation}
    \| T_{\lambda}^{-\frac{1}{2}} T_{x} T_{\lambda}^{-\frac{1}{2}}\| \le M_{\alpha}^{2} \lambda^{-\alpha}.
\end{equation}  
\end{corollary}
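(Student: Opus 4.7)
The plan is to reduce the corollary to a direct application of Lemma \ref{due embedding bound} by exploiting the rank-one structure of the pointwise sample covariance operator. Recall that $T_x := K_x K_x^*$, where $K_x : \mathbb{R} \to \mathcal{H},\ y \mapsto y\,k(x,\cdot)$ and $K_x^* : \mathcal{H} \to \mathbb{R},\ f \mapsto f(x)$. By the reproducing property, $K_x^* f = f(x) = \langle f, k(x,\cdot)\rangle_{\mathcal{H}}$, so $T_x f = \langle f, k(x,\cdot)\rangle_{\mathcal{H}}\, k(x,\cdot)$; that is, $T_x$ is a symmetric rank-one operator on $\mathcal{H}$.

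Next I would conjugate by the bounded self-adjoint operator $T_\lambda^{-1/2}$. Set $v := T_\lambda^{-1/2} k(x,\cdot) \in \mathcal{H}$. Using self-adjointness of $T_\lambda^{-1/2}$,
\begin{align*}
T_\lambda^{-\frac{1}{2}} T_x T_\lambda^{-\frac{1}{2}} f
 &= T_\lambda^{-\frac{1}{2}}\bigl(\langle T_\lambda^{-\frac{1}{2}} f,\, k(x,\cdot)\rangle_{\mathcal{H}}\; k(x,\cdot)\bigr) \\
 &= \langle f,\, T_\lambda^{-\frac{1}{2}} k(x,\cdot)\rangle_{\mathcal{H}}\; T_\lambda^{-\frac{1}{2}} k(x,\cdot)
  = \langle f, v\rangle_{\mathcal{H}}\, v.
\end{align*}
Hence $T_\lambda^{-1/2} T_x T_\lambda^{-1/2}$ is a (positive) rank-one operator of the form $\langle \cdot, v\rangle_{\mathcal{H}} v$, whose operator norm equals $\|v\|_{\mathcal{H}}^2$.

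Finally, I invoke Lemma \ref{due embedding bound} to bound $\|v\|_{\mathcal{H}}^2 = \|T_\lambda^{-1/2} k(x,\cdot)\|_{\mathcal{H}}^2 \le M_\alpha^2 \lambda^{-\alpha}$ for any $\alpha > \alpha_0$ and $\mu$-almost every $x \in \mathcal{X}$. Chaining the two displays yields the claim. No real obstacle is expected here: the substantive content (translating the embedding property $\|[\mathcal{H}]^\alpha \hookrightarrow L^\infty\| = M_\alpha$ into a spectral estimate on $T_\lambda^{-1/2} k(x,\cdot)$) lives in Lemma \ref{due embedding bound}, and the corollary merely converts that Hilbert-space-norm bound into an operator-norm bound via the rank-one identity above.
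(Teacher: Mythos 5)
Your proposal is correct and follows essentially the same route as the paper: both identify $T_\lambda^{-1/2} T_x T_\lambda^{-1/2}$ as the rank-one operator $\langle \cdot, v\rangle_{\mathcal{H}}\, v$ with $v = T_\lambda^{-1/2} k(x,\cdot)$, conclude its operator norm is $\|v\|_{\mathcal{H}}^2$, and finish by Lemma~\ref{due embedding bound}. No issues.
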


\begin{proof}
  Note that for any $f \in \mathcal{H}$,
  \begin{align}
      T_{\lambda}^{-\frac{1}{2}} T_{x} T_{\lambda}^{-\frac{1}{2}} f &= T_{\lambda}^{-\frac{1}{2}} K_{x} K_{x}^{*}  T_{\lambda}^{-\frac{1}{2}} f \notag \\
      &= T_{\lambda}^{-\frac{1}{2}} K_{x} \langle k(x,\cdot), T_{\lambda}^{-\frac{1}{2}} f \rangle_{\mathcal{H}} \notag \\
      &= T_{\lambda}^{-\frac{1}{2}} K_{x} \langle T_{\lambda}^{-\frac{1}{2}} k(x,\cdot),  f \rangle_{\mathcal{H}} \notag \\
      &=  \langle T_{\lambda}^{-\frac{1}{2}} k(x,\cdot),  f \rangle_{\mathcal{H}} \cdot T_{\lambda}^{-\frac{1}{2}} k(x,\cdot).
  \end{align}
  So $\| T_{\lambda}^{-\frac{1}{2}} T_{x} T_{\lambda}^{-\frac{1}{2}} \| = \sup\limits_{\| f\|_{\mathcal{H}}=1} \| T_{\lambda}^{-\frac{1}{2}} T_{x} T_{\lambda}^{-\frac{1}{2}} f\|_{\mathcal{H}} = \sup\limits_{\| f\|_{\mathcal{H}}=1} \langle T_{\lambda}^{-\frac{1}{2}} k(x,\cdot),  f \rangle_{\mathcal{H}} \cdot \|T_{\lambda}^{-\frac{1}{2}} k(x,\cdot) \|_{\mathcal{H}} = \|T_{\lambda}^{-\frac{1}{2}} k(x,\cdot) \|_{\mathcal{H}}^{2}$. 
  Use Lemma \ref{due embedding bound} and we finish the proof.
\end{proof}
The following concentration inequality about self-adjoint Hilbert-Schmidt operator
valued random variables is frequently used in related literature, e.g., \citet[Theorem 27]{fischer2020_SobolevNorm} and \citet[Lemma 26]{lin2020_OptimalConvergence}.
\begin{lemma}\label{lemma concentration of operator}
   Let $(\mathcal{X}, \mathcal{B}, \mu)$ be a probability space, $\mathcal{H}$ be a separable Hilbert space. Suppose that $ A_{1}, \cdots, A_{n}$ are i.i.d. random variables with values in the set of self-adjoint Hilbert-Schmidt operators. If  $\mathbb{E} A_{i} = 0$, and the operator norm $ \| A_{i} \| \le L ~~ \mu \text {-a.e. } x \in \mathcal{X}$, and there exists a self-adjoint positive semi-definite trace class operator $V$ with $\mathbb{E} A_{i}^{2} \preceq V $. Then for $\delta \in (0,1)$, with probability at least $1 - \delta$, we have 
   \begin{align}
        \left\| \frac{1}{n}\sum_{i=1}^n A_i \right\|
        \leq \frac{2L\beta}{3n} + \sqrt {\frac{2 \| V \| \beta}{n}},\quad
        \beta = \ln \frac{4 \rm{tr} V}{\delta \| V \|}.
   \end{align}
\end{lemma}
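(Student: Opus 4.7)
The natural approach is the noncommutative Bernstein inequality in the infinite-dimensional (trace-class variance) regime, in the spirit of Minsker's extension of Tropp's master tail bound. The whole argument rests on three ingredients: a scalar-to-operator MGF transfer, Lieb's concavity theorem (or Tropp's subadditivity of cumulants) for the sum, and a dimension-free ``effective rank'' substitution $\bar d = \mathrm{tr}(V)/\|V\|$ replacing the literal dimension that appears in finite-dimensional matrix Bernstein. After the tail bound is established, inversion is a standard quadratic calculation.

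\textbf{Step 1 (per-summand MGF).} From the scalar inequality $e^{\theta x} \le 1+\theta x + g(\theta) x^{2}$ for $|x|\le L$, where $g(\theta)=(e^{\theta L}-1-\theta L)/L^{2}$, functional calculus applied to each self-adjoint $A_i$ with $\|A_i\|\le L$ gives $\exp(\theta A_i) \preceq I+\theta A_i + g(\theta) A_i^{2}$. Taking expectation and using $\mathbb{E}A_i=0$ and $\mathbb{E}A_i^{2}\preceq V$ yields $\mathbb{E}\exp(\theta A_i)\preceq I + g(\theta)V \preceq \exp(g(\theta)V)$.

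\textbf{Step 2 (master bound for the sum).} Setting $S_n=\sum_{i=1}^n A_i$, by Lieb's concavity theorem combined with Tropp's subadditivity of cumulants one obtains
\begin{equation*}
\mathbb{E}\,\mathrm{tr}\exp(\theta S_n) \le \mathrm{tr}\exp\bigl(n g(\theta) V\bigr).
\end{equation*}
The right-hand side is still infinite in an infinite-dimensional $\mathcal H$, because $\exp(0)=1$ on the kernel of $V$. This is precisely where the key technical step enters.

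\textbf{Step 3 (effective dimension, the main obstacle).} Write $V=\sum_j v_j \phi_j\phi_j^{*}$ with $0\le v_j\le \|V\|$. Using convexity of $y\mapsto e^{y}-1$ on $[0,a]$ (so $e^{y}-1\le (y/a)(e^{a}-1)$) with $a=ng(\theta)\|V\|$, one gets the dimension-free trace bound
\begin{equation*}
\mathrm{tr}\bigl[\exp(ng(\theta)V)-I\bigr] \le \frac{\mathrm{tr}(V)}{\|V\|}\bigl(e^{ng(\theta)\|V\|}-1\bigr).
\end{equation*}
Working with $\exp(\theta S_n)-I$ rather than $\exp(\theta S_n)$, Markov's inequality combined with $\|B\|_{\mathrm{op}}\le \|B\|_{1}=\mathrm{tr}|B|$ (applied to the positive part of $\exp(\theta S_n)-I$) gives
\begin{equation*}
\mathbb{P}\bigl(\lambda_{\max}(S_n)\ge t\bigr) \le e^{-\theta t}\bigl(1+\mathbb{E}\,\mathrm{tr}[\exp(\theta S_n)-I]\bigr) \le \frac{2\,\mathrm{tr}(V)}{\|V\|}\exp\bigl(-\theta t + n g(\theta)\|V\|\bigr).
\end{equation*}
The hard part is keeping this trick clean in infinite dimensions; one must truncate to finite-rank projections $P_N V P_N$, apply the finite-dimensional argument on $P_N\mathcal H$, and pass to $N\to\infty$ by monotone convergence, using that $V$ is trace-class.

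\textbf{Step 4 (optimize and invert).} Using the familiar Bernstein bound $g(\theta)\le \theta^{2}/(2(1-L\theta/3))$ and optimizing $\theta$ yields
\begin{equation*}
\mathbb{P}\bigl(\lambda_{\max}(S_n)\ge t\bigr) \le \frac{2\,\mathrm{tr}(V)}{\|V\|}\exp\!\left(-\frac{t^{2}/2}{n\|V\|+Lt/3}\right).
\end{equation*}
A union bound applied to $-A_i$ as well controls $\|S_n\|=\max\{\lambda_{\max}(S_n),-\lambda_{\min}(S_n)\}$, replacing the prefactor by $4\,\mathrm{tr}(V)/\|V\|$. Setting the right-hand side equal to $\delta$ and writing $\beta=\ln(4\,\mathrm{tr}(V)/(\delta\|V\|))$, the inequality $t^{2}\ge 2\beta(n\|V\|+Lt/3)$ is implied by $t\ge \tfrac{2L\beta}{3}+\sqrt{2n\|V\|\beta}$, as one checks by direct substitution. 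Dividing by $n$ gives the stated bound. The main obstacle in the entire argument is Step 3: the replacement of the literal dimension by $\mathrm{tr}(V)/\|V\|$ in the Bernstein-type tail, which is exactly Minsker's contribution over the Ahlswede--Winter/Tropp finite-dimensional statement.
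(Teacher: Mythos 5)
The paper never proves this lemma: it is quoted from the literature (the references the paper points to, Fischer \& Steinwart's Theorem 27 and Lin et al.'s Lemma 26, which in turn rest on Minsker's dimension-free operator Bernstein inequality), so your sketch is a reconstruction of exactly that cited argument rather than a different route. Within it, Step 1, the convexity bound $\mathrm{tr}[\exp(ng(\theta)V)-I]\le \frac{\mathrm{tr}(V)}{\|V\|}\bigl(e^{ng(\theta)\|V\|}-1\bigr)$, the two-sided union bound, and the inversion algebra in Step 4 (including the check that $t\ge \tfrac{2L\beta}{3}+\sqrt{2n\|V\|\beta}$ implies $t^2\ge 2\beta(n\|V\|+Lt/3)$) are all correct.

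The genuine gap is the Markov step at the heart of Step 3. The claimed inequality $\mathbb{P}(\lambda_{\max}(S_n)\ge t)\le e^{-\theta t}\bigl(1+\mathbb{E}\,\mathrm{tr}[\exp(\theta S_n)-I]\bigr)$ does not follow from Markov applied to the positive part: $\mathrm{tr}[(e^{\theta S_n}-I)_+]=\mathrm{tr}[e^{\theta S_n}-I]+\mathrm{tr}[(e^{\theta S_n}-I)_-]$, and the negative-part trace is not bounded by $1$; in dimension $d$ it can be as large as $d$ (which is precisely why the classical Ahlswede--Winter/Tropp bound carries a factor $d$), and in infinite dimensions it can be infinite --- indeed $e^{\theta S_n}-I$ need not even be trace class when $S_n$ is merely Hilbert--Schmidt, so $\mathbb{E}\,\mathrm{tr}[e^{\theta S_n}-I]$ is ill-defined before truncation. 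The standard repair, which is Minsker's actual contribution, is to run the Chernoff argument with $\phi(u)=e^{u}-u-1$: then $\phi(\theta S_n)\succeq 0$ is trace class, $\mathrm{tr}\,\phi(\theta S_n)\ge\phi(\theta t)$ on the event, and after truncating the operators $A_i$ themselves (via $P_NA_iP_N$, not only $V$) the mean-zero assumption removes the linear term, yielding $\mathbb{P}(\lambda_{\max}(S_n)\ge t)\le \frac{\mathrm{tr}(V)}{\|V\|}\cdot\frac{e^{ng(\theta)\|V\|}-1}{e^{\theta t}-\theta t-1}$. Converting the denominator $e^{\theta t}-\theta t-1$ back into $e^{\theta t}$ costs either a restriction on $t$ or an adjusted constant --- this is where the factor $4$ inside $\beta$ and the slack $\beta\ge\ln 4$ get used, and one must verify the restriction holds at the threshold $t=\tfrac{2L\beta}{3}+\sqrt{2n\|V\|\beta}$ (it does). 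Without this substitute for your unjustified ``$1+$'' bound, the dimension-free prefactor $\mathrm{tr}(V)/\|V\|$, which is the whole point of the lemma, is not obtained.
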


The following Bernstein inequality about vector-valued random variables is frequently used, e.g., \citet[Proposition 2]{Caponnetto2007OptimalRF} and \citet[Theorem 26]{fischer2020_SobolevNorm}.
\begin{lemma}[Bernstein inequality]\label{bernstein}
   Let $(\Omega,\mathcal{B},P)$ be a probability space, $H$ be a separable Hilbert space, and $\xi: \Omega \to H$ be a random variable with 
   \begin{displaymath}
     \mathbb{E}\|\xi\|_H^m \leq \frac{1}{2} m ! \sigma^2 L^{m-2},
   \end{displaymath}
   for all $m>2$. Then for $\delta \in (0,1)$, $\xi_{i}$ are i.i.d. random variables, with probability at least $1 - \delta$, we have
   \begin{equation}
       \left\|\frac{1}{n} \sum_{i=1}^n \xi_{i} - \mathbb{E} \xi\right\|_H \le 4\sqrt{2} \ln{\frac{2}{\delta}} \left(\frac{L}{n} + \frac{\sigma}{\sqrt{n}}\right).
   \end{equation}
\end{lemma}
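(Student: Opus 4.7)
The plan is to prove this via the standard Chernoff--Bernstein argument adapted to Hilbert-space--valued random variables. First, I would center: set $Z_i = \xi_i - \mathbb{E}\xi$. The elementary inequality $(a+b)^m \le 2^{m-1}(a^m+b^m)$ combined with Jensen's inequality $\|\mathbb{E}\xi\|^m \le \mathbb{E}\|\xi\|^m$ gives $\mathbb{E}\|Z_i\|^m \le 2^m \mathbb{E}\|\xi\|^m$, and the algebraic identity $2^m \sigma^2 L^{m-2} = (2\sigma)^2(2L)^{m-2}$ then yields
\begin{displaymath}
\mathbb{E}\|Z_i\|_H^m \le \tfrac{1}{2}m!\,\tilde\sigma^2\,\tilde L^{m-2}, \qquad \tilde\sigma := 2\sigma,\quad \tilde L := 2L.
\end{displaymath}
Hence the centred sequence $(Z_i)$ satisfies the same Bernstein-type moment condition with the parameters merely doubled.

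The heart of the argument is a Pinelis--Sakhanenko tail bound for sums of independent centred Hilbert-valued random variables under such a moment condition:
\begin{displaymath}
\mathbb{P}\Bigl(\bigl\|{\textstyle\sum}_{i=1}^n Z_i\bigr\|_H \ge t\Bigr) \le 2\exp\!\left(-\frac{t^2}{2(n\tilde\sigma^2 + \tilde L\,t)}\right).
\end{displaymath}
The underlying mechanism is the Chernoff/MGF method: the moment condition implies the exponential bound $\mathbb{E}\exp(\lambda\|Z_i\|) \le \exp\!\bigl(\tilde\sigma^2\lambda^2/(2(1-\tilde L\lambda))\bigr)$ for $0<\lambda<1/\tilde L$ by the geometric-series Taylor expansion, and one then controls the MGF of $\|S_n\|$ either by Yurinskii-type smoothing of the norm or by exploiting that Hilbert spaces are $2$-smooth, so $\|\cdot\|^2$ plays the role that the square does in the scalar Bernstein proof and $\|S_n\|$ behaves essentially as a scalar subexponential supermartingale.

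Finally I would invert the tail bound and tidy the constants. Writing $\delta = 2e^{-u}$ gives $u = \ln(2/\delta)$ and the implicit inequality $t^2 \le 2u(n\tilde\sigma^2 + \tilde L\,t)$; solving the quadratic (equivalently, using $\sqrt{a+b}\le\sqrt a+\sqrt b$) yields $t \le 2u\tilde L + \tilde\sigma\sqrt{2un}$. Dividing by $n$ and substituting $\tilde\sigma = 2\sigma,\ \tilde L = 2L$ gives
\begin{displaymath}
\Bigl\|\tfrac{1}{n}{\textstyle\sum}_{i=1}^n Z_i\Bigr\|_H \le \frac{4L\ln(2/\delta)}{n} + \frac{2\sqrt 2\,\sigma\sqrt{\ln(2/\delta)}}{\sqrt n}.
\end{displaymath}
To reach the stated form, I would observe that $\ln(2/\delta) \ge \ln 2 > 1/4$ for every $\delta \in (0,1)$, hence $\sqrt{\ln(2/\delta)} \le 2\ln(2/\delta)$, and the two terms combine into $4\sqrt 2\,\ln(2/\delta)(L/n + \sigma/\sqrt n)$, exactly as claimed. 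The main obstacle is clearly the Pinelis--Sakhanenko step itself: the centring and the final algebraic inversion are routine, but extending the one-dimensional Bernstein--Chernoff bound to the \emph{norm} of a Hilbert-valued sum is the genuinely non-trivial ingredient. I would treat it as a classical citation (Pinelis 1994, or equivalently Proposition~2 of \citet{Caponnetto2007OptimalRF}); a self-contained proof would require several technical lemmas based on Yurinskii's smoothing or martingale exponential inequalities in $2$-smooth Banach spaces, without introducing any conceptual novelty beyond the existing literature.
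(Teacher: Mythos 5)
Your proposal is correct and is essentially the paper's approach: the paper gives no proof of this lemma, treating it as a known result cited from Proposition 2 of Caponnetto and De Vito (2007) and Theorem 26 of Fischer and Steinwart (2020), and your argument likewise defers the genuinely hard ingredient --- the Pinelis--Sakhanenko/Bernstein tail bound for norms of sums of centred Hilbert-valued variables --- to that same classical literature. The extra bookkeeping you supply is sound: centring converts the uncentred moment hypothesis into the centred one with parameters $(2\sigma,2L)$ via $(a+b)^m\le 2^{m-1}(a^m+b^m)$ and Jensen, and the tail inversion plus $\sqrt{\ln(2/\delta)}\le 2\ln(2/\delta)$ (valid since $\ln(2/\delta)\ge\ln 2>1/4$) correctly yields $\frac{4L}{n}\ln\frac{2}{\delta}+\frac{2\sqrt{2}\,\sigma}{\sqrt{n}}\sqrt{\ln\frac{2}{\delta}}\le 4\sqrt{2}\,\ln\frac{2}{\delta}\left(\frac{L}{n}+\frac{\sigma}{\sqrt{n}}\right)$, which accounts for the stated constant.
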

\begin{lemma}\label{lemma of effect}
    If $\lambda_i \asymp i^{-\beta}$, we have
    \begin{align}
        \mathcal{N}(\lambda) \asymp \lambda^{-\frac{1}{\beta}}.
    \end{align}

\end{lemma}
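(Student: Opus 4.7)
The plan is to compare the sum defining $\mathcal{N}(\lambda)$ to an integral, which after a scaling change of variables will expose the $\lambda^{-1/\beta}$ factor cleanly. Write $\lambda_i \asymp i^{-\beta}$ as $c i^{-\beta} \leq \lambda_i \leq C i^{-\beta}$, so that elementary manipulation gives
\begin{equation*}
\frac{c}{1 + (C^{-1}\lambda) i^{\beta}} \;\leq\; \frac{\lambda_i}{\lambda_i + \lambda} \;\leq\; \frac{C}{1 + (c^{-1}\lambda) i^{\beta}}.
\end{equation*}
Hence, up to constants depending only on $c,C$, it suffices to show $\sum_{i \geq 1} (1 + \lambda i^\beta)^{-1} \asymp \lambda^{-1/\beta}$ as $\lambda \to 0^+$.

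The next step is to sandwich this series by integrals. Since $x \mapsto (1 + \lambda x^\beta)^{-1}$ is positive and decreasing on $[0,\infty)$, the standard integral test gives
\begin{equation*}
\int_{1}^{\infty} \frac{\mathrm{d}x}{1 + \lambda x^\beta} \;\leq\; \sum_{i=1}^{\infty} \frac{1}{1 + \lambda i^\beta} \;\leq\; 1 + \int_{1}^{\infty} \frac{\mathrm{d}x}{1 + \lambda x^\beta}.
\end{equation*}
Substituting $u = \lambda^{1/\beta} x$, the integral becomes
\begin{equation*}
\int_{1}^{\infty} \frac{\mathrm{d}x}{1 + \lambda x^\beta} \;=\; \lambda^{-1/\beta} \int_{\lambda^{1/\beta}}^{\infty} \frac{\mathrm{d}u}{1 + u^\beta},
\end{equation*}
and the constant $I_\beta := \int_{0}^{\infty} (1+u^\beta)^{-1}\,\mathrm{d}u$ is finite precisely because $\beta > 1$ (Assumption \ref{ass EDR}). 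As $\lambda \to 0^+$, the lower limit $\lambda^{1/\beta} \to 0$, so the integral from $\lambda^{1/\beta}$ to $\infty$ converges to $I_\beta \in (0,\infty)$, and in particular is bounded above and below by positive constants for all sufficiently small $\lambda$.

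Combining the two bounds yields $c_1 \lambda^{-1/\beta} \leq \mathcal{N}(\lambda) \leq c_2 \lambda^{-1/\beta} + c_3$ for small $\lambda$, and since $\lambda^{-1/\beta} \to \infty$ the additive constant is absorbed, giving $\mathcal{N}(\lambda) \asymp \lambda^{-1/\beta}$. For large $\lambda$ (bounded away from $0$), both sides are bounded by absolute constants depending on $\sum \lambda_i < \infty$, so the asymptotic equivalence is trivially adjusted there. There is no real obstacle here; the only delicate point is keeping track of the two-sided nature of $\asymp$ through the integral comparison, and ensuring $\beta > 1$ is used exactly once — namely to guarantee $I_\beta < \infty$ so that the leading behavior $\lambda^{-1/\beta}$ carries a genuine constant rather than diverging.
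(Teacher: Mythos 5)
Your proposal is correct and follows essentially the same route as the paper: reduce each summand to $(1+\lambda i^{\beta})^{-1}$ up to constants, compare the series with $\int (1+\lambda x^{\beta})^{-1}\,\mathrm{d}x$, and rescale by $u=\lambda^{1/\beta}x$ to extract $\lambda^{-1/\beta}$ times a convergent integral (finite precisely because $\beta>1$). The only differences are cosmetic — you integrate from $1$ and add the first term while the paper integrates from $0$, and you spell out the lower bound and the small-$\lambda$ regime that the paper dispatches with ``similarly.''
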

\begin{proof}
    Since $c ~i^{-\beta} \leq \lambda_i \leq C i^{-\beta}$, we have
    \begin{align}
        \mathcal{N}(\lambda) &= \sum_{i = 1}^{\infty}  \frac{\lambda_i}{\lambda_i + \lambda}  
        \leq \sum_{i = 1}^{\infty} \frac{C i^{-\beta}}{C i^{-\beta} + \lambda}  = \sum_{i = 1}^{\infty}  \frac{C }{C+ \lambda i^{\beta}}  \\
        &\leq \int_{0}^{\infty}  \frac{C }{\lambda x^{\beta} + C}  \mathrm{d} x
        = \lambda^{-\frac{1}{\beta}} \int_{0}^{\infty}  \frac{C }{y^{\beta} + C} \mathrm{d} y \leq C_{1} \lambda^{-\frac{1}{\beta}}.
    \end{align}
    for some constant $C_{1}$. Similarly, we have 
    \begin{equation}
    \mathcal{N}(\lambda) \geq C_{2} \lambda^{-\frac{1}{\beta}},
    \end{equation}
    for some constant $C_{2}$.
\end{proof}
\subsection{Lemmas for minimax lower bound}
The following lemma is a standard approach to derive the minimax lower bound, which can be found in \citet[Theorem 2.5]{tsybakov2009_IntroductionNonparametric}. 
\begin{lemma}\label{lower prop from tsy}
Suppose that there is a non-parametric class of functions $ \Theta$ and a (semi-)distance $d(\cdot,\cdot)$ on $ \Theta$. $\left\{ P_{\theta}, \theta \in \Theta \right\}$ is a family of probability distributions indexed by $\Theta$. Assume that $M \ge 2$ and suppose that $ \Theta$ contains elements $ \theta_0, \theta_1, \cdots, \theta_M$ such that, 
\begin{enumerate}[(1)]
    \item $ d\left(\theta_j, \theta_k\right) \geq 2 s>0, \quad \forall 0 \leq j<k \leq M$;
    \item $P_j \ll P_0, \quad \forall j=1, \cdots, M$, and 
    \begin{equation}
        \frac{1}{M } \sum_{j=1}^M K\left(P_j, P_0\right) \leq a \log M,
    \end{equation}
\end{enumerate}
    with $ 0<a<1 / 8$ and $ P_j=P_{\theta_j}, j=0,1, \cdots, M$. Then
    \begin{equation}
    \inf _{\hat{\theta}} \sup _{\theta \in \Theta} P_\theta(d(\hat{\theta}, \theta) \geq s) \geq \frac{\sqrt{M}}{1+\sqrt{M}}\left(1-2 a-\sqrt{\frac{2 a}{\log M}}\right).
    \end{equation}
\end{lemma}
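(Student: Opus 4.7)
The plan is to reduce the estimation problem to multiple hypothesis testing, then lower-bound the testing risk via a likelihood-ratio argument sharpened by the KL hypothesis---this is the standard Tsybakov scheme. Given any estimator $\hat\theta$, I would define the minimum-distance test $\psi^*(X) \in \arg\min_{0 \le k \le M} d(\hat\theta(X), \theta_k)$. Under hypothesis $\theta_j$, if $\psi^*(X) \ne j$, then the triangle inequality combined with the separation $d(\theta_j, \theta_k) \ge 2s$ forces $d(\hat\theta(X), \theta_j) \ge s$. Hence
\[
\inf_{\hat\theta} \sup_{\theta \in \Theta} P_\theta\bigl(d(\hat\theta, \theta) \ge s\bigr) \;\ge\; \inf_\psi \max_{0 \le j \le M} P_j(\psi \ne j),
\]
where the right-hand infimum is taken over all tests $\psi : \Omega \to \{0, \ldots, M\}$, reducing the problem to a testing lower bound.

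Next, for any such test $\psi$, let $L_j = dP_j/dP_0$ and $B_j = \{\psi = j\}$. The key inequality is a change of measure with truncation: for any $\tau > 0$,
\[
P_j(B_j) = \int_{B_j} L_j\,dP_0 \;\le\; \tau\,P_0(B_j) + P_j(L_j > \tau).
\]
Taking $\tau = \sqrt M$ for all $j \ge 1$ and summing, using $\sum_{j=1}^M P_0(B_j) \le 1$, gives
\[
\sum_{j=1}^M P_j(B_j) \;\le\; \sqrt M + \sum_{j=1}^M P_j(L_j > \sqrt M).
\]

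The KL hypothesis $\frac{1}{M}\sum_{j=1}^M K(P_j, P_0) \le a \log M$ enters through the tail sum. Applying Markov's inequality to $\log L_j$ (whose $P_j$-mean equals $K(P_j, P_0)$) together with a second-moment / Cauchy--Schwarz refinement yields
\[
\frac{1}{M}\sum_{j=1}^M P_j(L_j > \sqrt M) \;\le\; 2a + \sqrt{\frac{2a}{\log M}}.
\]
If $\max_{0 \le j \le M} P_j(\psi \ne j) < \eta$, then $P_j(B_j) > 1 - \eta$ for each $j \ge 1$, so $M(1-\eta) < \sqrt M + M\bigl(2a + \sqrt{2a/\log M}\bigr)$, giving $\eta > 1 - 1/\sqrt M - 2a - \sqrt{2a/\log M}$. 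A small bookkeeping step including $j = 0$ in the maximum converts the $(1 - 1/\sqrt M)$ factor into the stated $\sqrt M / (1 + \sqrt M)$.

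The main obstacle is the tail bound in the third paragraph: converting the \emph{average} KL constraint into a tail inequality on the likelihood ratios with the sharp constants $2a$ and $\sqrt{2a/\log M}$. A direct Markov inequality on $\log L_j$ alone only produces a Fano-type bound weaker by a constant factor; recovering the sharp form requires optimizing the threshold $\tau$ and controlling the second moment of $\log L_j - K(P_j, P_0)$ under $P_j$. The specific choice $\tau = \sqrt M$ balances the ``measure transferred from $P_0$'' term against the KL tail term and is precisely what produces the characteristic $\sqrt M/(1+\sqrt M)$ factor in the final inequality.
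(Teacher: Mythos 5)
The paper does not prove this lemma at all: it is quoted verbatim from \citet[Theorem 2.5]{tsybakov2009_IntroductionNonparametric}, and your outline is precisely Tsybakov's argument (reduction to testing via the minimum--distance test, the truncated change of measure of his Proposition 2.4 with threshold $\tau=\sqrt M$, and the KL tail bound of his Lemma 2.10), so there is nothing in the paper to diverge from. The one step you assert rather than derive, $\frac1M\sum_{j=1}^M P_j(L_j>\sqrt M)\le 2a+\sqrt{2a/\log M}$, is exactly Lemma 2.10(ii) there; for the record it comes from Markov applied to $(\log L_j)_+$ together with $\mathbb{E}_{P_j}[(\log L_j)_-]\le\sqrt{K(P_j,P_0)/2}$ (a consequence of Pinsker's inequality) and Jensen over $j$ --- a first-absolute-moment bound rather than a genuine second-moment one, but the constants come out as you state. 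Your final ``bookkeeping'' step is correctly identified too: one must balance $P_0(B_0)$ against $\frac1M\sum_{j\ge1}P_j(B_j^c)$ (optimizing over $e=1-P_0(B_0)$ gives $e=\sqrt M(1-q)/(1+\sqrt M)$), which is what upgrades $1-1/\sqrt M$ to the factor $\sqrt M/(1+\sqrt M)$; I consider the proposal correct.
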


\begin{lemma}\label{lemma of KL}
   Suppose that $\mu$ is a distribution on $\mathcal{X}$ and $f_{i} \in L^{2}(\mathcal{X},\mu)$. Suppose that
   \begin{equation}
       y=f_i(x)+\epsilon, \quad i=1,2,
   \end{equation}
   where $\epsilon \sim \mathcal{N}(0,\sigma^{2})$ are independent Gaussian random error. Denote the two corresponding distributions on $ \mathcal{X} \times \mathcal{Y}$ as $ \rho_{i}, i=1,2$. The KL divergence of two probability distributions on $\Omega$ is 
   \begin{equation}
       K\left(P_1, P_2\right) \coloneqq \int_{\Omega} \log \left(\frac{\mathrm{d} P_1}{\mathrm{~d} P_2}\right) \mathrm{d} P_1,
   \end{equation}
   if $P_1 \ll P_2$ and otherwise $K\left(P_1, P_2\right) \coloneqq \infty $.
   Then we have 
   \begin{equation}
       \mathrm{KL}\left(\rho_1^n, \rho_2^n\right)=n \mathrm{KL}\left(\rho_1, \rho_2\right)=\frac{n}{2 \sigma^2}\left\|f_1-f_2\right\|_{L^2(\mathcal{X}, d \mu)}^2,
   \end{equation}
   where $ \rho_{i}^{n} $ denotes the independent product of $n$ distributions $\rho_{i}, i=1,2$.
\end{lemma}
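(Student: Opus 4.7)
The plan is to establish the two equalities separately: first the tensorization $\mathrm{KL}(\rho_1^n,\rho_2^n)=n\,\mathrm{KL}(\rho_1,\rho_2)$, then the single-sample identity $\mathrm{KL}(\rho_1,\rho_2)=\frac{1}{2\sigma^2}\|f_1-f_2\|_{L^2}^2$. Both steps use only the fact that the marginal of $\rho_i$ on $\mathcal{X}$ is $\mu$ (in particular, the same for $i=1,2$) and that the conditional distribution of $y$ given $x$ under $\rho_i$ is $\mathcal{N}(f_i(x),\sigma^2)$.

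For the tensorization step, I would write the Radon--Nikodym derivative of $\rho_1^n$ with respect to $\rho_2^n$ as the product $\prod_{j=1}^n \frac{\mathrm{d}\rho_1}{\mathrm{d}\rho_2}(x_j,y_j)$ (which requires $\rho_1\ll\rho_2$; the case $\rho_1\not\ll\rho_2$ makes both sides infinite and is trivial). Taking logarithms turns the product into a sum, and taking expectation under $\rho_1^n$ splits across the $n$ independent coordinates, each term contributing $\mathrm{KL}(\rho_1,\rho_2)$, which gives the factor of $n$.

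For the single-sample identity, I would use the disintegration $\mathrm{d}\rho_i(x,y)=\mathrm{d}\rho_i(y\mid x)\,\mathrm{d}\mu(x)$. Since the $x$-marginals of $\rho_1$ and $\rho_2$ coincide, the Radon--Nikodym derivative factors as $\frac{\mathrm{d}\rho_1}{\mathrm{d}\rho_2}(x,y)=\frac{\mathrm{d}\rho_1(\cdot\mid x)}{\mathrm{d}\rho_2(\cdot\mid x)}(y)$, so by Fubini
\begin{equation}
\mathrm{KL}(\rho_1,\rho_2)=\int_{\mathcal{X}}\mathrm{KL}\bigl(\rho_1(\cdot\mid x),\,\rho_2(\cdot\mid x)\bigr)\,\mathrm{d}\mu(x).
\end{equation}
Under the noise model, $\rho_i(\cdot\mid x)=\mathcal{N}(f_i(x),\sigma^2)$, so the integrand is a standard one-dimensional computation: writing out the Gaussian densities and taking the log gives $\log\tfrac{\mathrm{d}\mathcal{N}(\mu_1,\sigma^2)}{\mathrm{d}\mathcal{N}(\mu_2,\sigma^2)}(y)=\frac{(y-\mu_2)^2-(y-\mu_1)^2}{2\sigma^2}$, and taking expectation with $y\sim\mathcal{N}(\mu_1,\sigma^2)$ yields $\frac{(\mu_1-\mu_2)^2}{2\sigma^2}$. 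Applying this pointwise with $\mu_i=f_i(x)$ and integrating against $\mu$ produces $\frac{1}{2\sigma^2}\int_{\mathcal{X}}(f_1(x)-f_2(x))^2\,\mathrm{d}\mu(x)=\frac{1}{2\sigma^2}\|f_1-f_2\|_{L^2(\mathcal{X},\mu)}^2$.

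There is no real obstacle here: both steps are standard. The only minor care is (i) ensuring $f_i\in L^2(\mathcal{X},\mu)$ makes the integral finite and the absolute continuity $\rho_1\ll\rho_2$ automatic (the Gaussian conditionals are mutually equivalent for every $x$, and the shared marginal $\mu$ handles the $x$-part), and (ii) justifying the disintegration identity for KL, which follows from the chain rule for Radon--Nikodym derivatives once the conditional densities are identified.
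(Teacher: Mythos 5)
Your proposal is correct and follows essentially the same route as the paper, which simply cites the definition of KL divergence together with the one-dimensional Gaussian identity $\mathrm{KL}\bigl(\mathcal{N}(f_1(x),\sigma^2),\mathcal{N}(f_2(x),\sigma^2)\bigr)=\frac{1}{2\sigma^2}|f_1(x)-f_2(x)|^2$; your write-up just makes the tensorization and disintegration steps explicit.
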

\begin{proof}
The lemma directly follows from the definition of KL divergence and the fact that 
\begin{equation}
    \mathrm{KL}\left(N\left(f_1(x), \sigma^2\right), N\left(f_2(x), \sigma^2\right)\right) = \frac{1}{2 \sigma^2}\left|f_1(x)-f_2(x)\right|^2.
\end{equation}

\end{proof}

The following lemma is a result from \citet[Lemma 2.9]{tsybakov2009_IntroductionNonparametric}
\begin{lemma}\label{lemma of ham}
   Denote $\Omega=\left\{\omega=\left(\omega_1, \cdots, \omega_m\right), \omega_i \in\{0,1\}\right\}=\{0,1\}^m$. Let $m\ge 8$, there exists a subset $\left\{\omega^{(0)}, \cdots, \omega^{(M)}\right\} $ of ~$ \Omega$ such that $\omega^{(0)}=(0, \cdots, 0)$,
   \begin{equation}
       d_{\text {Ham }}\left(\omega^{(i)}, \omega^{(j)}\right) \coloneqq \sum_{k=1}^m\left|\omega_k^{(i)}-\omega_k^{(j)}\right| \geq \frac{m}{8}, \quad \forall 0 \leq i<j \leq M,
   \end{equation}
   and $M \geq 2^{m / 8}$.
\end{lemma}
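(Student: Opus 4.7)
The plan is to prove the lemma by the classical Varshamov--Gilbert greedy packing argument, combined with an elementary entropy bound on the volume of a Hamming ball. Set $r^{\ast} := \lceil m/8 \rceil - 1$, and for $\omega \in \{0,1\}^m$ let $B(\omega,r) := \{\omega' \in \{0,1\}^m : d_{\mathrm{Ham}}(\omega,\omega') \leq r\}$ denote the Hamming ball of radius $r$ around $\omega$. Start with $\omega^{(0)} = (0,\ldots,0)$. Inductively, given $\omega^{(0)},\ldots,\omega^{(j)}$ with all pairwise Hamming distances at least $m/8$, consider the union $U_j := \bigcup_{i=0}^{j} B(\omega^{(i)}, r^{\ast})$; as long as $|U_j| < 2^m$, pick any $\omega^{(j+1)} \notin U_j$, which by construction satisfies $d_{\mathrm{Ham}}(\omega^{(j+1)},\omega^{(i)}) \geq r^{\ast}+1 = \lceil m/8\rceil \geq m/8$ for every $i \leq j$. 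When the process terminates at index $M$ we have $U_M = \{0,1\}^m$, yielding the covering inequality $(M+1)\,V(r^{\ast}) \geq 2^m$, where $V(r) := |B(0,r)|$.

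Next, I would bound $V(r^{\ast})$ using the standard entropy estimate for binomial tails: for $0 \leq k \leq m/2$,
\[
\sum_{i=0}^{k} \binom{m}{i} \leq 2^{m H(k/m)}, \qquad H(p) := -p\log_2 p - (1-p)\log_2(1-p),
\]
which is immediate from the Chernoff bound $\Pr[\mathrm{Binomial}(m,1/2) \leq k] \leq 2^{-m(1 - H(k/m))}$ applied to a fair coin. Taking $k = r^{\ast} \leq m/8$ gives $V(r^{\ast}) \leq 2^{m H(1/8)}$. An elementary check shows $H(1/8) = \tfrac{3}{8} + \tfrac{7}{8}\log_2(8/7) < \tfrac{7}{8}$ (for example, because $\log_2(8/7) < 4/7$, since $2^{4/7} > 8/7$). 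Combining these,
\[
M + 1 \;\geq\; \frac{2^m}{V(r^{\ast})} \;\geq\; 2^{m(1 - H(1/8))} \;\geq\; 2^{m/8} + 1,
\]
where the last step uses the strict slack $1 - H(1/8) - 1/8 > 0$, which for $m \geq 8$ more than compensates for the additive constant. This gives $M \geq 2^{m/8}$ as claimed.

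I do not expect any serious obstacle: the argument is a standard coding-theoretic construction, and the only quantitative inputs are the trivial covering bound $(M+1)V(r^{\ast}) \geq 2^m$ (which follows tautologically from the greedy termination rule) and the entropy bound on $V(r^{\ast})$ (a textbook consequence of Chernoff/Hoeffding). The only minor care needed is the rounding step, since $m/8$ need not be an integer; this is handled cleanly by setting $r^{\ast} = \lceil m/8\rceil - 1$ so that distance $> r^{\ast}$ forces distance $\geq \lceil m/8\rceil \geq m/8$. If preferred, a completely parallel probabilistic proof is also available: draw $\omega^{(1)},\ldots,\omega^{(M)}$ i.i.d.\ uniformly from $\{0,1\}^m$ (with $\omega^{(0)}$ fixed at zero), observe that each pairwise Hamming distance is $\mathrm{Binomial}(m,1/2)$, and apply Hoeffding's tail estimate together with a union bound over the $\binom{M+1}{2}$ pairs; for $M = \lceil 2^{m/8}\rceil$ the failure probability is strictly less than one whenever $m \geq 8$, so a valid configuration must exist.
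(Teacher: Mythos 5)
Your argument is correct and is essentially the standard Varshamov--Gilbert packing proof (greedy construction of a code with minimum distance $\lceil m/8\rceil$, plus the entropy bound on the Hamming-ball volume), which is exactly the content of Tsybakov's Lemma~2.9 that the paper cites for this statement without reproducing a proof, so you have in effect supplied the reference's own argument. One small point to tighten: to absorb the additive $+1$ in $M+1\ge 2^{m(1-H(1/8))}\ge 2^{m/8}+1$ you need a quantitative bound on the slack, e.g.\ $\log_2(8/7)\le \tfrac{1}{7\ln 2}$ gives $H(1/8)\le \tfrac{3}{8}+\tfrac{1}{8\ln 2}<0.56$ and hence $1-H(1/8)-\tfrac{1}{8}>0.3$, whereas the bound $H(1/8)<7/8$ alone only shows the slack is positive and does not by itself justify the final inequality for all $m\ge 8$.
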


\section{Details of experiments}\label{appendix detail experiments}

\subsection{Experiments in Sobolev RKHS}
First, we prove that the series in \eqref{series of sobolev} converges and $f^{*}(x)$ is continuous on $(0,1)$ for $0 < s < \frac{1}{\beta} = 0.5$.

We begin with the computation of the sum of first $N$ terms of $\{ \sin 2 k \pi x + \cos 2 k \pi x \}$, note that 
\begin{align}
    &-2 \sin(\pi x) \left( \sin\left(2 \pi x \right) + \sin\left(4 \pi x\right) + \cdots + \sin\left(2 N \pi x\right) \right) \notag \\
    &= \left[ \cos\left(2 \pi + \pi \right)x - \cos\left(2 \pi - \pi \right)x \right] + \left[ \cos\left( 4 \pi + \pi \right)x - \cos\left(4 \pi - \pi\right)x \right] \notag \\
    &\quad \quad + \cdots + \left[ \cos\left( 2 N \pi + \pi \right)x - \cos\left(2 N \pi - \pi\right)x \right] \notag \\
    &= \cos\left( 2 N \pi + \pi \right)x - \cos \pi x.
\end{align}
So we have 
\begin{align}\label{sin-1}
    \left| \left( \sin\left(2 \pi x \right) + \sin\left(4 \pi x\right) + \cdots + \sin\left(2 N \pi x\right) \right) \right| = \frac{\left| \cos\left( 2 N \pi + \pi \right)x - \cos \pi x \right|}{\left| 2 \sin(\pi x) \right|};
\end{align}
Similarly, we have
\begin{align}\label{cos-2}
    \left| \left( \cos\left(2 \pi x \right) + \cos\left(4 \pi x\right) + \cdots + \cos\left(2 N \pi x\right) \right) \right| = \frac{\left| \sin\left( 2 N \pi + \pi \right)x - \sin \pi x \right|}{\left| 2 \sin(\pi x) \right|}.
\end{align}
Note that \eqref{sin-1} and \eqref{cos-2} are uniformly bounded in $[\delta_{0}, 1-\delta_{0}]$ for any $\delta_{0} > 0$ and $N$. In addition, $\{ k ^{-(s + 0.5)} \}$ is monotone and decreases to zero. Use the Dirichlet criterion and we know that the series in \eqref{series of sobolev} is uniformly convergence in $[\delta_{0}, 1-\delta_{0}]$. Due to the arbitrariness of $\delta_{0}$, we know that the series converges and $f^{*}(x)$ is continuous on $(0,1)$.

In Figure \ref{figure appendix sob_allc} (a), we present the results of different choices of $c$ for $\lambda = c n^{-\frac{\beta}{s \beta + 1}}$ in the experiment of Section \ref{section RKHS experiments}. Figure \ref{figure 1} corresponds to the curve $c=0.1$ in Figure \ref{figure appendix sob_allc} (a), which has the smallest generalization error. In Figure \ref{figure appendix sob_allc} (b), we use 5-fold cross validation to choose the regularization parameter in KRR and present the logarithmic errors and sample sizes. Again, we use logarithmic least-squares to compute the convergence rate $r$, which is still approximately equal to $ n^{-\frac{s \beta}{s \beta + 1}} = n^{-\frac{4}{9}}$.

\begin{figure}[htbp]
\vskip 0.05in
\centering
\subfigure[fixed c]{\includegraphics[width=0.45\columnwidth]{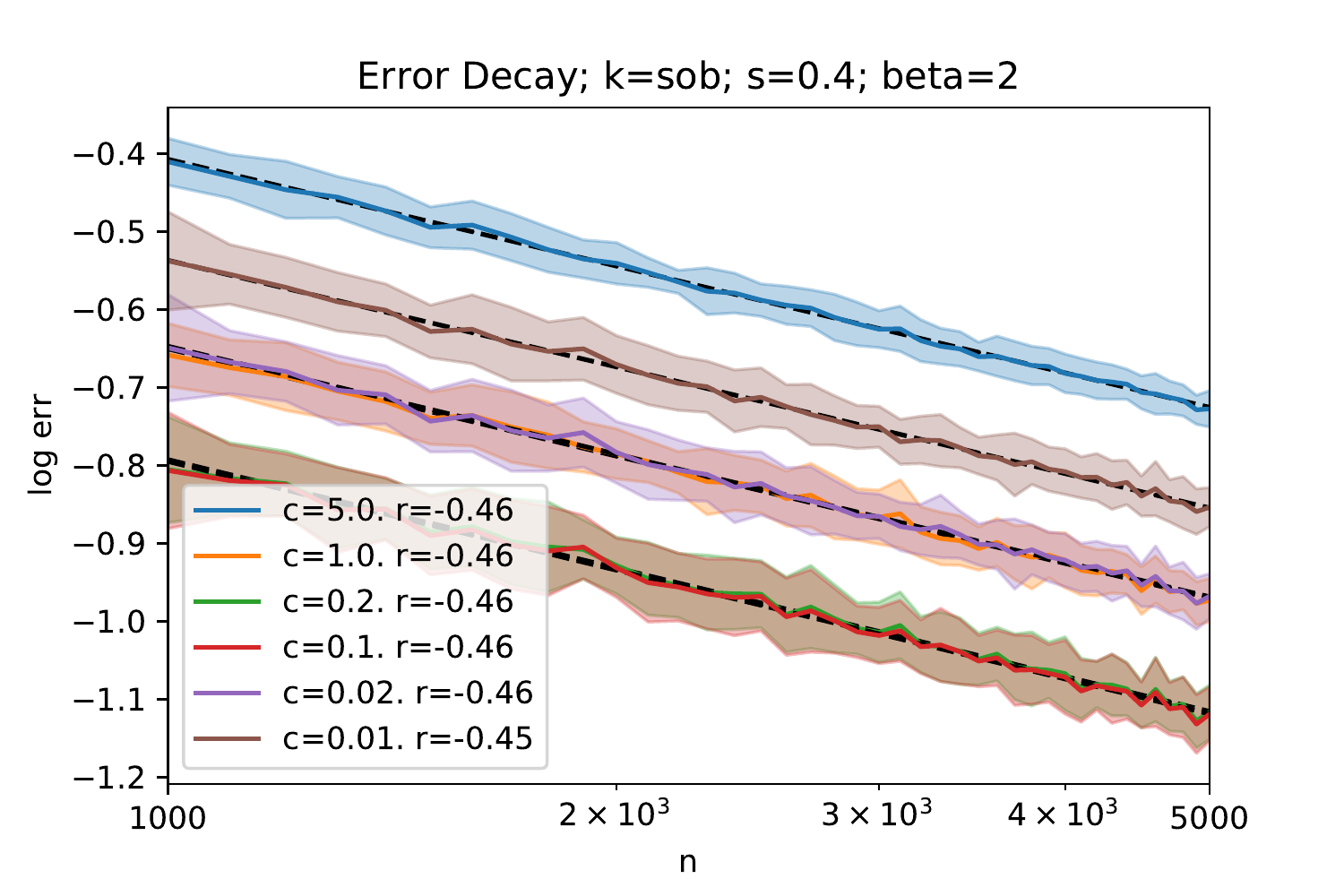}}
\subfigure[cross validation]{\includegraphics[width=0.45\columnwidth]{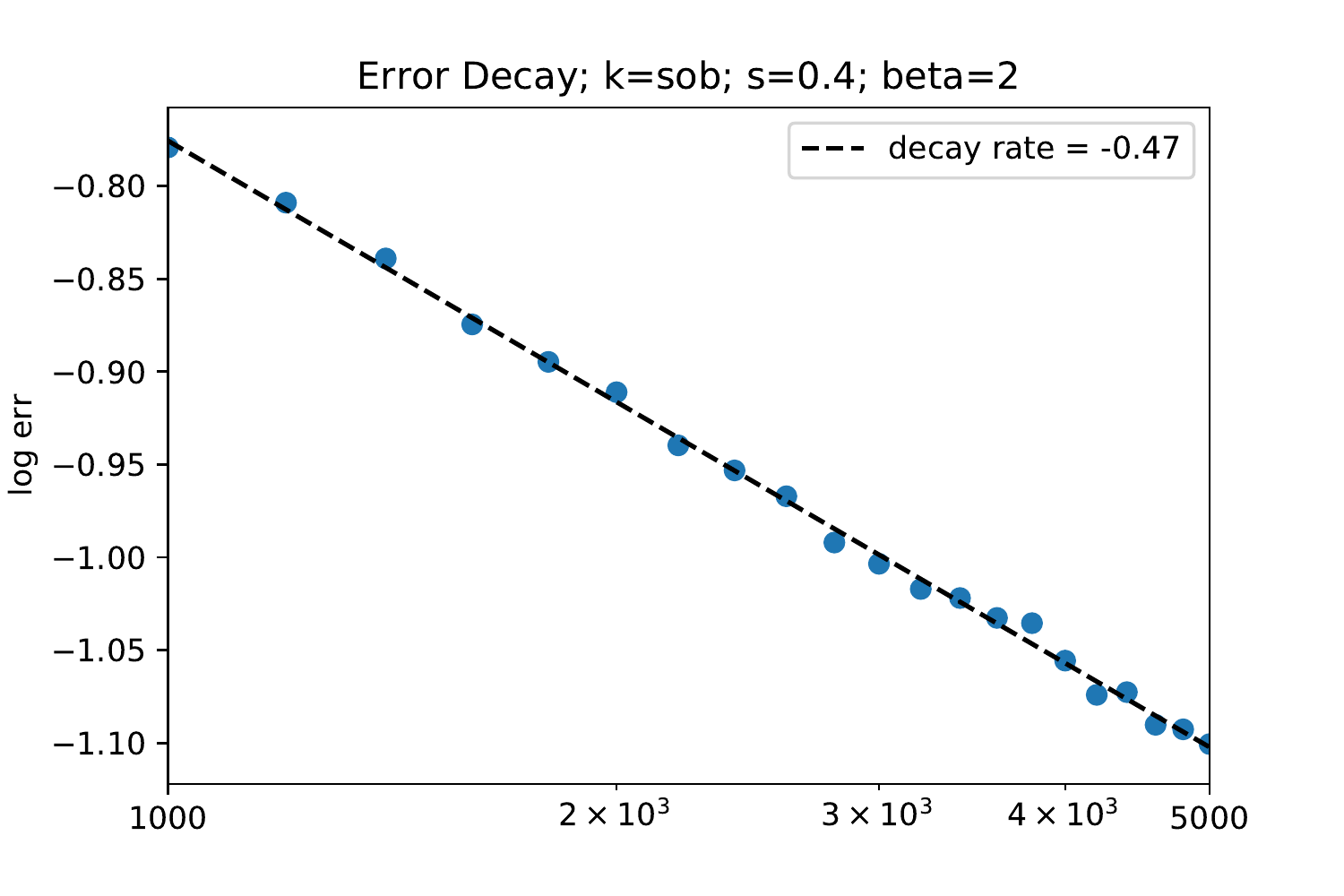}}
\caption{ Error decay curves of Sobolev RKHS. Both axes are logarithmic. (a) The curves show the average generalization errors of different c over 50 trials; and the regions within one standard deviation are shown in the corresponding colors. (b) The scatters show the average generalization errors obtained by 5-fold cross validation over 50 trials. In both (a) and (b), the dashed black lines are computed using logarithmic least-squares, and the slopes represent the convergence rates $r$.}
\label{figure appendix sob_allc}
\vskip 0.05in
\end{figure}


\subsection{Experiments in general RKHS}
First, we prove that the series in \eqref{series of min} converges and $f^{*}(x)$ is continuous on $(0,1)$ for $0 < s < \frac{1}{\beta} = 0.5$.

We begin with the computation of the sum of first $N$ terms of $e_{2k-1}(x) $,
\begin{align}
    &-2 \sin(\pi x) \left( \sin\left(\frac{\pi x}{2}\right) + \sin\left(\frac{5 \pi x}{2}\right) + \cdots + \sin\left(\frac{\left(4N - 3\right)\pi x}{2}\right) \right) \notag \\
    &= \left[ \cos\left( \pi + \frac{\pi}{2}\right)x - \cos\left( \pi - \frac{\pi}{2}\right)x \right] + \left[ \cos\left( \frac{5 \pi}{2} + \pi \right)x - \cos\left(\frac{5 \pi}{2} - \pi\right)x \right] \notag \\
    &\quad \quad + \cdots + \left[ \cos\left( \frac{\left(4N - 3\right) \pi}{2} + \pi \right)x - \cos\left(\frac{\left(4N - 3\right) \pi}{2} - \pi\right)x \right] \notag \\
    &= \cos\left( \frac{\left(4N - 1\right) \pi}{2}\right)x - \cos\frac{\pi}{2}x.
\end{align}
So we have 
\begin{align}
    \left| \sin\left(\frac{\pi x}{2}\right) + \sin\left(\frac{5 \pi x}{2}\right) + \cdots + \sin\left(\frac{\left(4N - 3\right)\pi x}{2}\right) \right| = \frac{\left| \cos\left( \frac{\left(4N - 1\right) \pi}{2}\right)x - \cos\frac{\pi}{2}x \right|}{\left| 2 \sin(\pi x) \right|},
\end{align}
which is uniformly bounded in $[\delta_{0}, 1-\delta_{0}]$ for any $\delta_{0} > 0$ and $N$.

Note that $\{ k ^{-(s + 0.5)} \}$ is monotone and decreases to zero. Use the Dirichlet criterion and we know that the series in \eqref{series of min} is uniformly convergence in $[\delta_{0}, 1-\delta_{0}]$. Due to the arbitrariness of $\delta_{0}$, we know that the series converges and $f^{*}(x)$ is continuous on $(0,1)$.

In Figure \ref{figure appendix min_allc} (a), we present the results of different choices of $c$ for $\lambda = c n^{-\frac{\beta}{s \beta + 1}}$ in the experiment of Section \ref{section RKHS experiments}. Figure \ref{figure 1} corresponds to the curve $c=1.0$ in Figure \ref{figure appendix min_allc} (a), which has the smallest generalization error. In Figure \ref{figure appendix min_allc} (b), we use 5-fold cross validation to choose the regularization parameter in KRR and present the logarithmic errors and sample sizes. Again, we use logarithmic least-squares to compute the convergence rate $r$, which is still approximately equal to $ n^{-\frac{s \beta}{s \beta + 1}} = n^{-\frac{4}{9}}$.
\begin{figure}[htbp]
\vskip 0.1in
\centering
\subfigure[fixed c]{\includegraphics[width=0.45\columnwidth]{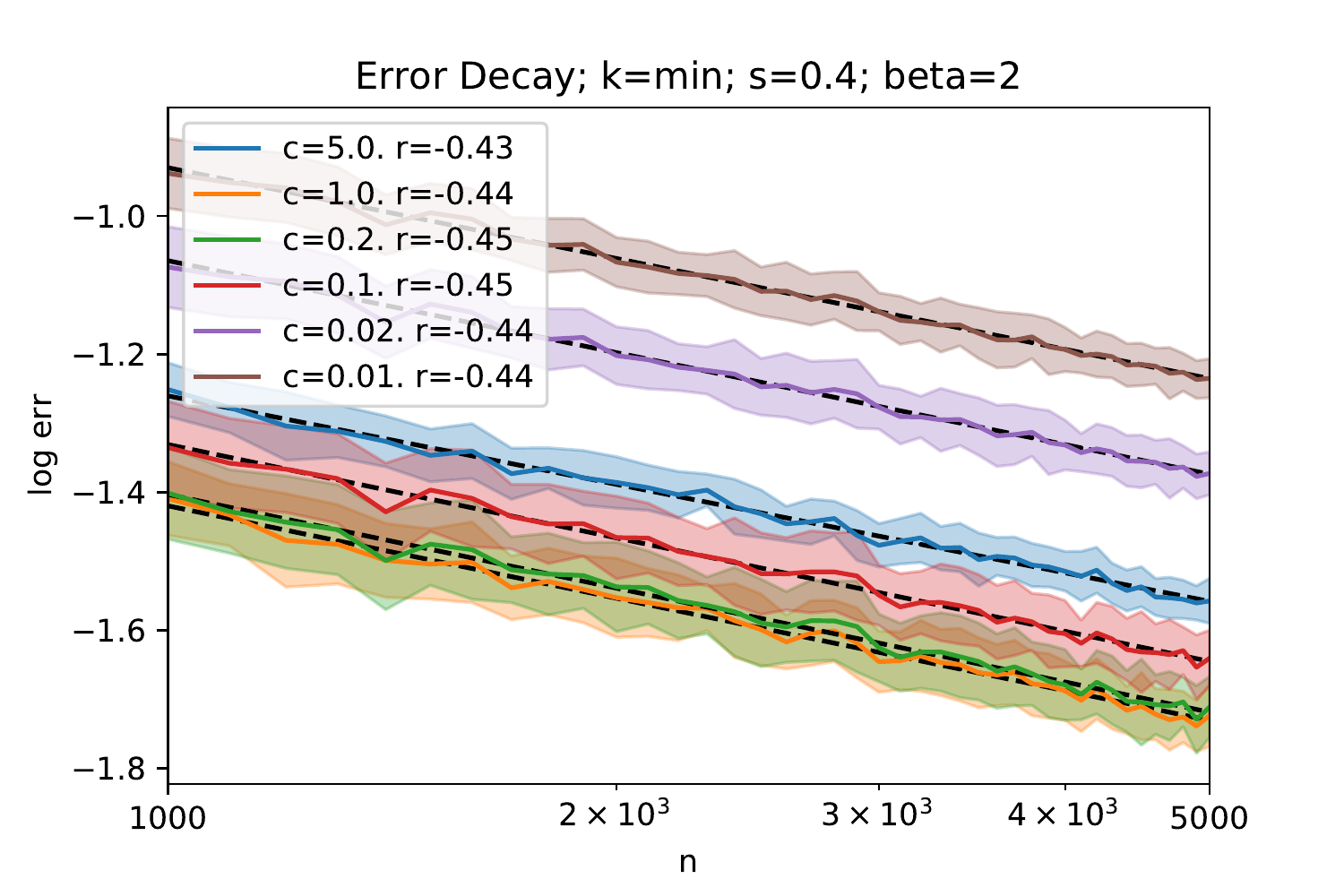}}
\subfigure[cross validation]{\includegraphics[width=0.45\columnwidth]{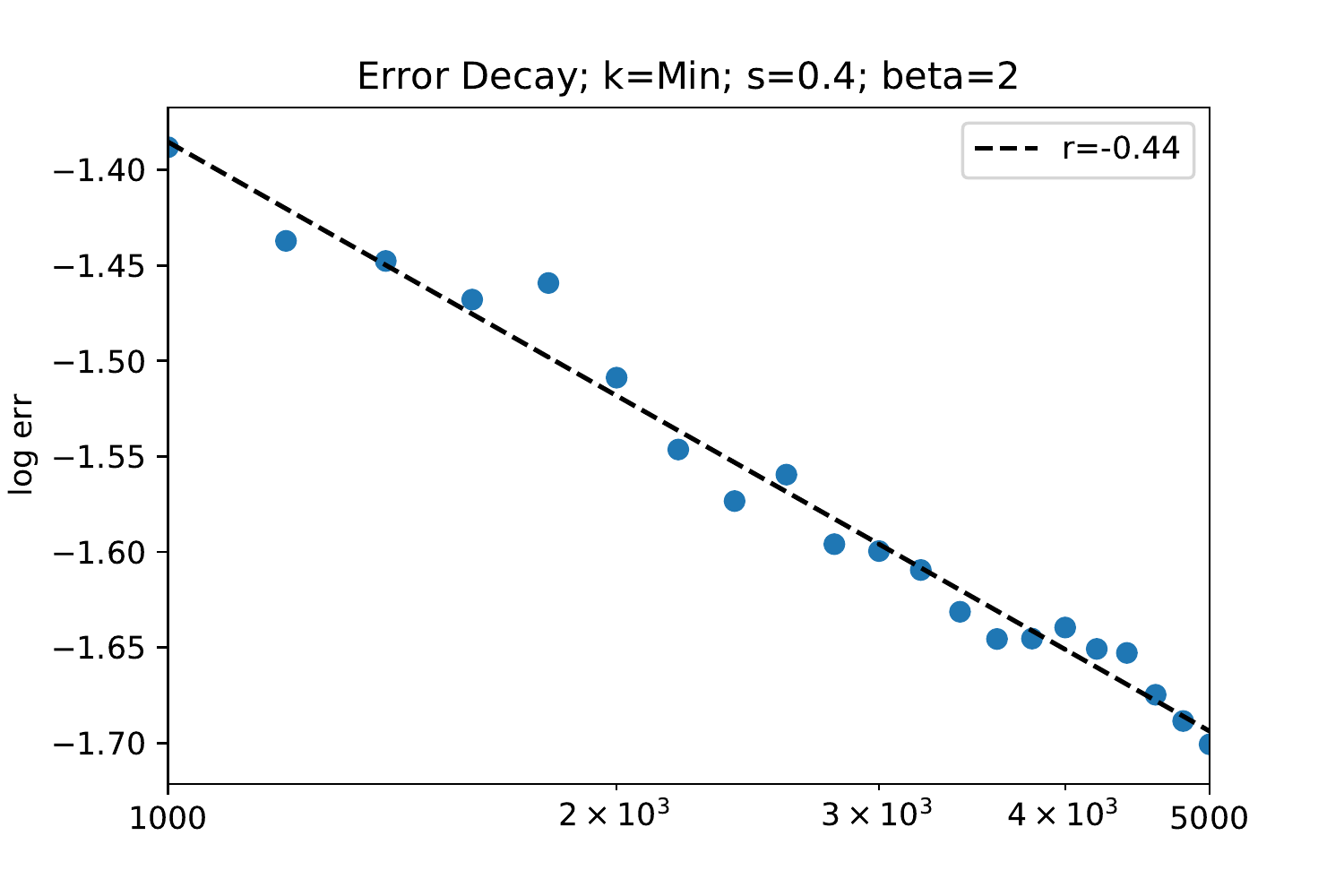}}
\caption{ Error decay curves of general RKHS.}
\label{figure appendix min_allc}
\vskip 0.1in
\end{figure}

\end{document}